\tikzset{
  node style/.style={
    circle, 
    draw=black, 
    line width=1pt, 
    minimum size=15pt, 
    align=center,
    inner sep=0pt,
    text width=19pt
  },
  directed edge/.style={-Latex, thick},
  bidirected edge/.style={Latex-Latex, line width=1.05pt, dashed}
}
\newcommand{\Pa}[2]{\textit{Pa}_{#2}(#1)}
\newcommand{\V}[0]{\mathbf{V}}
\newcommand{\E}[0]{\mathbb{E}}
\newcommand{\N}[0]{\mathbb{N}}
\newcommand{\G}[0]{\mathcal{G}}
\newcommand{\M}[0]{\mathcal{M}}
\newcommand{\F}[0]{\mathcal{F}}
\newcommand{\eps}{\epsilon}
\newtheorem{assumption}{Assumption}
\newtheorem{definition}{Definition}
\newtheorem{remark}{Remark}
\newtheorem{proposition}{Proposition}
\title{ Causal Effect Identification in Heterogeneous Environments from Higher-Order Moments}
\author[1]{\href{mailto:<yaroslav.kivva@epfl.ch>?Subject=Your UAI 2025 paper}{Yaroslav Kivva}{}}
\author[1]{Sina Akbari}
\author[2]{Saber Salehkaleybar}
\author[1]{Negar Kiyavash}
\affil[1]{%
    Ecole Polytechnique Fédérale de Lausanne, Lausanne, Switzerland
}
\affil[2]{%
    Leiden Institute of Advanced Computer Science, Leiden University, Netherlands
}
\begin{document}
\maketitle

\begin{abstract}
We investigate the estimation of the causal effect of a treatment variable on an outcome in the presence of a latent confounder. 
We first show that the causal effect is identifiable under certain conditions when data is available from multiple environments, provided that the target causal effect remains invariant across these environments.
Secondly, we propose a moment-based algorithm for estimating the causal effect as long as only a single parameter of the data-generating mechanism varies across environments --
whether it be the exogenous noise distribution or the causal relationship between two variables.
Conversely, we prove that identifiability is lost if both exogenous noise distributions of both the latent and treatment variables vary across environments.
Finally, we propose a procedure to identify which parameter of the data-generating mechanism has varied across the environments and evaluate the performance of our proposed methods through experiments on synthetic data.

\end{abstract}

\section{Introduction}\label{sec:intro}

Identifying the causal effect of a treatment on an outcome is a fundamental objective in various fields, including economics \citep{card1993minimum,angrist1991does}, social sciences \citep{rosenbaum1983central,imbens2024causal}, epidemiology \citep{robins2000marginal}, and artificial intelligence \citep{pearl2009causality,pearl2014probabilistic}. 
One of the primary challenges in causal effect identification is the presence of latent confounders -- unobserved variables that influence both the treatment and the outcome. Ignoring latent confounders and simply regressing the outcome on the treatment can lead to biased estimates of the causal effect. To address the challenge of latent confounding, one might conduct a randomized controlled trial (RCT). However, RCTs are often too expensive, time-consuming, or even infeasible due to ethical or legal constraints.

In many real-life applications, data collected from different domains often exhibit heterogeneity due to variations in the causal mechanisms that generate each variable from its direct causes. There is extensive research in the literature (see Section \ref{sec:related_work} for the related work) on causality that leverages data from multiple environments to recover causal relationships. In particular, several studies \citep{ghassami2017learning,huang2020causal,jaber2020causal} have shown that data collected from multiple environments can narrow the set of possible causal graphs compatible with the observed data, compared to using data from a single environment.

Research in multi-environment settings follows two main approaches (see Section \ref{sec:related_work} for more details). The first aims to identify an equivalence class of causal structures by leveraging distributional shifts across environments, assuming these arise from unknown interventions. The second approach focuses on identifying the direct causes of a target variable rather than the entire causal graph, often assuming linear causal mechanisms and estimating causal effects corresponding to the direct causes. 

In this paper, we study the problem of identifying the causal effect of a treatment $T$ on an outcome $Y$ within linear structural causal models (SCMs) in a multi-environment setting (see Figure \ref{fig: main graph}), where a latent confounder $U$ between $T$ and $Y$ is present. This problem closely relates to the second approach discussed earlier. However, prior work either does not account for latent confounding between treatment and outcome or imposes restrictive assumptions on which causal mechanisms can vary across environments (see Section \ref{sec:related_work} for more details).

The main contributions of our work are as follows:
\begin{itemize}
    \item For the setup considered in Figure \ref{fig: main graph} with two environments, we show (Theorem \ref{th:unknown factor disparity}) that if there is only a single \textit{unknown} change across two domains, 
    then we can classify whether this change occurs in the causal effect of the latent confounder $U$ on treatment $T$ or outcome $Y$ (i.e. only $\alpha$ or $\gamma$ varies between the two environments), or in the exogenous noises of variables $T$ or $U$. 
    Furthermore, in the case that the varying parameter is either the causal effect of $U$ on $T$ or that of $U$ on $Y$, then the causal effect of the treatment $T$ on $Y$ is identifiable uniquely, and otherwise it can be recovered up to two possible candidates.
    We provide an estimation procedure tailored to each of these cases.

    \item We provide a non-identifiability result (Theorem \ref{th:non-identifiable beta}) showing that the causal effect is not identifiable from two environments if both the exogenous noises of $T$ and $U$ vary across the environments.
    \item We provide extensive experimental results validating our algorithms.
    These results show that our proposed estimators consistently converge to the true value of the treatment effect, whereas linear regression baselines exhibit systematic bias.
    In addition, we analyze the typical range of key parameters in our algorithms. 
    Our code is \href{https://github.com/SinaAkbarii/IdentificationMultipleDomain}{provided online} for reproducibility purposes.
\end{itemize}

\section{Preliminaries}

\subsection{Notation}
Let $G = \langle \V, \mathbf{E} \rangle$ be an \textit{directed acyclic graph} (DAG), such that each vertex represents a random variable and each edge corresponds to the direct causal relationship between the random variables it connects.  A DAG $\G$ with causal relations defines a structural causal model $\M$ (SCM) such that any random variable $X$ in the graph satisfies the structural equation
\begin{equation*}
    X = f_{x}(\Pa{X}{}, \eps_{x}),
\end{equation*}
where $\Pa{X}{}$ denotes parents of $X$ in DAG $\G$, $\eps_x$ is the exogenous noise corresponding to $X$, and  $f_x(\cdot)$ is a function capturing how variable $X$ causally depends on its parents in the causal graph. The subscript in each exogenous noise denotes the variable to which it corresponds, e.g., $\eps_x$ is the noise pertaining to $X$. 
All exogenous noises in a structural equation model are assumed to be jointly independent. 
To indicate that the structural causal model $\M$ corresponds to an environment $i$,  $(i)$ is added as a superscript, i.e. $\M^{(i)}$. 
Similarly, $X^{(i)}$, $\eps^{(i)}$, $f_{x}^{(i)}$,  denote a random variable, an exogenous noise, and a causal relationship in the environment $i$. For ease of presentation, we may omit the superscript $(i)$ if the index of the environment is not important or is clear from the context.

For any variable $X\in \V$, the intervention $do(X=x)$ is an operation that converts SCM $\M$ to a new one where the equation of $X$ in $\M$ is replaced by the constant $x$. Intuitively, this operation can be seen as performing an experiment where one forces a variable $X$ to take a specific value $x$.

\subsection{Identifiability from multi-domain observations}
Let $\V$ be the set of random variables in SCM $\M$ and let observed random variables $T, Y \in \V$ denote the treatment and the outcome variable, respectively. 
In this paper, we focus on linear SCMs, a widely adopted assumption in the literature.  
\begin{assumption}[Linear SCM]
    \label{ass:linear SCM}
    For any random variable $X\in \V$, function $f_{x}(\cdot)$ is a linear function: 
    \begin{equation*}
        X = \sum_{S\in \Pa{X}{}} \alpha_{S, X} S + \eps_x.
    \end{equation*}
\end{assumption}
Note that coefficient $\alpha_{S, X}$ represents the direct causal effect of the variable $S$ on $X$.
The causal effect of the treatment $T$ on the outcome $Y$ is defined as $\E[Y|do(T=1)] - \E[Y|do(T=0)]$. Under Assumption \ref{ass:linear SCM} (Linear SCM), finding this causal effect is equivalent to learning coefficient $\beta:=\alpha_{Y, T}$. 

In this paper, we assume the observational data comes from a collection of linear SCMs $\M^{(1)}, \M^{(2)} \dots, \M^{(n)}$ that satisfy the following assumption. 
\begin{assumption}
    \label{ass:invariant treatment effect}
    The causal effect of the treatment $T$ on the outcome $Y$ is invariant across domains, that is $\beta^{(1)}=\beta^{(2)}=\dots=\beta^{(n)}$.
\end{assumption}
Assumption  \ref{ass:invariant treatment effect} states that the treatment effect remains the same across the domains. For instance,  in example proposed by \cite{shi2021invariant}, one may be interested in the effect of sleeping pills on lung disease using electronic health records collected from multiple hospitals. The causal effect of sleeping pills on lung disease is assumed to remain consistent across different hospitals.

We denote by $\F\left( \M^{(i)}, \M^{(j)}\right)$, the set of the \textbf{non-invariant} coefficients and exogenous noises between two SCMs $\M^{(i)}$ and $\M^{(j)}$. For example, consider two equations $X^{(i)} = \alpha^{(i)}\eps_u^{(i)} + \gamma^{(i)}\eps_{d}^{(i)} + \eps_{x}^{(i)}$ and \mbox{$X^{(j)} = \alpha^{(j)}\eps_u^{(j)} + \gamma^{(j)}\eps_{d}^{(j)} + \eps_x^{(j)}$}, where $\alpha^{(i)} = \alpha^{(j)}$, $\gamma^{(i)} \neq \gamma^{(j)}$, $\eps_u^{(i)} \not \sim \eps_{u}^{(j)}$, $\eps_x^{(i)} \not \sim \eps_{x}^{(j)}$ and $\eps_d^{(i)} \sim \eps_{d}^{(j)}$ (notation $\sim$ means the random variables are drawn from the same distribution).  Then  $\alpha, \eps_d \not \in \F\left( \M^{(i)}, \M^{(j)}\right)$ while \mbox{$\gamma, \eps_u, \eps_x \in \F\left( \M^{(i)}, \M^{(j)}\right)$}.

Additionally, we require all exogenous noises to have finite moments and be ``well-defined'' given the moments.
\begin{assumption}[Finite moments]
    \label{ass:finite moments}
    Given an SCM $\M$ on the set of variables $\V$, for any $X \in \V$ and for any $n\in \N$,   $\E\left[ \eps_x^n \right] < \infty$.
\end{assumption}

\begin{assumption}
    \label{ass:distribution via moments}
    Given an SCM $\M$ on the set of variables $\V$, for any $X \in \V$, there exists some $s>0$ such that the power series $\sum_k\E\left[ \eps_x^{k}\right]r^k / k!$ converges for any $0<r<s$.
\end{assumption}
The last assumption implies that the distribution of the random variable $X$ is \textit{uniquely} determined given its moments.

Next,  we formalize the definition of identifiability of the treatment effect.
\begin{definition} \textbf{(Identifiability)}
    Suppose Assumptions \ref{ass:linear SCM} through \ref{ass:distribution via moments} hold. Moreover, assume that there are $n$ environments, each with a true underlying SCM denoted by $\mathcal{M}^{(i)}$ for environment $i$. The treatment effect $\beta$ is said to be \emph{identifiable} from merely observational distributions of $n$ environments if for any collection of SCMs $\{\tilde{\mathcal{M}}^{(i)}\}_{i=1}^n$ such that $\tilde{\mathcal{M}}^{(i)}$ entails the same observational distribution as $\mathcal{M}^{(i)}$ for every $i\in \{1,\cdots,n\}$,
    then the treatment effect in the collection $\{\tilde{\mathcal{M}}^{(i)}\}_{i=1}^n$ is equal to the one in $\{\mathcal{M}^{(i)}\}_{i=1}^n$; that is, $ \tilde{\beta}=\beta$.
\end{definition}

In this work, we address the problem for the canonical case of two environments (i.e., $n=2$). This characterization suffices as the identifiability results extend  to larger values of $n$ simply by considering the environments in pairs.

\section{Main result}
We consider the problem of estimating the causal effect of treatment $T$ on the outcome $Y$ in 
DAG $\G$ given in Figure~\ref{fig: main graph}. It is well-known that given observational data from a single environment, this causal effect is not identifiable \citep{salehkaleybar2020learning}. We will show that the causal effect can be identified given observational data from two environments under certain mild assumptions.
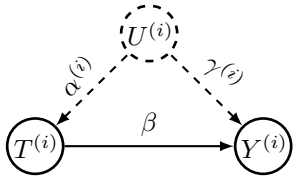
\begin{figure}[h!]
  \begin{center}
    \begin{tikzpicture}[
            roundnode/.style={circle, draw=black!60,, fill=white, thick, inner sep=1pt},
            dashednode/.style = {circle, draw=black!60, dashed, fill=white, thick, inner sep=1pt},
            ]
            \node[node style]        (T)        at (-1.5, 0)                   {$T^{(i)}$};
            \node[node style]        (Y)        at (1.5, 0)                    {$Y^{(i)}$};
            \node[node style, dashed]       (U)        at (0, 1.5)                    {$U^{(i)}$};
            
            \draw[-latex, thick] (T.east) -- (Y.west) node[midway,sloped,above] {$\beta$} ;
            \draw[latex-, dashed, thick] (T) -- (U) node[midway,sloped,above] {$\alpha^{(i)}$};
            \draw[latex-, dashed, thick] (Y) -- (U) node[midway,sloped,above] {$\gamma^{(i)}$};
        \end{tikzpicture}
  \end{center}
  \caption{Causal graph of a linear SCM in the $i$-th domain.}
  \label{fig: main graph}
\end{figure}

More specifically, in each domain $i\in\{1,2\}$, we consider the following linear SCM (with the corresponding causal graph in Figure \ref{fig: main graph}) with a treatment variable $T^{(i)}$, an outcome variable $Y^{(i)}$, and a latent confounder $U^{(i)}$ :
\begin{equation}
\begin{cases}
        & U^{(i)} := \epsilon^{(i)}_u, \\
        & T^{(i)} := \alpha^{(i)}U^{(i)} +\epsilon^{(i)}_{t}, \\
        & Y^{(i)} := \beta T^{(i)} + \gamma^{(i)} U^{(i)} + \epsilon^{(i)}_y,
\end{cases}  
\label{eq:main_SEM}
\end{equation}
where $\epsilon^{(i)}_u$, $\epsilon^{(i)}_t$, $\epsilon^{(i)}_y$ are exogenous noises corresponding to $U^{(i)}$, $T^{(i)}$, and $Y^{(i)}$, respectively. 
In the sequel, we use $\beta$ and `the treatment effect' interchangeably.

In this section, we will show that if $\F\big( \M^{(1)}, \M^{(2)}\big)$ is known and $|\F\big( \M^{(1)}, \M^{(2)}\big)| = 1$, and additionally $\eps_y \not \in \F\big( \M^{(1)}, \M^{(2)}\big)$, then the treatment effect can be uniquely identified under mild non-Gaussianity assumptions. We will propose a procedure to learn $\beta$ for any given 
$\F\big( \M^{(1)}, \M^{(2)}\big)$ satisfying the aforementioned conditions. 
Note that the case $\F\big( \M^{(1)}, \M^{(2)}\big) = \{\eps_y\}$ is not of interest, since intuitively, the change in the distribution of $\eps_y$ does not provide any new information on the treatment mechanism. 
Moreover, it can be shown that the treatment effect $\beta$ is not uniquely identifiable for such a scenario. For completeness, we provide proof of this statement in Proposition \ref{prop:non-id eps_y} in Appendix \ref{apx:proofs}.

In practice, we might only know that $|\mathcal{F}(\mathcal{M}^{(1)}, \mathcal{M}^{(2)})|=1$, without knowing which parameter has changed across environments. 
The following result indicates that even in such a scenario, $\beta$ can be uniquely identified in some cases, and identified up to a finite set in the others.
\begin{restatable}{theorem}{thmunknownfactor}
    \label{th:unknown factor disparity}
    Consider two linear SCMs $\M^{(1)}, \M^{(2)}$ compatible with the graph of Figure \ref{fig: main graph}, such that $|\F\big( \M^{(1)}, \M^{(2)}\big)|=1$. 
    The treatment effect $\beta$ can be uniquely identified if $\F\big( \M^{(1)}, \M^{(2)}\big)\subset \{\alpha, \gamma\}$ under some additional case-specific mild assumptions; 
    otherwise, if $\F\big( \M^{(1)}, \M^{(2)}\big)\subset \{\eps_t, \eps_u\}$, $\beta$ can be identified only up to two possible candidates.
\end{restatable}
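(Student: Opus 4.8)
The plan is to work entirely with joint cumulants of the observed pair $(T^{(i)}, Y^{(i)})$, which is legitimate because Assumptions \ref{ass:finite moments}--\ref{ass:distribution via moments} make matching observational distributions equivalent to matching all joint moments, hence all joint cumulants. Writing the reduced form $T^{(i)} = \alpha_i\eps_u^{(i)} + \eps_t^{(i)}$ and $Y^{(i)} = (\alpha_i\beta+\gamma_i)\eps_u^{(i)} + \beta\eps_t^{(i)} + \eps_y^{(i)}$, and letting $C_{p,q}^{(i)}$ denote the joint cumulant with $p$ copies of $T^{(i)}$ and $q$ of $Y^{(i)}$, multilinearity and noise independence give, for every $p\ge 1$,
\[
C_{p,q}^{(i)} = \alpha_i^{\,p}(\alpha_i\beta+\gamma_i)^{q}\,\kappa_{p+q}\big(\eps_u^{(i)}\big) + \beta^{q}\,\kappa_{p+q}\big(\eps_t^{(i)}\big),
\]
where $\kappa_m(\cdot)$ is the $m$-th cumulant. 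The structural observation driving everything is that $\eps_y$ contributes only when $p=0$, so every cumulant involving at least one $T$ reduces the problem to a two-source mixture whose two ``slopes'' (ratio of $Y$-coefficient to $T$-coefficient of a source) are $\beta$ for $\eps_t$ and $r_i := (\alpha_i\beta+\gamma_i)/\alpha_i$ for $\eps_u$.

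First I would show that, within a single environment, the unordered pair $\{\beta, r_i\}$ is recoverable from higher-order cumulants. Fixing a total order $m$ and reading $C_{m-q,q}^{(i)}$ as $q$ ranges over $0,\dots,m-1$ gives $s_q = \alpha_i^{m} r_i^{\,q}\kappa_m(\eps_u^{(i)}) + \beta^{q}\kappa_m(\eps_t^{(i)})$, a sum of two geometric modes with ratios $r_i$ and $\beta$; a Prony / generalized-eigenvalue argument at $m=4$ recovers $\{\beta, r_i\}$ as the roots of the associated quadratic, together with the mode weights $\alpha_i^m\kappa_m(\eps_u^{(i)})$ and $\kappa_m(\eps_t^{(i)})$. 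This is where the ``case-specific mild assumptions'' enter: non-Gaussianity ($\kappa_m\neq 0$ at the relevant order) makes the modes visible, and genuine confounding ($\gamma_i\neq0$, $\alpha_i\neq0$) forces the two slopes to be distinct, which is what makes the two-mode decomposition well-posed.

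Next I would compare the two environments, using that $\beta$ is invariant (Assumption \ref{ass:invariant treatment effect}), so the recovered pairs are $\{\beta, r_1\}$ and $\{\beta, r_2\}$. If $\F\subset\{\alpha,\gamma\}$ then $r_i=\beta+\gamma_i/\alpha_i$ changes across environments (one checks $r_1\ne r_2$ from $\alpha_1\ne\alpha_2$ or $\gamma_1\ne\gamma_2$ together with $\alpha_i,\gamma_i\neq0$), so the two pairs overlap in exactly one element, which must be $\beta$, giving the claimed unique identification; one further separates $\alpha$ from $\gamma$ by noting that the weight $\alpha_i^m\kappa_m(\eps_u^{(i)})$ attached to the moving slope stays constant when only $\gamma$ varies but moves when $\alpha$ varies. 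If instead $\F\subset\{\eps_t,\eps_u\}$, then $\alpha$ and $\gamma$ are fixed, both slopes are invariant, and the two recovered pairs coincide; the change is visible only through the mode weights, and whichever weight moves identifies the \emph{source} that changed --- but since we cannot tell whether that source is $\eps_t$ (so $\beta$ equals the moving slope) or $\eps_u$ (so $\beta$ equals the fixed slope), exactly the two elements of $\{\beta, r\}$ survive as candidates.

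The crux, and the step I expect to be the main obstacle, is the uniqueness half of the two-source recovery: proving that no \emph{other} pair of slopes and noise distributions reproduces all observed cumulants, so that the only residual ambiguity in the noise case is the two-element one. The ``at most two'' direction should follow from uniqueness of the Prony decomposition under the non-degeneracy assumptions; for ``at least two'' I would exhibit an explicit alternative SCM that interchanges the roles of $\eps_t$ and $\eps_u$ --- for instance $\tilde\alpha=\alpha$, $\tilde a=\alpha\beta$ (hence $\tilde\gamma=-\gamma$), $\kappa_m(\tilde\eps_u^{(i)})=\kappa_m(\eps_t^{(i)})/\alpha^m$ and $\kappa_m(\tilde\eps_t)=\alpha^m\kappa_m(\eps_u)$ --- which matches every joint cumulant of both environments yet assigns $\tilde\beta=\beta+\gamma/\alpha\neq\beta$. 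One must also verify that every alternative SCM considered itself satisfies $|\F|=1$, so that the identifiability comparison is made within the correct model class.
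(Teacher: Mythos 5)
Your route is genuinely different from the paper's. The paper never attempts a within-environment decomposition: it first runs a detection procedure on low-order moments (testing $T^{(1)}\overset{d}{=}T^{(2)}$ together with $Y^{(1)}\overset{d}{\neq}Y^{(2)}$ to isolate $\gamma$, then comparing $\E[TY]/\E[T^2]$ with $\E[Y^2]/\E[TY]$ across environments to isolate $\alpha$), and only then invokes four case-specific estimators (Theorems 3.2--3.5), each built on \emph{cross-environment differences} of moments such as $\E\big[Y^{(1)}(T^{(1)})^{k-1}-Y^{(2)}(T^{(2)})^{k-1}\big]$, in which the invariant noise contributions cancel term by term. Your joint-cumulant/Prony view is more unified and makes the two-candidate structure $\{\beta,\,\beta+\gamma/\alpha\}$ transparent; your swap construction for the lower bound is essentially the paper's (both inherited from the Salehkaleybar et al.\ counterexample), and your remark that the alternative SCM must itself satisfy $|\F\big(\M^{(1)},\M^{(2)}\big)|=1$ is exactly how the paper closes the noise case.

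There is, however, a genuine gap in your central step, and it costs you generality. The single-environment recovery of the unordered pair $\{\beta, r_i\}$ by Prony at a fixed order $m$ requires \emph{both} mode weights $\alpha_i^{m}\kappa_m\big(\eps_u^{(i)}\big)$ and $\kappa_m\big(\eps_t^{(i)}\big)$ to be nonzero at a \emph{common} order $m\ge 3$, together with $r_i\neq\beta$ (i.e.\ $\gamma_i\neq 0$), for the Hankel system to be nonsingular; otherwise one mode is invisible and the quadratic degenerates. No such common $m$ need exist --- for instance, a Gaussian $\eps_u$ annihilates the $\eps_u$ mode in every cumulant of order at least three --- yet the paper still identifies $\beta$ in several of those regimes: Theorem 3.2 ($\F=\{\eps_t\}$) assumes no non-Gaussianity at all, and Theorems 3.3--3.5 each require only one of the two noises to be non-Gaussian, precisely because cross-environment differencing substitutes for a full within-environment decomposition. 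You also leave the uniqueness half of the Prony step (that no other pair of slopes and noise distributions reproduces all observed joint cumulants) as an acknowledged obstacle; this is the load-bearing part of the ``at most two candidates'' claim and still needs the Vandermonde/Hankel argument carried out, including the handling of orders at which one cumulant vanishes. So the architecture is sound and offers a clean alternative lens, but as written it establishes the theorem only under strictly stronger non-degeneracy assumptions than the paper's, with its key uniqueness lemma unproven.
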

The proofs of all our results are given in Appendix \ref{apx:proofs}.\\
Below, we first show that as long as we \textit{know} which single parameter or exogenous noise variable has changed across domains, we can identify $\beta$ uniquely. Specifically, the procedures for identifying $\beta$ when $\F\big( \M^{(1)}, \M^{(2)}\big)=\{\eps_t\}$, $\F\big( \M^{(1)}, \M^{(2)}\big)=\{\eps_u\}$, $\F\big( \M^{(1)}, \M^{(2)}\big)=\{\gamma\}$ and $\F\big( \M^{(1)}, \M^{(2)}\big)=\{\alpha\}$
are given in Sections \ref{sec:case eps_t}, \ref{sec:case eps_u}, \ref{sec:case gamma} and \ref{sec:case alpha}, respectively.
Next, in Section \ref{sec:source_of_change}, we outline the procedure for identifying the varying parameter across the two environments given that $\vert\F\big(\M^{(1)},\M^{(2)}\big)\vert=1$. In particular, we can always distinguish whether the source of change was $\gamma$ or $\alpha$ across the domains. However, if $\F\big( \M^{(1)}, \M^{(2)}\big)\subset \{\eps_u, \eps_t\}$, we cannot pinpoint whether the change was due to variation in the distribution of $\eps_u$, or $\eps_t$. Therefore, we need to apply both procedures in Sections \ref{sec:case eps_t} and \ref{sec:case eps_u}  to recover two candidates for the treatment effect $\beta$.

In Section \ref{sec:non-ID-result}, we prove a non-identifiability result, namely, if the distributions of both exogenous noises $\epsilon_t$ and $\epsilon_u$ vary across the two environments, then $\beta$ is not identifiable.

Note that all the results presented in this work can be easily generalized to settings with observed confounders $\mathbf{X} = \{X_1, \dots, X_m\}$, e.g., Figure \ref{fig: graph extended with covariates}. More specifically, by regressing the treatment and outcome on the observed covariates and working with the residuals, the problem reduces to the case without observed covariates -- a similar procedure was done in \citep{kivva2024cross}.

\begin{figure}[h!]
  \begin{center}
    \begin{tikzpicture}[
            roundnode/.style={circle, draw=black!60,, fill=white, thick, inner sep=1pt},
            dashednode/.style = {circle, draw=black!60, dashed, fill=white, thick, inner sep=1pt},
            ]
            \node[node style]        (T)        at (-1.5, 0)                   {$T^{(i)}$};
            \node[node style]        (Y)        at (1.5, 0)                    {$Y^{(i)}$};
            \node[node style, dashed]       (U)        at (0, 1.5)                    {$U^{(i)}$};
            \node[node style]       (Xj)        at (-2, 1.5)                    {$X^{(i)}_j$};
            \node[node style]       (Xl)        at (2, 1.5)                    {$X^{(i)}_l$};
            
            \draw[-latex, thick] (Xj) -- (T) ;
            \draw[-latex, thick] (Xj) -- (Y) ;
            \draw[-latex, thick] (Xl) -- (T) ;
            \draw[-latex, thick] (Xl) -- (Y) ;
            \draw[-latex, thick] (T) -- (Y) ;
            \draw[latex-, dashed, thick] (T) -- (U) ;
            \draw[latex-, dashed, thick] (Y) -- (U);
        \end{tikzpicture}
  \end{center}
  \caption{Causal graph with observed covariates $\mathbf{X}$.}
  \label{fig: graph extended with covariates}
\end{figure}
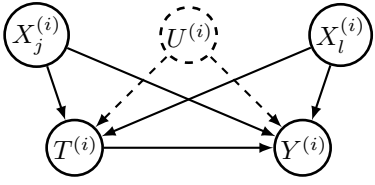

\subsection{The case \texorpdfstring{$\F\big( \M^{(1)}, \M^{(2)}\big) = \{\eps_t\}$}{F(M1,M2)=epsilon t}}
\label{sec:case eps_t}
Here we consider the case where the distribution of $\eps_t$ is changing across the two environments. 
The corresponding SCMs for environments $1$ and $2$ can be simplified to\footnote{Please note that while the distributions of $\eps_u$ and $\eps_y$ remain unchanged across the domains, the realizations of $\eps_u$ and $\eps_y$ differ between the two domains.
}
\begin{equation}
\begin{cases}
        & U^{(i)} := \epsilon_u, \\
        & T^{(i)} := \alpha U^{(i)} +\epsilon^{(i)}_{t}, \\
        & Y^{(i)} := \beta T^{(i)} + \gamma U^{(i)} + \epsilon_y.
\end{cases}  
\label{eq:case 1}
\end{equation}

\begin{restatable}{theorem}{thmcaseone}
    \label{th:case 1}
    Suppose $\M^{(1)}, \M^{(2)}$ are linear SCMs compatible with the DAG of Figure \ref{fig: main graph}, such that
    $\F\big( \M^{(1)}, \M^{(2)}\big)=\{\eps_t\}$. 
    Then under Assumptions \ref{ass:linear SCM}-\ref{ass:distribution via moments}, the treatment effect $\beta$ can be recovered uniquely.
\end{restatable}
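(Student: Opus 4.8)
The plan is to exploit the fact that, since only $\eps_t$ varies across the two environments, differencing matched moments (equivalently, cumulants) across environments cancels every contribution of the invariant quantities $\alpha,\gamma,\beta$ and of the invariant noises $\eps_u,\eps_y$, isolating the contribution of $\eps_t$, which then appears multiplied by a clean power of $\beta$.

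First I would substitute the structural equations to express the observables purely in terms of the mutually independent exogenous noises:
\begin{equation*}
T^{(i)} = \alpha\,\eps_u + \eps_t^{(i)}, \qquad Y^{(i)} = (\alpha\beta+\gamma)\,\eps_u + \beta\,\eps_t^{(i)} + \eps_y .
\end{equation*}
Because the three noises are jointly independent, I would invoke the multilinearity and independence-additivity of joint cumulants: for any $j,m\geq 0$ with $\ell := j+m \geq 1$, the joint cumulant of $j$ copies of $T^{(i)}$ and $m$ copies of $Y^{(i)}$ splits over the three sources, yielding a term $\alpha^j(\alpha\beta+\gamma)^m\,\kappa_\ell(\eps_u)$ from $\eps_u$, a term $\beta^m\,\kappa_\ell(\eps_t^{(i)})$ from $\eps_t$, and a term from $\eps_y$ that survives only when $j=0$ (since $\eps_y$ has coefficient $0$ in $T$).

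Differencing across environments is the heart of the argument. As $\alpha,\gamma,\beta$ and the distributions of $\eps_u,\eps_y$ are invariant, every $\eps_u$ and $\eps_y$ term is identical in both environments and cancels, leaving
\begin{equation*}
\Delta\,\mathrm{cum}\big(\underbrace{T,\dots,T}_{j},\underbrace{Y,\dots,Y}_{m}\big) = \beta^m\,\Delta\kappa_\ell(\eps_t),
\end{equation*}
where $\Delta$ denotes the difference of the quantity between environments $1$ and $2$, and $\Delta\kappa_\ell(\eps_t) = \kappa_\ell(\eps_t^{(1)}) - \kappa_\ell(\eps_t^{(2)})$. Because $\eps_t\in\F(\M^{(1)},\M^{(2)})$, its distribution genuinely differs across environments; by Assumption \ref{ass:distribution via moments} a distribution is determined by its moments, so the moment sequences (equivalently, the cumulant sequences) of $\eps_t^{(1)}$ and $\eps_t^{(2)}$ cannot coincide, and there exists an order $\ell\geq 1$ with $\Delta\kappa_\ell(\eps_t)\neq 0$. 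Fixing this $\ell$ and reading off the configurations $(j,m)=(\ell,0)$ and $(j,m)=(\ell-1,1)$, I would conclude
\begin{equation*}
\beta = \frac{\Delta\,\mathrm{cum}\big(\underbrace{T,\dots,T}_{\ell-1},\,Y\big)}{\Delta\kappa_\ell(T)},
\end{equation*}
since the denominator equals $\Delta\kappa_\ell(\eps_t)\neq 0$, which identifies $\beta$ uniquely from the two observational distributions (the case $\ell=2$ reduces to the ratio $\Delta\Cov{T}{Y}/\Delta\Var{T}$).

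The step I expect to require the most care is the existence claim: guaranteeing that some finite-order cumulant difference is nonzero — and hence that a valid, nonvanishing denominator exists — relies on translating ``different distributions'' into ``some moment differs'' via the moment-determinacy Assumption \ref{ass:distribution via moments}, together with the elementary but necessary fact that equality of all moments is equivalent to equality of all cumulants (the two sequences being related by a triangular, invertible change of variables). Everything else, namely the cumulant bookkeeping and the cross-environment cancellation, is routine multilinear algebra once the decomposition above is set up.
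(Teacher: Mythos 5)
Your proposal is correct, and it arrives at essentially the same estimator as the paper but by a genuinely different technical route. The paper works directly with raw moments: it picks the \emph{smallest} $k$ with $\E[(\eps_t^{(1)})^k]\neq\E[(\eps_t^{(2)})^k]$ (whose existence it also derives from Assumption \ref{ass:distribution via moments}), expands $\E[(T^{(i)})^k]$ and $\E[Y^{(i)}(T^{(i)})^{k-1}]$ binomially over the noise decomposition, and uses the minimality of $k$ together with the invariance of $\eps_u$ to kill every cross term in the difference, leaving $\E[(T^{(1)})^k-(T^{(2)})^k]=\Delta\E[\eps_t^k]$ and $\E[Y^{(1)}(T^{(1)})^{k-1}-Y^{(2)}(T^{(2)})^{k-1}]=\beta\,\Delta\E[\eps_t^k]$, whence the ratio formula of Algorithm \ref{alg:case 1}. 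You instead pass to joint cumulants, where independence-additivity and multilinearity make the decomposition $\mathrm{cum}(T,\dots,T,Y,\dots,Y)=\alpha^j(\alpha\beta+\gamma)^m\kappa_\ell(\eps_u)+\beta^m\kappa_\ell(\eps_t^{(i)})+[j=0]\,\kappa_\ell(\eps_y)$ immediate, so the cross-environment cancellation needs no minimality of the order $\ell$ — any $\ell$ with $\Delta\kappa_\ell(\eps_t)\neq 0$ works, and $\Delta\kappa_\ell(T)=\Delta\kappa_\ell(\eps_t)$ is directly observable. What your route buys is cleaner bookkeeping and a slightly more flexible choice of order; what the paper's route buys is that it stays entirely at the level of raw moments (matching the sample statistics actually computed in Algorithm \ref{alg:case 1}) and avoids invoking cumulant machinery. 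When $\ell$ is taken minimal the two estimators coincide, since all lower-order joint moments of $(T,Y)$ agree across environments and the cumulant differences reduce to the corresponding moment differences. Your handling of the existence step — different distributions imply a differing moment by moment-determinacy, hence a differing cumulant by the triangular moment--cumulant correspondence — is exactly the point the paper also leans on Assumption \ref{ass:distribution via moments} for, and is sound.
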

The proof of the above result can be found in Appendix \ref{apx:proofs}. Algorithm \ref{alg:case 1} follows from the proof and outlines the procedure for estimating the treatment effect $\beta$ in this case.
\begin{algorithm}[ht]
    \caption{$\F\big( \M^{(1)}, \M^{(2)}\big)=\{\eps_t\}$}
    \label{alg:case 1}
    \textbf{Input:} $\{T^{(i)}, Y^{(i)}\}$ and $\F\big(\M^{(1)}, \M^{(2)}\big) \!=\! \{\eps_t\}$ 
    
    \begin{algorithmic}[1]
    \STATE $k \gets 1$
    \WHILE{$\E\big[\left(T^{(1)}\right)^k\big]= \E\big[\big(T^{(2)}\big)^k\big]$}
        \STATE $k \gets k+1$
    \ENDWHILE
    \STATE $\beta \gets \frac{\E\big[Y^{(1)}(T^{(1)})^{k-1}- Y^{(2)}(T^{(2)})^{k-1}\big]}{\E\left[(T^{(1)})^k - (T^{(2)})^k\right]}$
    \STATE \textbf{RETURN:} $\beta$
    \end{algorithmic}
\end{algorithm}

Here, at the end of \textbf{while} loop, we find the smallest $k$ such that 
\begin{equation*}
    \E\big[\big(\epsilon^{(1)}_{t}\big)^k\big]\neq \E\big[\big(\epsilon^{(2)}_{t}\big)^k\big], 
\end{equation*}
which is required to estimate $\beta$ with the formula given in line $4$ of Algorithm \ref{alg:case 1}. 
\begin{remark}
    Note that under Assumption \ref{ass:distribution via moments}, both distributions $\eps_t^{(1)}$ and $\eps_t^{(2)}$ are uniquely defined given all the moments. 
    Since these distributions are different, they differ at least in one of their moments, which guarantees that such a $k$ exists. 
\end{remark}

\subsection{The case \texorpdfstring{$\F\big( \M^{(1)}, \M^{(2)}\big) = \{\eps_u\}$}{F(M1,M2)=epsilon u}}
\label{sec:case eps_u}
In this section, we assume that only the distribution of $\eps_u$ is changing across the two environments.
The SCM equations can be reduced similarly to Equation \ref{eq:case 1}, as in the previous case.
However, the assumptions for the identifiability of the treatment effect $\beta$ are slightly stronger.

\begin{restatable}{theorem}{thmcasetwo}
    \label{th:case 2}
    Suppose $\M^{(1)}, \M^{(2)}$ are linear SCMs compatible with the DAG of Figure \ref{fig: main graph}, such that
    $\F\big( \M^{(1)}, \M^{(2)}\big)=\{\eps_u\}$. 
    Suppose that $\exists n\in \N$ such that $\E[\eps_t^n] \neq (n-1)\E\left[ \eps_t^{n-2}\right]\E\left[ \eps_t^{2}\right]$. Then under Assumptions \ref{ass:linear SCM}-\ref{ass:distribution via moments}, the treatment effect $\beta$ can be recovered uniquely.
\end{restatable}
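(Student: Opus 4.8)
The plan is to first reduce the model and isolate the precise obstacle, then remove it with higher-order information. Centering all noises (which leaves $\beta$ unchanged, since it is a regression coefficient) and substituting $U^{(i)} = \eps_u^{(i)}$, I would rewrite the reduced forms as $T = \alpha\eps_u + \eps_t$ and $Y = c\,\eps_u + \beta\eps_t + \eps_y$, where $c := \beta\alpha + \gamma$ and, crucially, only the distribution of $\eps_u$ differs between the two environments. The difficulty this case poses -- absent in the $\F=\{\eps_t\}$ case -- is that the varying factor $\eps_u$ is exactly the confounder, so it enters both $T$ and $Y$. Consequently, every cross-environment comparison sees $\eps_u$ only through the two fixed loadings $\alpha$ and $c$, and will therefore reveal the combination $c/\alpha = \beta + \gamma/\alpha$ rather than $\beta$ itself.

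Concretely, writing $\Delta$ for the difference of a moment across the two environments, the invariance of $\eps_t,\eps_y$ gives $\Delta\E[T^2] = \alpha^2\,\Delta\E[\eps_u^2]$ and $\Delta\E[TY] = \alpha c\,\Delta\E[\eps_u^2]$, so that $\Delta\E[TY]/\Delta\E[T^2] = c/\alpha =: q$ whenever $\Var{\eps_u}$ changes (and otherwise the same ratio is read off at the first order at which a moment of $\eps_u$ changes, which exists because $\eps_u^{(1)}\not\sim\eps_u^{(2)}$). This recovers $q = \beta + \gamma/\alpha$ but, as anticipated, not $\beta$.

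To separate $\beta$ from $\gamma/\alpha$ I would pass to joint cumulants, which are multilinear and additive over the independent noises. For every order with $a\ge 1$ one obtains $\kappa_{a,b}(T,Y) = \alpha^a c^b\,\kappa_{a+b}(\eps_u) + \beta^b\,\kappa_{a+b}(\eps_t)$, the $\eps_y$ contribution dropping out because its $T$-loading is zero. Fixing a total order $m$ and using $c = q\alpha$, the sequence $x_b := \kappa_{m-b,\,b}(T,Y)$ for $b = 0,\dots,m-1$ takes the form $x_b = P\,q^b + Q\,\beta^b$ with $P := \alpha^m\kappa_m(\eps_u)$ and $Q := \kappa_m(\eps_t)$. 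Since $q$ is already known, this is a sum of two geometric sequences with one known ratio; eliminating $P$ via the two-step recurrence yields $\beta = (x_2 - q x_1)/(x_1 - q x_0)$, because $x_1 - q x_0 = Q(\beta - q)$ and $x_2 - q x_1 = Q\beta(\beta - q)$. This is valid as soon as $Q = \kappa_m(\eps_t)\ne 0$ for some $m\ge 3$ and $\gamma\ne0$ (the genuinely confounded case; if $\gamma=0$ then $U$ does not point into $Y$ and $\beta$ is recovered by plain regression).

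The last ingredient is exactly the extra hypothesis. The stated condition -- that $\E[\eps_t^n]\ne (n-1)\E[\eps_t^{n-2}]\E[\eps_t^2]$ for some $n$ -- is the failure of the Gaussian moment recursion, hence, by Assumption \ref{ass:distribution via moments} tying the distribution to its moments, is equivalent to $\eps_t$ being \emph{non-Gaussian}, i.e.\ to $\kappa_m(\eps_t)\ne0$ for some $m\ge3$. This guarantees that a usable order $m$ exists in the previous step, making $Q\ne0$ and the recurrence non-degenerate. I expect the main obstacle to be precisely this disentangling step: because the changing factor $\eps_u$ is entangled with the invariant factor $\eps_t$ inside every single-environment moment, neither can be read off directly, and it is only the shape of the cumulant identity -- two geometric terms whose $\eps_u$-ratio $q$ is already pinned down by the cross-environment differences -- that lets the non-Gaussian signature of $\eps_t$ isolate $\beta$. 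Turning these equalities into a finite, explicitly terminating estimator (an analogue of Algorithm \ref{alg:case 1} that scans $m$ until $\kappa_m(\eps_t)\ne0$) is then routine.
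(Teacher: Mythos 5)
Your proposal is correct, and its first half coincides exactly with the paper's proof: after normalizing, both arguments locate the smallest order $k$ at which a moment of $\eps_u$ differs across environments and show that the cross-environment difference ratio $\E\big[Y^{(1)}(T^{(1)})^{k-1}-Y^{(2)}(T^{(2)})^{k-1}\big]\big/\E\big[(T^{(1)})^{k}-(T^{(2)})^{k}\big]$ collapses to $\beta+\gamma/\alpha$ (your $q$), because only the $j=k$ term of the binomial expansion survives. The two proofs diverge in how they peel off $\gamma/\alpha$. The paper forms the residual $qT^{(i)}-Y^{(i)}=\gamma\eps_t-\eps_y$, observes that it shares the common factor $\eps_t$ with $T^{(i)}=\eps_u+\eps_t$, and delegates the recovery of the ratio of $\eps_t$-loadings to the external $\textit{GetRatio}$ routine of \citet{kivva2024cross}, whose validity condition is precisely the stated non-Gaussian moment condition on $\eps_t$. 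You instead carry out this step by hand inside a single environment: the joint-cumulant identity $\kappa_{m-b,b}(T,Y)=\alpha^{m}q^{b}\kappa_m(\eps_u)+\beta^{b}\kappa_m(\eps_t)$ (valid for $m-b\ge 1$, the $\eps_y$ term vanishing), combined with the already-known ratio $q$, yields $x_1-qx_0=\kappa_m(\eps_t)(\beta-q)$ and $x_2-qx_1=\kappa_m(\eps_t)\beta(\beta-q)$, so $\beta$ is the quotient whenever $\kappa_m(\eps_t)\neq 0$ for some $m\ge 3$ and $\gamma\neq 0$ -- and both degeneracies are correctly handled (non-Gaussianity of $\eps_t$ supplies such an $m$; if $x_1-qx_0=0$ for all $m$ then $\gamma=0$ and $\beta=q$). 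The net effect is a self-contained proof that does not lean on the cited black box, at the cost of re-deriving what is essentially the cross-moment lemma; the paper's version is shorter but inherits its key step from prior work. Both share the same implicit assumptions ($\alpha\neq 0$ for the rescaling, centered noises), so neither is more general than the other.
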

In the statement of the theorem above, we have an additional restriction on the exogenous noise of the treatment, which under Assumption \ref{ass:distribution via moments}, is equivalent to $\eps_t$ not being Gaussian. 
Based on the proof of the theorem, we provide Algorithm \ref{alg:case 2} for estimating the treatment effect $\beta$ in this case.

\begin{algorithm}[ht]
    \caption{$\F\big( \M^{(1)}, \M^{(2)}\big)=\{\eps_u\}$}
    \label{alg:case 2}
    \textbf{Input:} $\{T^{(i)}, Y^{(i)}\}$ and $\F\big( \M^{(1)}, \M^{(2)}\big) = \{\eps_u\}$ 
    
    \begin{algorithmic}[1]
    \STATE $k \gets 1$
    \WHILE{$\E\big[\big(T^{(1)}\big)^k\big]= \E\big[\big(T^{(2)}\big)^k\big]$}
        \STATE $k \gets k+1$
    \ENDWHILE
    \STATE $r_1 \gets \frac{\E\left[Y^{(1)}(T^{(1)})^{k-1}- Y^{(2)}(T^{(2)})^{k-1}\right]}{\E\left[(T^{(1)})^k - (T^{(2)})^k\right]}$
    \STATE $r_2 \gets \textit{GetRatio}\left(r_1T^{(1)} - Y^{(1)}, T^{(1)}\right)$
    \STATE $\beta \gets r_1 - r_2$
    \STATE \textbf{RETURN:} $\beta$
    \end{algorithmic}
\end{algorithm}

Similarly to the previous case, Algorithm \ref{alg:case 2} identifies through the \textbf{while} loop the smallest $k$ for which
\begin{equation*}
    \E\big[\big(\epsilon^{(1)}_{u}\big)^k\big]\neq \E\big[\big(\epsilon^{(2)}_{u}\big)^k\big].
\end{equation*}
By knowing $k$ then in line 4 we compute $\beta + \frac{\gamma}{\alpha}$ and denote it by $r_1$.
The algorithm then obtains the value of $\frac{\gamma}{\alpha}$, denoted as  $r_2$, using the function $\textit{GetRatio}(\cdot)$, which was proposed by \cite{kivva2024cross}. We recover $\beta$ by subtracting $r_2$ from $r_1$.
Below we explain the workings of function $\textit{GetRatio}(\cdot)$.
Suppose we observe two random variables $X_1$ and $X_2$ that can be represented as
\begin{align*}
    & X_1 = a \eps + \eps_1, \\
    & X_2 = b \eps + \eps_2,
\end{align*}
where $\eps_1$, $\eps_2$ and $\eps$ are mutually independent. 
Then $\textit{GetRatio}(X_1, X_2)$ computes the ratio $a / b$ under the same assumption on the distribution of $\eps$ as in  Theorem \ref{th:case 2} imposes on the distribution of $\eps_t$. 
Moreover, it is important to emphasize that algorithm $\textit{GetRatio}(\cdot)$ does not require the knowledge of the constant $n$; it only requires the existence of such $n$.
For further details, see \citet{kivva2024cross}.

\subsection{The case \texorpdfstring{$\F\big( \M^{(1)}, \M^{(2)}\big) = \{\gamma\}$}{F(M1,M2)=gamma}}
\label{sec:case gamma}
Here, we assume that the causal effect of the latent confounder on the treatment varies across environments. 
For simplicity of notation, specifically for this setting, we rescale the exogenous noise $\eps_u$ together with $\alpha$ and $\gamma^{(i)}$ in the SCMs $\M^{(1)}$ and $\M^{(2)}$ so that $\eps_u \gets \alpha\eps_u$, $\alpha \gets 1$, $\gamma^{(i)} \gets \gamma^{(i)}/ \alpha$. Here we used that $\alpha$ and $\eps_u$ are invariant across environments, and therefore the corresponding SCM after rescaling will take the following form:
\begin{equation}
\begin{cases}
        & U^{(i)} := \epsilon_u, \\
        & T^{(i)} := U^{(i)} +\epsilon_{t}, \\
        & Y^{(i)} := \beta T^{(i)} + \gamma^{(i)} U^{(i)} + \epsilon_y,
\end{cases}  
\label{eq:case 3}
\end{equation}
where $\eps_u$ and $\gamma^{(i)}$ are rescaled and the distributions of all exogenous noises are the same across environments, but not there realizations.

\begin{restatable}{theorem}{thmcasethree}
    \label{th:case 3}
    Suppose $\M^{(1)}, \M^{(2)}$ are linear SCMs compatible with the DAG of Figure \ref{fig: main graph}, such that
    $\F\big( \M^{(1)}, \M^{(2)}\big)=\{\gamma\}$.
    Suppose $\exists n \in \N$ such that $\E\left[ \eps_t^n \right] \neq (n-1) \E\left[\eps_t^{n-2}\right] \E\left[ \eps_t^2\right]$.
    Then under Assumptions \ref{ass:linear SCM}-\ref{ass:distribution via moments}, the treatment effect $\beta$ can be recovered uniquely.
\end{restatable}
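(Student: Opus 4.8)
The plan is to reduce this case, as in Section~\ref{sec:case eps_u}, to first identifying the coefficient $a_1:=\beta+\gamma^{(1)}$ with which $\eps_u$ enters $Y^{(1)}$, and then stripping off the confounding contribution with the $\textit{GetRatio}$ routine. I would first center all exogenous noises (which does not change $\beta$) and rewrite the rescaled model \eqref{eq:case 3} as $T^{(i)}=\eps_u+\eps_t$ and $Y^{(i)}=a_i\eps_u+\beta\eps_t+\eps_y$ with $a_i=\beta+\gamma^{(i)}$. The feature distinguishing this case from Sections~\ref{sec:case eps_t}--\ref{sec:case eps_u} is that $T^{(1)}$ and $T^{(2)}$ are now identically distributed, so every moment of $T$ coincides across the two environments and the device of locating the first order at which the law of $T$ changes is unavailable; all cross-environment information must instead be extracted from how $Y$ couples to powers of $T$.

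The computational backbone is the identity, valid for all $k$ under centering and $\eps_y\indep T$,
\begin{equation*}
    E^{(i)}_k:=\E\!\left[Y^{(i)}(T^{(i)})^{k-1}\right]=\beta S_k+\gamma^{(i)}M^{(k)}_u,\qquad S_k:=\E[(T^{(i)})^k],\quad M^{(k)}_u:=\E[\eps_u (T^{(i)})^{k-1}],
\end{equation*}
in which $S_k$ and $M^{(k)}_u$ are the same in both environments. Differencing the environments gives $D_k:=E^{(1)}_k-E^{(2)}_k=(\gamma^{(1)}-\gamma^{(2)})M^{(k)}_u$, isolating the confounder moments up to the single scalar $\gamma^{(1)}-\gamma^{(2)}$. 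Eliminating $M^{(k)}_u$ across two orders $k,l$ then yields the closed form
\begin{equation*}
    \beta=\frac{E^{(1)}_k D_l-E^{(1)}_l D_k}{S_k D_l-S_l D_k},
\end{equation*}
which recovers $\beta$ directly whenever the denominator is nonzero, that is, whenever the ratio $M^{(k)}_u/S_k$ is not constant in $k$.

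The hard part is the degenerate configuration in which $M^{(k)}_u/S_k$ is constant in $k$. By moment-determinacy (Assumption~\ref{ass:distribution via moments}) this is equivalent to $\E[\eps_u\mid T]$ being linear in $T$, which forces the cumulant sequences of $\eps_u$ and $\eps_t$ to be proportional, $\kappa^u_m=c\,\kappa^t_m$ for all $m$. This is where the hypothesis on $\eps_t$ (equivalently, that $\eps_t$ is non-Gaussian) is used: proportionality together with non-Gaussian $\eps_t$ forces $c>0$ and $\eps_u$ itself non-Gaussian, so by Marcinkiewicz's theorem $\kappa^u_m\neq0$ at infinitely many orders. At such an order $m\ge 4$ the cross-environment cumulant differences
\begin{equation*}
    \mathrm{cum}\big((T^{(1)})^{p},(Y^{(1)})^{q}\big)-\mathrm{cum}\big((T^{(2)})^{p},(Y^{(2)})^{q}\big)=\big(a_1^{q}-a_2^{q}\big)\kappa^u_{p+q},\qquad p\ge 1,
\end{equation*}
are nonzero, and reading them off for $q=1,2,3$ recovers $a_1+a_2$ and $a_1a_2$, hence the set $\{a_1,a_2\}$ and, via the sign of the $q=1$ difference, $a_1$ itself. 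With $a_1$ known, the residual $a_1T^{(1)}-Y^{(1)}=\gamma^{(1)}\eps_t-\eps_y$ shares with $T^{(1)}=\eps_u+\eps_t$ exactly one source, the non-Gaussian $\eps_t$, so $\textit{GetRatio}(a_1T^{(1)}-Y^{(1)},T^{(1)})$ returns $\gamma^{(1)}$ and $\beta=a_1-\gamma^{(1)}$. I expect the main obstacle to be exactly this dichotomy: proving that the single assumption on $\eps_t$ always makes at least one of the two routes succeed, since the moment formula can genuinely fail (for instance when $\eps_u$ and $\eps_t$ are both centered Poisson) even though $\eps_t$ is non-Gaussian.
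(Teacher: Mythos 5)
Your argument is correct in substance, and it takes a genuinely different route from the paper's proof. The paper first extracts $a_1+a_2=2\beta+\gamma^{(1)}+\gamma^{(2)}$ from second-moment differences, then forms the auxiliary variables $X^{(i)}=(a_1+a_2)T^{(i)}-2Y^{(i)}=\pm a\,\eps_u+b\,\eps_t-2\eps_y$ and runs a case analysis (on $\E[\eps_u^3]$, $\E[\eps_t^3]$, and then on the higher-order ``Gaussian-excess'' quantities $\E[\eps^n]-(n-1)\E[\eps^{n-2}]\E[\eps^2]$, which are essentially the cumulants you work with) to recover either $a=\gamma^{(2)}-\gamma^{(1)}$ or $b=\gamma^{(1)}+\gamma^{(2)}$, finishing with \textit{GetRatio} in the former subcase. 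Your Branch 1 replaces all of this with a single elimination identity: since $\E[Y^{(i)}(T^{(i)})^{k-1}]=\beta S_k+\gamma^{(i)}M^{(k)}_u$ with $S_k,M^{(k)}_u$ environment-invariant, any two orders $k,l$ at which $(S_k,M^{(k)}_u)$ and $(S_l,M^{(l)}_u)$ are linearly independent give $\beta$ in closed form, with no case analysis and in fact no non-Gaussianity of $\eps_t$ needed on that branch. Your characterization of the degenerate case as proportionality of the cumulant sequences of $\eps_u$ and $\eps_t$ (which follows by the standard induction through the moment--cumulant relations), combined with Marcinkiewicz's theorem (legitimate under Assumption 4), correctly guarantees that the fallback branch always applies there; the joint-cumulant differences $(a_1^q-a_2^q)\kappa^u_{p+q}$ for $q=1,2,3$ do recover $\{a_1,a_2\}$, and the concluding \textit{GetRatio} step coincides with the paper's. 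Your route buys a cleaner dichotomy and a weaker effective hypothesis on the non-degenerate branch; the paper's buys fully explicit estimators at every step. One small repair is needed: the sign of $a_1-a_2=\gamma^{(1)}-\gamma^{(2)}$, which you need in order to pick $a_1$ out of the recovered pair, cannot be read off the order-$m$ difference $(a_1-a_2)\kappa^u_m$ because the sign of $\kappa^u_m$ is unknown; use instead the covariance difference $\E\big[T^{(1)}Y^{(1)}\big]-\E\big[T^{(2)}Y^{(2)}\big]=(a_1-a_2)\E\big[\eps_u^2\big]$, whose second factor is known to be positive --- this is exactly the paper's $\tilde{a}$ in Step 2.
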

The proof of this theorem can be found in Appendix \ref{apx:proofs}. 
Below, we provide a procedure for estimating the treatment effect $\beta$, which consists of multiple steps. 

\textbf{Step 1.} First, we compute $ 2\beta + \gamma^{(1)} + \gamma^{(2)}$ as
\begin{equation}\label{eq:2beta}
    2\beta + \gamma^{(1)} + \gamma^{(2)} = \frac{\E\left[\left(Y^{(2)}\right)^2 - \left(Y^{(1)}\right)^2\right]}{\E\left[Y^{(2)}T^{(2)} - Y^{(1)}T^{(1)}\right]}.
\end{equation}
We then define new variables $X^{(1)}$, $X^{(2)}$ as follows
\begin{equation}
    X^{(i)} \coloneqq \left(2\beta + \gamma^{(i)} + \gamma^{(i)}\right)T^{(i)} - 2Y^{(i)}.
\end{equation}
We also define $a \coloneqq \gamma^{(2)} - \gamma^{(1)}$, and $b \coloneqq \gamma^{(2)} + \gamma^{(1)}$.

\textbf{Step 2.} We use the following equations to compute the values $\Tilde{a}\coloneqq a\E[\epsilon_u^2]$ and $\Tilde{b}\coloneqq b\E[\epsilon_t^2]$:
\begin{align*}
    & a\E\left[ \eps_u^2\right] = \frac{1}{2}\left(\E\left[T^{(1)}X^{(1)}\right] -  \E\left[T^{(2)}X^{(2)}\right]\right),\\
    & b\E\left[ \eps_t^2\right] = \frac{1}{2}\left(\E\left[T^{(1)}X^{(1)}\right] +  \E\left[T^{(2)}X^{(2)}\right]\right).
\end{align*}
Since $\E\left[ \eps_u^2\right]$ and $\E\left[ \eps_t^2\right]$ are always positive, $\Tilde{a}$ and $\Tilde{b}$ reveal the signs of $a$ and $b$.

\textbf{Step 3.} For every $n$, we define $\phi_n^{(i)}$ as follows.
If $n$ is odd, 
\begin{equation*}
    \phi_n^{(i)} = \E\big[\big(T^{(i)}\big)^{n-1}X^{(i)}\big],
\end{equation*}
and if $n$ is even,
\begin{align*}
    & \phi_n^{(1)} \!=\! \E\big[X^{(1)}\big(T^{(1)}\big)^{n-1}\big] \!- (n-1)(\tilde{a} +\tilde{b})\E\big[\big(T^{(1)}\big)^{n-2}\big],\\
    & \phi_n^{(2)} \!=\! \E\big[X^{(2)}\big(T^{(2)}\big)^{n-1}\big] \!- (n-1)(\tilde{b} -\tilde{a})\E\big[\big(T^{(2)}\big)^{n-2}\big].\\
\end{align*}
In this step, we find the smallest $n \in \N$ such that at least one of the values $\{\phi_n^{(i)}\}_{i=1}^2$ is non-zero. 
We denote this value by $n^*$.
Note that such $n^*$ exists under the assumption imposed on $\eps_t$ in Theorem \ref{th:case 3} (see the proof for the details).

\textbf{Step 4.} 
Define $\psi_{ j}^{(1)}$ and $\psi_{ j}^{(2)}$ for every positive integer $j$ as follows. 
If $n^*$ is odd,
\begin{equation*}
    \psi_{ j}^{(i)} = \E\big[\big(T^{(i)}\big)^{n^*-j}\big(X^{(i)}\big)^j\big],
\end{equation*}
and if $n^*$ is even,
\begin{align*}
    & \psi_{ j}^{(1)} = \E\big[\big(X^{(1)}\big)^{n^*-j}\big(T^{(1)}\big)^{j}\big] \\&\hspace{5em}- (n^*-1)(\tilde{a} +\tilde{b})\E\big[\big(X^{(1)}\big)^{n^*-2}\big],\\
    & \psi_{j}^{(2)} = \E\big[\big(X^{(2)}\big)^{n^*-j}\big(T^{(2)}\big)^{j}\big] \\&\hspace{5em}- (n^*-1)(\tilde{b} -\tilde{a})\E\big[\big(X^{(2)}\big)^{n^*-2}\big].\\
\end{align*}

Finally, two possibilities may occur based on the values of $\phi_{n^*}^{(i)}$, which are dealt with in step $5$.

\textbf{Step 5.} \textit{Case 1}: $\phi_{n^*}^{(1)} - \phi_{n^*}^{(2)} \neq 0$. 
Choose $j=3, l=2$ if $n^*$ is odd, and $j=1, l=(n^*-1)$ otherwise. 
The absolute value of $a=\gamma^{(2)}-\gamma^{(1)}$ can be computed via
\begin{equation*}
    \vert a\vert = \left\vert\frac{\psi_{j}^{(1)} - \psi_{j}^{(2)}}{\phi_{n^*}^{(1)} - \phi_{n^*}^{(2)}}\right\vert^{1/l},
\end{equation*}
and as mentioned earlier, $a$ has the same sign as $\tilde{a}$.
As a result, $\beta + \gamma^{(1)}$ can be computed as $\beta + \gamma^{(1)} = \frac{1}{2}(2\beta + \gamma^{(1)} + \gamma^{(2)} - a)$. Finally, we recover $\beta$ via
\begin{equation*}
    \beta = \beta + \gamma^{(1)} - \textit{GetRatio}\left(\left(\beta+\gamma^{(1)}\right)T^{(1)} - Y^{(1)}, T^{(1)}\right).
\end{equation*}

\textbf{Step 5.} \textit{Case 2}: $\phi_{n^*}^{(1)} - \phi_{n^*}^{(1)} = 0$.
In this case, by definition of $n^*$, $\phi_{n^*}^{(1)} + \phi_{n^*}^{(1)} \neq 0$. 
We choose $j=2, l=1$ if $n^*$ is odd, and $j=1, l=(n^*-1)$ otherwise. 
In this case we can recover $b$ via
\begin{equation*}
    \vert b\vert = \left\vert\frac{\psi_{ j}^{(1)} + \psi_{j}^{(2)}}{\phi_{n^*}^{(1)} + \phi_{n^*}^{(2)}}\right\vert^{1/l},
\end{equation*}
and the fact the $b$ and $\tilde{b}$ have the same sign.
Finally, $\beta = \frac{1}{2}(2\beta + \gamma^{(1)} + \gamma^{(2)} - b)$.

\subsection{The case \texorpdfstring{$\F\big( \M^{(1)}, \M^{(2)}\big) = \{\alpha \}$}{F(M1, M2)=alpha}}
\label{sec:case alpha}
Here, we assume that only the causal effect of the latent variable on the treatment varies across environments. 
The SCM $\M^{(i)}$ corresponding to the environment $(i)$ is then
\begin{equation}
\begin{cases}
        & U^{(i)} := \epsilon_u, \\
        & T^{(i)} := \alpha^{(i)} U^{(i)} +\epsilon_{t}, \\
        & Y^{(i)} := \beta T^{(i)} + \gamma U^{(i)} + \epsilon_y.
\end{cases}  
\label{eq:case 4}
\end{equation}

\begin{restatable}{theorem}{thmcasefour}
    \label{th:case 4}
    Suppose $\M^{(1)}, \M^{(2)}$ are linear SCMs compatible with the DAG of Figure \ref{fig: main graph}, such that $\F\big( \M^{(1)}, \M^{(2)}\big)=\{\alpha\}$.
    Suppose $\exists n \in \N$ such that $\E\left[ \eps_u^n \right] \neq (n-1) \E\left[\eps_u^{n-2}\right] \E\left[ \eps_u^2\right]$.
    Then under Assumptions \ref{ass:linear SCM}-\ref{ass:distribution via moments} the treatment effect $\beta$ can be recovered uniquely almost surely\footnote{Here we consider the Lebesgue measure on the set of coefficients of linear SCMs $\M^{(1)}, \M^{(2)}$. Then the causal effect is not identifiable only for a set of coefficients with measure zero.}.
\end{restatable}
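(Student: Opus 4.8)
The plan is to argue identifiability in the form required by the definition: take any alternative pair $\{\tilde\M^{(1)},\tilde\M^{(2)}\}$ in the same class (only $\tilde\alpha$ varying, so $\F\big(\tilde\M^{(1)},\tilde\M^{(2)}\big)=\{\alpha\}$) that reproduces the two observational distributions, and show $\tilde\beta=\beta$ for all but a measure-zero set of coefficients. Writing $B_i:=\beta\alpha^{(i)}+\gamma$, Equation~\ref{eq:case 4} gives $T^{(i)}=\alpha^{(i)}\eps_u+\eps_t$ and $Y^{(i)}=B_i\eps_u+\beta\eps_t+\eps_y$, so the joint cumulant generating function factorizes as $K^{(i)}(x,y)=K_u(\alpha^{(i)}x+B_iy)+K_t(x+\beta y)+K_y(y)$, where $K_u,K_t,K_y$ are the CGFs of the (environment-invariant) exogenous noises. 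The central object is the cross-environment difference, in which the $\eps_t$ and $\eps_y$ terms cancel:
\[
K^{(1)}-K^{(2)}=K_u(\ell_1)-K_u(\ell_2),\qquad \ell_i:=(\alpha^{(i)},B_i).
\]
Equivalently, at the level of joint cumulants, every cross-environment difference of order $m=p+q$ equals $\big[(\alpha^{(1)})^pB_1^q-(\alpha^{(2)})^pB_2^q\big]\kappa_m(\eps_u)$, i.e.\ $\kappa_m(\eps_u)$ times the binary form $\ell_1^{\,m}-\ell_2^{\,m}$.

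First I would observe that $\beta$ is precisely the slope of the invariant direction $\ell_2-\ell_1=(\alpha^{(2)}-\alpha^{(1)})(1,\beta)$: since $B_i-\beta\alpha^{(i)}=\gamma$ in both environments, $Y^{(i)}-\beta T^{(i)}=\gamma\eps_u+\eps_y$ has a distribution independent of $i$, so $K^{(1)}-K^{(2)}$ vanishes along the line $x+\beta y=0$, which is exactly the common linear factor $(\ell_1-\ell_2)$ of every form $\ell_1^{\,m}-\ell_2^{\,m}$. The problem thus reduces to recovering this invariant direction from the observable difference $K_u(\ell_1)-K_u(\ell_2)$.

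Next I would use the non-Gaussianity of $\eps_u$ to pin the directions down. Under Assumption~\ref{ass:distribution via moments} the stated moment inequality is equivalent to $\eps_u$ being non-Gaussian, so some $\kappa_n(\eps_u)$ is nonzero at an order $n\ge 3$ and the form $\ell_1^{\,n}-\ell_2^{\,n}$ is genuinely present. By the uniqueness of such decompositions -- the Darmois--Skitovich-type rigidity underlying the \textit{GetRatio} procedure -- the unordered pair of directions is determined up to sign, i.e.\ any alternative factorization has $\{\tilde\ell_1,\tilde\ell_2\}=\{\pm\ell_1,\pm\ell_2\}$. The permutation $\tilde\ell_1=\ell_2,\tilde\ell_2=\ell_1$ leaves $\ell_2-\ell_1$ (hence $\beta$) unchanged, and a genuine sign flip (say $\tilde\ell_2=-\ell_2$) is consistent with $K_u$ only when $\eps_u$ is symmetric. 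When $\eps_u$ is not symmetric this already yields unique identification; when it is symmetric I would remove the spurious candidate using the part of the signal that the difference discards, namely the invariant contribution $K_t(x+\beta y)+K_y(y)$: matching second order, $\eps_t$ contributes $\Var{\eps_t}$ in direction $(1,\beta)$ to every environment's covariance, and once the $\eps_u$ part is matched (its scale pinned by the order-two cross-environment difference, its direction up to the sign above), comparing the off-diagonal with the $T$-diagonal of the residual covariance forces $\tilde\sigma_t^2\,\tilde\beta=\sigma_t^2\,\beta$, hence $\tilde\beta=\beta$ provided $\Var{\eps_t}\neq 0$ and the sign flip actually moved the direction, i.e.\ provided $\gamma\neq0$ and $\alpha^{(1)}+\alpha^{(2)}\neq0$. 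These degeneracies form the announced measure-zero exceptional set, giving the ``almost surely.'' The constructive algorithm mirrors this: solve $\E\big[(Y^{(1)}-bT^{(1)})^m\big]=\E\big[(Y^{(2)}-bT^{(2)})^m\big]$ for the (at most two) real candidates $b$, keep the one passing the second-order consistency check, and, if the construction returns $\beta$ plus a $\gamma/\alpha^{(i)}$-type residual, strip it with a final \textit{GetRatio} call, exactly as in Sections~\ref{sec:case eps_u} and \ref{sec:case gamma}.

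The step I expect to be the main obstacle is the rigidity argument combined with the precise bookkeeping of the exceptional set: showing that $K_u(\ell_1)-K_u(\ell_2)$ determines $\{\ell_1,\ell_2\}$ up to sign, and then verifying that the second-order check eliminates the sign-flip candidate on all but a measure-zero set of coefficients. This requires carefully combining the non-Gaussian (order $\ge 3$) information with the order-two information and handling the symmetric-$\eps_u$ case, where neither alone suffices and the two must be used together.
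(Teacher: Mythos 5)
Your route is genuinely different from the paper's. The paper argues constructively: it forms the quadratic $\E[(Y^{(1)}-\hat\beta T^{(1)})^2]-\E[(Y^{(2)}-\hat\beta T^{(2)})^2]=0$, whose two roots are exactly $\beta$ (the slope of $\ell_1-\ell_2$ in your notation) and the spurious candidate $(B_1+B_2)/(\alpha^{(1)}+\alpha^{(2)})$ (the slope of $\ell_1+\ell_2$, i.e.\ precisely your sign-flipped configuration), and then discriminates between the two roots by evaluating, at the first order $n$ where the residual $X^{(i)}=Y^{(i)}-\beta^*T^{(i)}$ is ``non-Gaussian,'' the functional $\Phi^{(i)}(\beta^*)=\E[(X^{(i)})^{n-1}T^{(i)}]-(n-1)\E[X^{(i)}T^{(i)}]\E[(X^{(i)})^{n-2}]$, which satisfies $\Phi^{(1)}/\Phi^{(2)}=\E[X^{(1)}T^{(1)}]/\E[X^{(2)}T^{(2)}]=\alpha^{(1)}/\alpha^{(2)}$ for the true root and violates this identity for the wrong root outside a measure-zero variety. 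Your cumulant-tensor picture is the structural explanation of why that test works, and it localizes the exceptional set more transparently ($\gamma=0$, $\alpha^{(1)}+\alpha^{(2)}=0$, $\Var{\eps_t}=0$); your observation that a single sign flip forces all odd cumulants of $\eps_u$ of order $\ge 3$ to vanish, and that in the symmetric case the second-order matching of the \emph{full} covariance (not the difference) forces $\tilde\sigma_t^2=\sigma_t^2$ and $\tilde\sigma_t^2\tilde\beta=\sigma_t^2\beta$, is correct and clean.

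The gap is the rigidity step. You assert that $K_u(\ell_1\cdot z)-K_u(\ell_2\cdot z)=\tilde K_u(\tilde\ell_1\cdot z)-\tilde K_u(\tilde\ell_2\cdot z)$ forces $\{\tilde\ell_1,\tilde\ell_2\}=\{\pm c\,\ell_1,\pm c\,\ell_2\}$ by appeal to ``Darmois--Skitovich-type rigidity underlying \textit{GetRatio},'' but neither source delivers this: Darmois--Skitovich concerns independence of linear forms, and \textit{GetRatio} recovers a coefficient ratio of a single shared noise in two observables --- it is not a uniqueness theorem for two-direction decompositions of a \emph{signed} combination of CGFs in which the noise law $\tilde K_u$ may itself differ from $K_u$. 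What you actually need, and must prove, is (i) uniqueness of the rank-2 Waring decomposition of the binary form $\kappa_m(\eps_u)\bigl(\ell_1^{\otimes m}-\ell_2^{\otimes m}\bigr)$ at the first order $m\ge 3$ with $\kappa_m(\eps_u)\neq 0$, which requires $\ell_1,\ell_2$ linearly independent (equivalently $\gamma\neq 0$) and rules out the degenerate case $\tilde\kappa_m=0$; and (ii) the cross-order consistency argument showing that individual rescalings $\tilde\ell_j=c_j\ell_j$ must satisfy $c_1^m=c_2^m$ for every order with nonvanishing cumulant, hence $c_1=\pm c_2$ using $\kappa_2\neq 0$. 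Both are provable, but they are the heart of the theorem and cannot be cited away. A second, smaller issue: your closing ``constructive algorithm'' keeps ``the root passing the second-order consistency check,'' yet both roots of the $m=2$ equation satisfy every second-order difference identity by construction; the second-order argument only bites in the abstract identifiability framing (where the alternative must reproduce the full joint covariance including the $\eps_t$ term), and an implementable discriminator must go to order $n\ge 3$, as the paper's $\Phi^{(i)}$ does.
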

See Appendix \ref{apx:proofs} for a formal proof. 
The procedure for estimating the treatment effect $\beta$ is presented in Algorithm $\ref{alg:case 4}$.
This algorithm takes advantage of the fact that $Y^{(i)} - \hat{\beta} T^{(i)}$ does not depend on $\alpha^{(i)}$ if and only if $\hat{\beta}=\beta$. 
Therefore, solving the quadratic equation $h(\beta)$ (line 1 of Algorithm \ref{alg:case 4}) for $\beta$ (with coefficients that can be computed from observational data) provides us with two possible candidates.
We can then identify the true value of $\beta$ among these two candidates using the criterion in line 10 of Algorithm \ref{alg:case 4}, which is equal to zero only for the correct value of $\beta$.
See the proof of Theorem \ref{th:case 4} for further details.


\begin{algorithm}[t]
    \caption{$\F\big( \M^{(1)}, \M^{(2)}\big)=\{\alpha\}$}
    \label{alg:case 4}
    \textbf{Input:} $\{T^{(i)}, Y^{(i)}\}$ and $\F\big( \M^{(1)}, \M^{(2)}\big) = \{\eps_t\}$ 
    
    \begin{algorithmic}[1]
    \STATE $h(\beta) := \E\big[\big(Y^{(1)} - \beta T^{(1)}\big)^2\big] - \E\big[\big(Y^{(2)} - \beta T^{(2)}\big)^2\big]$
    \STATE $\beta_1, \beta_2 \gets roots\left(h(\cdot)\right)$
    \STATE $X_i^{(j)} := Y^{(j)} - \beta_i T^{(j)}$
    \STATE $\phi^{(j)}_{i, m} := \E\big[ \big(X^{(j)}_i\big)^{m} T^{(j)} \big]$
    \STATE $\psi^{(j)}_m :=  \E\big[ \big(X^{(j)}_i\big)^{m}\big]$
    \STATE $n_1 \gets 2$, $n_2 \gets 2$
    \WHILE{$\phi_{i, n_i-1}^{(1)} - (n_i-1)\phi_{i, 1}^{(1)}\psi^{(1)}_{n_i-2} = 0$}
        \STATE $n_{i} \gets n_{i}+1$
    \ENDWHILE
    \IF{$n_1=n_2$}
    \STATE $i \gets \underset{i}{\arg\,min} \left|\frac{\phi_{i, n-1}^{(1)}}{\phi_{i, n-1}^{(2)}} - \frac{\phi_{i, n-1}^{(1)} - (n-1)\phi_{i, 1}^{(1)}\psi^{(1)}_{n-2}}{\phi_{i, n-1}^{(2)} - (n-1)\phi_{i, 1}^{(2)}\psi^{(2)}_{n-2}}\right|$
    \ELSE
    \STATE $i \gets \underset{i}{\arg\,max} \left[ n_i \right]$
    \ENDIF
    \STATE \textbf{RETURN:} $\beta_i$
    \end{algorithmic}
\end{algorithm}

\subsection{Detecting the source of change}
\label{sec:source_of_change}
We begin by verifying whether the varying parameter is $\gamma$.
This can be done through comparing the distributions of $T$ and $Y$ between the environments:
if $\F\big( \M^{(1)}, \M^{(2)}\big) = \{\gamma\}$, then the distributions of $T^{(1)}$ and $T^{(2)}$ are identical, whereas the distributions of $Y^{(1)}$ and $Y^{(2)}$ are different.
If we conclude that $\F\big( \M^{(1)}, \M^{(2)}\big) \neq \{\gamma\}$, then we check whether $\F\big( \M^{(1)}, \M^{(2)}\big) = \{\alpha\}$. 
To do so we need to consider the following quantities.
\begin{align*}
    & \E\big[ T^{(1)} Y^{(1)} - T^{(2)} Y^{(2)}\big]\big/\E\big[ \big( T^{(1)} \big)^2 - \big( T^{(2)} \big)^2\big],\\
    & \E\big[ \big( Y^{(1)} \big)^2 - \big( Y^{(2)} \big)^2\big]\big/\E\big[ T^{(1)} Y^{(1)} - T^{(2)} Y^{(2)}\big].
\end{align*}
If these two quantities are equal or if both their denominators are equal to zero, we conclude that $\F\big( \M^{(1)}, \M^{(2)}\big) \neq \{\alpha\}$; otherwise $\F\big( \M^{(1)}, \M^{(2)}\big) = \{\alpha\}$.
Finally, if $\F\big( \M^{(1)}, \M^{(2)}\big) \not\subset \{\gamma, \alpha\}$, then we conclude that $\F\big( \M^{(1)}, \M^{(2)}\big) \subset \{\eps_t, \eps_u\}$.

\subsection{Non-identifiability}
\label{sec:non-ID-result}
We shall now show that $\beta$ is not identifiable if $\F\big( \M^{(1)}, \M^{(2)}\big) = \{\eps_u, \eps_t\}$. The SCM $\M^{(i)}$ corresponding to the environment $i$ takes the following form:
\begin{equation}
\M^{(i)} = 
\begin{cases}
        & U^{(i)} := \epsilon^{(i)}_u, \\
        & T^{(i)} := \alpha U^{(i)} +\epsilon^{(i)}_{t}, \\
        & Y^{(i)} := \beta T^{(i)} + \gamma U^{(i)} + \epsilon_y.
\end{cases}  
\label{eq:case non-identifiable}
\end{equation}
\begin{restatable}{theorem}{thmnonid}
    \label{th:non-identifiable beta}
     Suppose $\M^{(1)}, \M^{(2)}$ are linear SCMs compatible with the DAG of Figure \ref{fig: main graph}, such that $\F\big( \M^{(1)}, \M^{(2)}\big)=\{\eps_u, \eps_t\}$. The treatment effect $\beta$ is not identifiable from the observational data from both domains and $\F\big( \M^{(1)}, \M^{(2)}\big)$. 
\end{restatable}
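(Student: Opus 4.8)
The plan is to prove non-identifiability by exhibiting, for the given pair $\M^{(1)}, \M^{(2)}$, a second pair of linear SCMs $\tilde{\M}^{(1)}, \tilde{\M}^{(2)}$ that is compatible with the graph of Figure~\ref{fig: main graph}, satisfies Assumptions~\ref{ass:linear SCM}--\ref{ass:distribution via moments}, has $\F\big(\tilde{\M}^{(1)}, \tilde{\M}^{(2)}\big) = \{\eps_u, \eps_t\}$, induces \emph{exactly} the same observational distribution of $(T^{(i)}, Y^{(i)})$ in each environment, and yet has treatment effect $\tilde{\beta} \neq \beta$. Throughout I assume the confounding is genuine, i.e.\ $\alpha \neq 0$ and $\gamma \neq 0$; if $\gamma = 0$ then $\eps_y$ is independent of $T$ and $\beta$ is trivially recovered by regression, so that case is not of interest.

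First I would pass to the reduced form by substituting $U^{(i)} = \eps_u^{(i)}$: writing $A := \alpha\beta + \gamma$, one has $T^{(i)} = \alpha\eps_u^{(i)} + \eps_t^{(i)}$ and $Y^{(i)} = A\eps_u^{(i)} + \beta\eps_t^{(i)} + \eps_y$. The key observation is that, in the $(T,Y)$ plane, the two varying noises enter along the fixed directions $(\alpha, A)$ and $(1, \beta)$, and since \emph{both} of them change across environments, the observational data carry no information identifying which of these two directions is ``the confounder's.'' I would therefore build the alternative model by swapping the roles of $\eps_u$ and $\eps_t$. Concretely, fix any $\tilde{\alpha} \neq 0$ and set
\[
\tilde{\beta} := \beta + \gamma/\alpha, \quad \tilde{\gamma} := -\tilde{\alpha}\gamma/\alpha, \quad \tilde{\eps}_u^{(i)} := \eps_t^{(i)}/\tilde{\alpha}, \quad \tilde{\eps}_t^{(i)} := \alpha\,\eps_u^{(i)}, \quad \tilde{\eps}_y := \eps_y.
\]

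A direct substitution then shows that the SCM $\tilde{\M}^{(i)}$ with coefficients $(\tilde{\alpha}, \tilde{\beta}, \tilde{\gamma})$ and these exogenous noises produces $\tilde{T}^{(i)} = T^{(i)}$ and $\tilde{Y}^{(i)} = Y^{(i)}$ as random variables (not merely in distribution), for both $i$; this is the only computation in the proof and is routine algebra. It remains to check that $\tilde{\M}^{(i)}$ is admissible: its exogenous noises are nonzero scalar multiples of the mutually independent $\eps_t^{(i)}, \eps_u^{(i)}, \eps_y$, hence are themselves mutually independent, and the finite-moment and determined-by-moments conditions of Assumptions~\ref{ass:finite moments}--\ref{ass:distribution via moments} are preserved under nonzero scaling. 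Because $\eps_t^{(i)}$ and $\eps_u^{(i)}$ both vary across environments while $\tilde{\alpha}, \tilde{\beta}, \tilde{\gamma}, \tilde{\eps}_y$ do not, we obtain $\F\big(\tilde{\M}^{(1)}, \tilde{\M}^{(2)}\big) = \{\tilde{\eps}_u, \tilde{\eps}_t\}$, matching the given side information, and Assumption~\ref{ass:invariant treatment effect} holds since $\tilde{\beta}$ is a single constant. Finally $\tilde{\beta} - \beta = \gamma/\alpha \neq 0$, so the two collections are observationally indistinguishable yet disagree on the treatment effect, which proves the claim.

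The conceptual heart of the argument, and the only nontrivial step, is discovering the swap: that relabeling the confounder noise as the treatment noise (after rescaling by $1/\tilde{\alpha}$ and $\alpha$) and correspondingly adjusting $(\beta, \gamma)$ leaves every observable untouched. I expect the main obstacle to lie entirely in finding this reparametrization; once written down, all verifications are elementary. I would also remark that this explains the ``two candidates'' phenomenon appearing in the identifiable cases: by uniqueness of the overcomplete independent-component decomposition, the two noise \emph{directions} are pinned down by the higher moments, so the sole residual ambiguity is which direction belongs to the confounder, yielding exactly the two values $\beta$ and $\beta + \gamma/\alpha$; when only one noise varies (e.g.\ $\F = \{\eps_t\}$), the invariant noise identifies the confounder and breaks the symmetry, restoring identifiability.
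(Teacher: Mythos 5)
Your proposal is correct and takes essentially the same route as the paper: the paper's proof uses exactly this swap (with the specific choice $\tilde{\alpha}=1$, $\tilde{\eps}_u^{(i)}=\eps_t^{(i)}$, $\tilde{\eps}_t^{(i)}=\alpha\eps_u^{(i)}$, $\tilde{\gamma}=-\gamma/\alpha$, $\tilde{\beta}=\beta+\gamma/\alpha$), attributed to the counterexample of \citet{salehkaleybar2020learning}, and verifies $\tilde{T}^{(i)}=T^{(i)}$, $\tilde{Y}^{(i)}=Y^{(i)}$ by the same algebra. Your version is marginally more careful in making the implicit assumptions $\alpha\neq 0$, $\gamma\neq 0$ explicit and in checking that the new noises remain independent and satisfy Assumptions~\ref{ass:finite moments}--\ref{ass:distribution via moments}, but these are cosmetic differences, not a different argument.
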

\begin{proof}
    To prove that $\beta$ is not identifiable, we will construct two new SCMs $\tilde{\M}^{(1)}$ and $\tilde{\M}^{(2)}$,
    \begin{equation}
    \tilde{\M}^{(i)} = 
    \begin{cases}
            & \tilde{U}^{(i)} := \tilde{\epsilon}^{(i)}_u, \\
            & \tilde{T}^{(i)} := \tilde{\alpha} \tilde{U}^{(i)} +\tilde{\epsilon}^{(i)}_{t}, \\
            & \tilde{Y}^{(i)} := \tilde{\beta} \tilde{T}^{(i)} + \tilde{\gamma} \tilde{U}^{(i)} + \tilde{\epsilon}_y.
    \end{cases}  
    \label{eq:non-id:new SCMs}
    \end{equation}  
    such that $\F\big(\tilde{\M}^{(1)}, \tilde{\M}^{(2)}\big) = \{\tilde{\eps}_u, \tilde{\eps}_t\}$ and they induce the same observational distributions as $\M^{(1)}$ and $\M^{(2)}$, respectively, but the treatment effects differs for them from \eqref{eq:case non-identifiable}, i.e $\beta \neq \tilde{\beta}$. 
    To do so, we utilize the counter- example presented in \citep{salehkaleybar2020learning}. 
    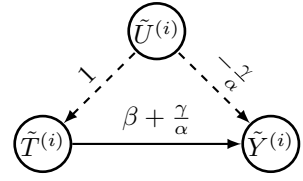
\begin{figure}[t]
      \begin{center}
        \begin{tikzpicture}[
                roundnode/.style={circle, draw=black!60,, fill=white, thick, inner sep=1pt},
                dashednode/.style = {circle, draw=black!60, dashed, fill=white, thick, inner sep=1pt},
                ]
                \node[node style]        (T)        at (-1.5, 0)                   {$\tilde{T}^{(i)}$};
                \node[node style]        (Y)        at (1.5, 0)                    {$\tilde{Y}^{(i)}$};
                \node[node style]       (U)        at (0, 1.5)                    {$\tilde{U}^{(i)}$};
                
                \draw[-latex, thick] (T.east) -- (Y.west) node[midway,sloped,above] {$\beta + \frac{\gamma}{\alpha}$} ;
                \draw[latex-, dashed, thick] (T) -- (U) node[midway,sloped,above] {$1$};
                \draw[latex-, dashed, thick] (Y) -- (U) node[midway,sloped,above] {$-\frac{\gamma}{\alpha}$};
            \end{tikzpicture}
      \end{center}
      \caption{The causal structure corresponding to SCM $\tilde{\M}^{(i)}$.}
      \label{fig: non-id structure}
    \end{figure}
    Specifically, the causal structure of the new models $\tilde{\mathcal{M}}^{(i)}$ corresponding to environments $i\in\{0,1\}$ can be seen in Figure \ref{fig: non-id structure} with the parameters defined as follows:
    \begin{align*}
        & \tilde{\eps}_u^{(i)} = \eps_t^{(i)},\; \tilde{\eps}_t^{(i)} = \alpha \eps_u^{(i)},\; \tilde{\eps}_y = \eps_y, \\
        & \tilde{\alpha} = 1,\; \tilde{\gamma} = -\frac{\gamma}{\alpha},\; \tilde{\beta} = \beta + \frac{\gamma}{\alpha}.
    \end{align*}
    Substituting these values into the set of equations $\ref{eq:non-id:new SCMs}$, we obtain
    \begin{equation*}
    \tilde{\M}^{(i)} = 
    \begin{cases}
            &\hspace{-.8em}\tilde{U}^{(i)} = \eps_t^{(i)}, \\
            &\hspace{-.8em}\tilde{T}^{(i)} = \eps_t^{(i)} + \alpha \eps_u^{(i)}, \\
            &\hspace{-.8em}\tilde{Y}^{(i)} = (\beta + \frac{\gamma}{\alpha}) (\eps_t^{(i)} + \alpha \eps_u^{(i)}) + -\frac{\gamma}{\alpha} \eps_t^{(i)} + \epsilon_y,
    \end{cases}  
    \end{equation*}
    and after regrouping and simplifications, it is easy to verify that
    \begin{align*}
        & \tilde{T}^{(i)} = \alpha \eps_u^{(i)} + \eps_t^{(i)} = T^{(i)}, \\
        & \tilde{Y}^{(i)} = (\alpha\beta + \gamma) \eps_u^{(i)} + \beta \eps_t^{(i)} + \epsilon_y = Y^{(i)},
    \end{align*}
    and that $\F\big(\tilde{\M}^{(1)}, \tilde{\M}^{(2)}\big) = \{\tilde{\eps}_u, \tilde{\eps}_t\}$, while $\beta\neq \tilde{\beta}$. This concludes the proof.
\end{proof}

\section{Related Work}
\label{sec:related_work}
In the multi-environment setting, there are two main lines of research in causality. The first  aims to learn an equivalent class of possible causal structures from samples collected across multiple environments. In this context, it is typically assumed that the distributional changes across environments arise due to interventions on exogenous noises or causal mechanisms, which are unknown to the observer. Various methods have been proposed to leverage these shifts for causal discovery, including constraint-based approaches \citep{ghassami2017learning,mooij2020joint,jaber2020causal,squires2020permutation,perry2022causal,zhou2022causal} or score-based methods \citep{brouillard2020differentiable,hagele2023bacadi,mameche2024learning}.

The second line of research focuses on identifying the direct causes of a target variable rather than inferring the entire causal structure. This approach is particularly relevant in settings where determining the parents of a specific variable is more critical than learning the whole causal graph. The primary objective here can be framed as a causal discovery task, which differs from causal effect identification, the main focus of our paper. However, this research direction is closely related to our work as it often assumes that causal mechanisms are linear (akin to us). Moreover, as part of the process to identify the parents of the target variable, the causal coefficients in the linear model are also often estimated. 

\begin{figure*}[t]
    \centering
    \begin{subfigure}[b]{0.485\textwidth}
        \centering
        \includegraphics{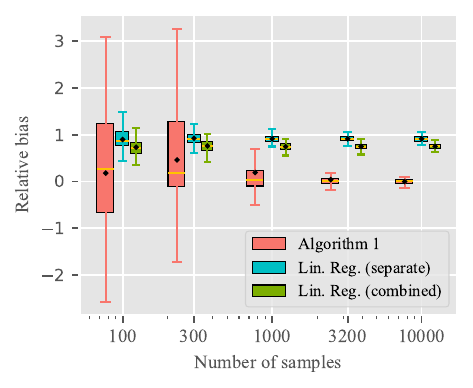}
        \caption{$\F\big( \M^{(1)}, \M^{(2)}\big) = \{\eps_t\}$}
        \label{fig:exp_eps_t}
    \end{subfigure}\hfill
    \begin{subfigure}[b]{0.485\textwidth}
        \centering
        \includegraphics{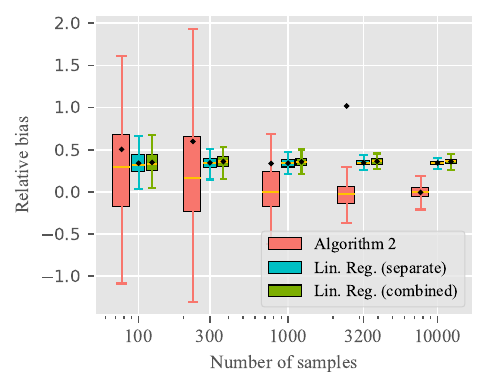}
        \caption{$\F\big( \M^{(1)}, \M^{(2)}\big) = \{\eps_u\}$}
        \label{fig:exp_eps_u}
    \end{subfigure}
    \begin{subfigure}[b]{0.485\textwidth}
        \centering
        \includegraphics{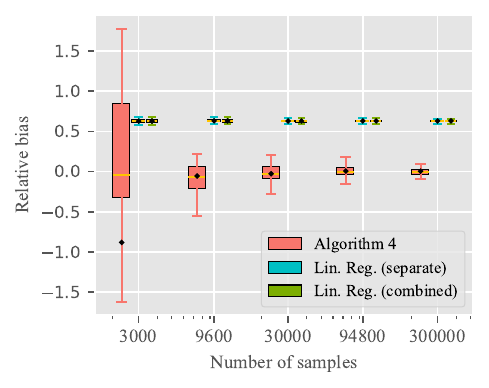}
        \caption{$\F\big( \M^{(1)}, \M^{(2)}\big) = \{\gamma\}$}
        \label{fig:exp_gamma}
    \end{subfigure}\hfill
    \begin{subfigure}[b]{0.485\textwidth}
        \centering
        \includegraphics{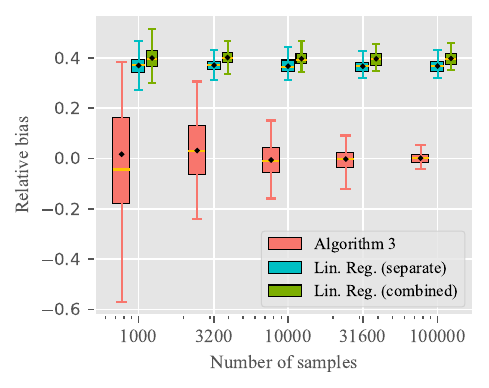}
        \caption{$\F\big( \M^{(1)}, \M^{(2)}\big) = \{\alpha\}$}
        \label{fig:exp_alpha}
    \end{subfigure}
    \caption{Relative estimation bias given data from two domains, when only $\epsilon_t$ (\ref{fig:exp_eps_t}), only $\epsilon_t$ (\ref{fig:exp_eps_u}), only $\gamma$ (\ref{fig:exp_gamma}), and only $\alpha$ (\ref{fig:exp_alpha}) varies across domains. }
    \label{fig:exponential}
\end{figure*}

As an example of the second line of research, \citet{peters2016causal} assumed that interventions could be applied to any variable except the target variable. They proposed the Invariant Causal Prediction (ICP) method, which leverages the invariance of the conditional distribution of the target variable across environments to identify a subset of its direct causes. Additionally, they assumed no latent confounding exists between the covariates and the target variable. Subsequent research has extended this idea to more general settings, including linear models with additive interventions, nonlinear models \citep{heinze2018invariant},  and sequential data \citep{pfister2019invariant}. For a comprehensive review of these developments, see \citep{buhlmann2020invariance}.

One of the drawbacks of the ICP and its extensions is  high computational cost, as these approaches must search over all possible subsets of covariates to verify the invariance of the conditional distribution of the target variable. To help address this issue, there has been growing interest in leveraging optimization techniques to recover direct causes in the multi-environment setting \citep{rothenhausler2019causal,gimenez2020identifying,yin2024optimization,wang2024causal}. 
For instance, for linear models, \citet{rothenhausler2019causal} proposed Causal Dantzig  exploiting ``invariant inner-products'' instead of the conditional invaraince in ICP. This method also allows latent confounding but not between covariates and the target variable. 
\citet{gimenez2020identifying} proposed KL regression to identify the direct causes of a target variable in the presence of latent confounding for linear models. In this approach, the model parameters are optimized by minimizing the Kullback-Leibler (KL) divergence between two multivariate Gaussian distributions, one corresponding to the observed covariance matrix and the other to the parameterized model—across different environments. The method assumes that all causal coefficients in the linear model remain unchanged, while only the covariance matrix of the covariates or the target variable may vary across environments. Therefore, the distribution of the latent confounder is required to remain invariant.  

Another related area of research is causal transportability \citep{bareinboim2014transportability, lee2020general}, which is to identify the distribution of a causal effect in a target domain using experimental data from a source domain and observational data from the target, typically under the assumption that certain mechanisms remain invariant across domains. In contrast, our work identifies the average causal effect of a treatment on an outcome using data collected from multiple environments, without access to any experimental interventions. Moreover, our results are derived under a specific causal graph structure, for which, to the best of our knowledge, no existing transportability method offers identifiability guarantees \citep{bareinboim2014transportability, lee2020general}.

In the context of robust prediction rather than for the problem of causal effect identification, \citet{arjovsky2019invariant} introduced Invariant Risk Minimization (IRM), which incorporates an additional penalty term into the empirical risk function to encourage invariance of the predictor across environments. Since its introduction, IRM has been extended to various domains, including meta-learning  \citep{bae2021meta}, reinforcement learning \citep{zhang2020invariant}, and causal inference \citep{shi2021invariant,lu2021nonlinear}. For linear models, \cite{rothenhausler2021anchor} proposed anchor regression as a robust predictor which is an interpolation between the solutions of ordinary least
squares and two-stage least square. It is noteworthy that the main goal in methods such as IRM or anchor regression is to have robust prediction against distribution shifts and they do not provide any guarantee for recovering the direct causes.

\section{Experimental Results}
We evaluated the performance of Algorithms \ref{alg:case 1} through \ref{alg:case 3}, in terms of \textit{relative estimation bias}, over a variety of settings. We define the relative bias as $(\frac{\hat{\beta}}{\beta}-1)$, where $\beta$ and $\hat{\beta}$ denote the true and estimated values of the parameter, respectively.
Our Python code is  \href{https://github.com/SinaAkbarii/IdentificationMultipleDomain}{accessible online}\footnote{https://github.com/SinaAkbarii/IdentificationMultipleDomain}.

Figure \ref{fig:exponential} illustrates the relative bias of our algorithms, and compares them to that of linear regression (ordinary least squares).
For linear regression, we included two versions: (i) \emph{separate}, which regresses the outcomes on the treatments separately in each domain, and takes the average of both estimates; and (ii) \emph{combined}, which concatenates the data from both domains and performs a single linear regression to estimate $\beta$.
Note that in our setup, most methods such as ICP reduce to linear regression.
We sampled the noise variables from an exponential distribution with parameter $\lambda$ chosen uniformly at random in the range $(0.9, 1.1)$.
In the cases where  $\eps_t$ or $\eps_u$ were changing between domains, we picked  $\lambda\in(0.45,0.55)$ as the alternate parameter.
Parameters $\alpha, \beta, \gamma$ were uniformly sampled from $(0.4, 0.6)$,
$(0.6, 0.7)$, and 
$(0.8, 0.9)$, respectively.
In the case  $\alpha$ or $\gamma$ were changing, the alternative values were sampled uniformly from $(0.8,0.9)$ and $(2,2.1)$, respectively.
Figures \ref{fig:exp_eps_t}, \ref{fig:exp_eps_u}, \ref{fig:exp_gamma} and \ref{fig:exp_alpha} represent the relative biases in estimation of $\beta$, when only one of $\eps_t$, $\eps_u$,  $\gamma$, or $\alpha$ varied across the domains.
The box plots show the median and $25\%$ quantiles of relative estimation bias.
As evident from all plots, our algorithms converge to the true parameter as the number of samples grows, whereas both linear regression baselines have a systematic bias regardless of the number of samples.
However, for smaller number of samples, our algorithms show a higher variability (in terms of sample variance) compared to linear regression. This is expected due to more complex estimation procedures in our algorithms.

In Appendix \ref{apx:experiment}, we present complementary simulation results for when noises are sampled according to various probability distributions.
Furthermore, Figures \ref{fig:hist_k_alg1} through \ref{fig:hist_n_alg4} in Appendix \ref{apx:experiment} depict some values of the parameters $k$ and $n$ which we observed running our algorithms. Interestingly, in Figure \ref{fig:hist_n_alg4}, for logistic distribution, the value of $n$ is not recovered correctly when the sample size is not large enough. Therefore, Algorithm \ref{alg:case 4} will use the incorrect estimation formula to compute $\beta$. This issue is reflected in the results of the Figure \ref{fig:gamma_logistic}. However, when the sample size increases, the correct value of $n$ is recovered, and in the next error boxes, it can be seen that our estimate is more accurate, unlike the one obtained by comparative methods. It is noteworthy that $n$ is often known in the literature of causal effect estimation via high-order moments (e.g. \citep{schkoda2024causal}), and to the best of our knowledge, our work is the first one that does not assume it in the experiments.
Additionally, the results show that for commonly encountered data distributions, the parameters  $k$ (or $n$) often are a small number.


\section{Conclusion}
We studied the problem of causal effect identification from observational data collected from two environments.
We showed that when there is a single unknown change across two domains, we can detect
whether  the causal effect of the latent confounder on the treatment or the outcome had changed or  the distribution of the exogenous noises of the treatment or latent confounder varied between the domains. We established that if the change occurs in the causal effect of latent confounder on other variables, the treatment effect is uniquely identifiable. Otherwise, it can be recovered up to two possible candidates. Additionally, we provided an estimation procedure tailored to each scenario and proved a non-identifiability result for identifying the treatment effect if the distribution of both exogenous noises corresponding to the treatment and latent varied across domains. 

For future work, we shall explore generalizing our findings to more than two environments. While using our current results for each pair of environments, we can establish identifiability in multiple environments in some cases, there may be situations where the treatment effect is not identifiable through pairwise comparisons but becomes identifiable  when more environments are considered.



\bibliography{uai2025}

\newpage

\onecolumn

\title{Causal Effect Identification in Heterogeneous Environments from Higher-Order Moments\\(Supplementary Material)}
\maketitle
\appendix
\section{Proofs}
\label{apx:proofs}
In this section we provide the proofs for the theorems introduced in main part.
\thmcaseone*
\begin{proof}
    Since $\alpha$ is not changing across the domains, the causal effect of the latent confounder $ U^{(i)}$ on $T^{(i)} $ can always be set to one by appropriately rescaling both the latent confounder and $ \gamma$. Hence, we can rewrite the structural equations in the two domains as follows:
    \begin{equation*}
    \M^{(i)} = 
    \begin{cases}
            & U^{(i)} := \epsilon^{(i)}_u, \\
            & T^{(i)} := U^{(i)} +\epsilon^{(i)}_{t}, \\
            & Y^{(i)} := \beta T^{(i)} + \gamma U^{(i)} + \epsilon^{(i)}_y.
    \end{cases}  
    \end{equation*}
    \textbf{Remark:} \textit{Here $\gamma$ and $\eps_u^{(1)}$ are already rescaled, and are actually different from the one in SCM \ref{eq:case 1}.} 
    
    Let $k$ be the smallest positive integer such that $\E\left[\left(\epsilon^{(1)}_{t}\right)^k\right]\neq \E\left[\left(\epsilon^{(2)}_{t}\right)^k\right]$. Note, that such $k$ will always exists, since otherwise $\eps_t^{(1)}$ and $\eps_t^{(2)}$ would be equal as distributions due to Assumption \ref{ass:distribution via moments}. Then,
    \begin{equation}
    \begin{split}
        \E\left[(T^{(1)})^k - (T^{(2)})^k\right]& = \E\left[\left(\epsilon_u^{(1)}+\epsilon_t^{(1)}\right)^k-\left(\epsilon_u^{(2)}+\epsilon_t^{(2)}\right)^k\right]\\
        &=\sum_{j=0}^k \binom{k}{j} \left(\E\left[\left(\epsilon_u^{(1)}\right)^j\right]\E\left[\left(\epsilon_t^{(1)}\right)^{k-j}\right]
        -  \E\left[\left(\epsilon_u^{(2)}\right)^j\right]\E\left[\left(\epsilon_t^{(2)}\right)^{k-j}\right]\right)\\
        &= \E\left[\left(\epsilon_t^{(1)}\right)^k\right] - \E\left[\left(\epsilon_t^{(2)}\right)^k\right],
    \end{split}
    \label{eq:Tk}
    \end{equation}
    where the third equality is due to the fact the difference term in the sum is equal to zero for $j\neq 0$ as the distribution of $\epsilon_u$ is not changing across the two domains and $\E\left[\left(\epsilon^{(1)}_{t}\right)^j\right]= \E\left[\left(\epsilon^{(2)}_{t}\right)^j\right]$ for $j<k$ due to the definition of $k$. Now, 
    
    \begin{equation}
        \begin{split}
            &\E\left[Y^{(1)}(T^{(1)})^{k-1}- Y^{(2)}(T^{(2)})^{k-1}\right]=\\
            &=\E\left[\left((\beta + \gamma)\epsilon^{(1)}_u + \beta \epsilon^{(1)}_{t} + \epsilon^{(1)}_y\right)\left(\epsilon_u^{(1)}+\epsilon_t^{(1)}\right)^{k-1}\right]-\E\left[\left((\beta + \gamma)\epsilon^{(2)}_u + \beta \epsilon^{(2)}_{t} + \epsilon^{(2)}_y\right)\left(\epsilon_u^{(2)}+\epsilon_t^{(2)}\right)^{k-1}\right]\\
            &\overset{(a)}{=}(\beta+\gamma)\sum_{j=0}^{k-1} \binom{k-1}{j} \left(\E\left[\left(\epsilon_u^{(1)}\right)^{j+1}\right]\E\left[\left(\epsilon_t^{(1)}\right)^{k-1-j}\right] - \E\left[\left(\epsilon_u^{(2)}\right)^{j+1}\right]\E\left[\left(\epsilon_t^{(2)}\right)^{k-1-j}\right]\right)\\
            &\hspace{1cm}+\beta\sum_{j=0}^{k-1} \binom{k-1}{j} \left(\E\left[\left(\epsilon_u^{(1)}\right)^{j}\right]\E\left[\left(\epsilon_t^{(1)}\right)^{k-j}\right] - \E\left[\left(\epsilon_u^{(2)}\right)^{j}\right]\E\left[\left(\epsilon_t^{(2)}\right)^{k-j}\right]\right)\\
            &\overset{(b)}{=}\beta \left(\E\left[\left(\epsilon_t^{(1)}\right)^k\right] - \E\left[\left(\epsilon_t^{(2)}\right)^k\right]\right),
        \end{split}
    \label{eq:ytk-1}
    \end{equation}
    where $(a)$ is based on the fact that $\E\left[\epsilon^{(i)}_y\left(\epsilon_u^{(i)}+\epsilon_t^{(i)}\right)^{k-1}\right]=\E\left[\epsilon^{(i)}_y\right]E\left[\left(\epsilon_u^{(i)}+\epsilon_t^{(i)}\right)^{k-1}\right]=0$ for $i\in \{1,2\}$ as the exogenous noises are independent and mean zero. Moreover, $(b)$ is due to the fact that all difference terms in the first sum are zero and in the second sum, only the difference for $j=0$ is nonzero because of the definition of $k$ and unchanging distribution of $\epsilon_u$ across the domains. Now, by dividing 
    \eqref{eq:ytk-1} by \eqref{eq:Tk},
    we have:
    \begin{equation}
        \beta= \frac{\E\left[Y^{(1)}(T^{(1)})^{k-1}- Y^{(2)}(T^{(2)})^{k-1}\right]}{\E\left[(T^{(1)})^k - (T^{(2)})^k\right]},
    \end{equation}
    which shows the causal effect of $T$ on $Y$ is identifiable from the two domains in case only the distribution of $\epsilon_t$ is changing across the domains.
\end{proof}

\thmcasetwo*
\begin{proof}
    Similar to the theorem \ref{th:case 1}, we write $\E\left[(T^{(1)})^k - (T^{(2)})^k\right]$ and $\E\left[Y^{(1)}(T^{(1)})^{k-1}- Y^{(2)}(T^{(2)})^{k-1}\right]$ based on the moments of exogenous noises. Herein, let $k$ be the smallest positive integer such that $\E\left[\left(\epsilon^{(1)}_{u}\right)^k\right]\neq \E\left[\left(\epsilon^{(2)}_{u}\right)^k\right]$. Then,
\begin{equation}
\begin{split}
    \E\left[(T^{(1)})^k - (T^{(2)})^k\right]& = \E\left[\left(\epsilon_u^{(1)}+\epsilon_t^{(1)}\right)^k-\left(\epsilon_u^{(2)}+\epsilon_t^{(2)}\right)^k\right]\\
    &=\sum_{j=0}^k \binom{k}{j} \left(\E\left[\left(\epsilon_u^{(1)}\right)^j\right]\E\left[\left(\epsilon_t^{(1)}\right)^{k-j}\right]
    -  \E\left[\left(\epsilon_u^{(2)}\right)^j\right]\E\left[\left(\epsilon_t^{(2)}\right)^{k-j}\right]\right)\\
    &= \E\left[\left(\epsilon_u^{(1)}\right)^k\right] - \E\left[\left(\epsilon_u^{(2)}\right)^k\right],
\end{split}
\label{eq:Uk}
\end{equation}
where the third equality is due to the fact the difference term in the sum is equal to zero for $j\neq k$ as the distribution of $\epsilon_u$ is not changing across the two domains and $\E\left[\left(\epsilon^{(1)}_{u}\right)^j\right]= \E\left[\left(\epsilon^{(2)}_{u}\right)^j\right]$ for $j<k$ due to the definition of $k$. Moreover,
\begin{equation}
    \begin{split}
        &\E\left[Y^{(1)}(T^{(1)})^{k-1}- Y^{(2)}(T^{(2)})^{k-1}\right]=\\
        &=\E\left[\left((\beta + \gamma)\epsilon^{(1)}_u + \beta \epsilon^{(1)}_{t} + \epsilon^{(1)}_y\right)\left(\epsilon_u^{(1)}+\epsilon_t^{(1)}\right)^{k-1}\right]-\E\left[\left((\beta + \gamma)\epsilon^{(2)}_u + \beta \epsilon^{(2)}_{t} + \epsilon^{(2)}_y\right)\left(\epsilon_u^{(2)}+\epsilon_t^{(2)}\right)^{k-1}\right]\\
        &\overset{(a)}{=}(\beta+\gamma)\sum_{j=0}^{k-1} \binom{k-1}{j} \left(\E\left[\left(\epsilon_u^{(1)}\right)^{j+1}\right]\E\left[\left(\epsilon_t^{(1)}\right)^{k-1-j}\right] - \E\left[\left(\epsilon_u^{(2)}\right)^{j+1}\right]\E\left[\left(\epsilon_t^{(2)}\right)^{k-1-j}\right]\right)\\
        &\hspace{1cm}+\beta\sum_{j=0}^{k-1} \binom{k-1}{j} \left(\E\left[\left(\epsilon_u^{(1)}\right)^{j}\right]\E\left[\left(\epsilon_t^{(1)}\right)^{k-j}\right] - \E\left[\left(\epsilon_u^{(2)}\right)^{j}\right]\E\left[\left(\epsilon_t^{(2)}\right)^{k-j}\right]\right)\\
        &\overset{(b)}{=}(\beta+\gamma) \left(\E\left[\left(\epsilon_u^{(1)}\right)^k\right] - \E\left[\left(\epsilon_u^{(2)}\right)^k\right]\right),
    \end{split}
\label{eq:ytk-1_u}
\end{equation}
where $(a)$ is based on the fact that $\E\left[\epsilon^{(i)}_y\left(\epsilon_u^{(i)}+\epsilon_t^{(i)}\right)^{k-1}\right]=\E\left[\epsilon^{(i)}_y\right]E\left[\left(\epsilon_u^{(i)}+\epsilon_t^{(i)}\right)^{k-1}\right]=0$ for $i\in \{1,2\}$ as the exogenous noises are independent and mean zero. Moreover, $(b)$ is due to the fact that all difference terms in the second sum are zero, and in the first sum, only the difference for $j=k$ is nonzero because of the definition of $k$ and unchanging distribution of $\epsilon_t$ across the domains. Therefore, based on \eqref{eq:Uk} and \eqref{eq:ytk-1_u}, we can obtain the value of $\beta+\gamma$ as follows:

\begin{equation}
    \beta+\gamma= \frac{\E\left[Y^{(1)}(D^{(1)})^{k-1}- Y^{(2)}(D^{(2)})^{k-1}\right]}{\E\left[(D^{(1)})^k - (D^{(2)})^k\right]}.
\end{equation}

Now, in any domain $i\in\{1,2\}$, consider the following two equations:
\begin{equation*}
    \begin{cases}
        & T^{(i)} := U^{(i)}  +\epsilon^{(i)}_{t} = \epsilon^{(i)}_u +\epsilon^{(i)}_{t}, \\
        & Y^{(i)}-(\beta+\gamma)T^{(i)}=-\gamma\epsilon_t^{(i)}+\epsilon_y^{(i)}.
    \end{cases}
    \end{equation*}
Utilizing the cross-moment approach \citep{kivva2024cross} and more specifically \cite{kivva2024cross}[Theorem 1], we can identify the value of $\gamma$ from $T^{(i)}$ and $Y^{(i)}-(\beta+\gamma)T^{(i)}$ in any domain $i\in \{1,2\}$ given the assumption on $\epsilon_t$ in theorem. Therefore, this finishes the proof that $\beta$ is uniquely identifiable.
\end{proof}

\thmcasethree*
\begin{proof}
    \begin{equation*}
    \M^{(i)}
    \begin{cases}
            & U^{(i)} := \epsilon^{(i)}_u, \\
            & T^{(i)} := U^{(i)} +\epsilon^{(i)}_{t}, \\
            & Y^{(i)} := \beta T^{(i)} + \gamma^{(i)} U^{(i)} + \epsilon^{(i)}_y,
    \end{cases}  
    \end{equation*}
    For ease of notation, we omit superscripts corresponding to the exogenous noises where this is not important, since their distributions remain the same across the domains.
    Then,
    \begin{align*}
        & \E\left[Y^{(2)}T^{(2)} - Y^{(1)}T^{(1)}\right] = \left(\gamma^{(2)} - \gamma^{(1)}\right)\E\left[\eps_u^2\right],\\
        & \E\left[\left(Y^{(2)}\right)^2 - \left(Y^{(1)}\right)^2\right] = \left(\left(\beta+\gamma^{(2)}\right)^2 - \left(\beta+\gamma^{(1)}\right)^2\right)\E\left[\eps_u^2\right] = \left(\gamma^{(2)}-\gamma^{(1)}\right)\left(2\beta + \gamma^{(1)} + \gamma^{(2)}\right)\E\left[\eps_u^2\right].
    \end{align*}
    Note that in the above equations all the terms corresponding to the exogenous noises of observed variables are canceled out. From the two equations, we can compute $2\beta + \gamma^{(1)} + \gamma^{(2)}$ from the observational distribtuion.
    
    Let us define the random variables $X^{(1)}$ and $X^{(2)}$ as follows:
    \begin{align*}
        & X^{(1)} = \left(2\beta + \gamma^{(1)} + \gamma^{(2)}\right)T^{(1)} - 2Y^{(1)} =  \left(2\beta + \gamma^{(1)} + \gamma^{(2)}\right)\left(\eps_u^{(1)} + \eps_t^{(1)}\right) - 2\left(\left(\beta + \gamma^{(1)}\right)\eps_u^{(1)} + \beta\eps_t^{(1)} + \eps_y^{(1)}\right),\\
        & X^{(2)} = \left(2\beta + \gamma^{(1)} + \gamma^{(2)}\right)T^{(2)} - 2Y^{(2)} =  \left(2\beta + \gamma^{(1)} + \gamma^{(2)}\right)\left(\eps_u^{(2)} + \eps_t^{(2)}\right) - 2\left(\left(\beta + \gamma^{(2)}\right)\eps_u^{(2)} + \beta\eps_t^{(2)} + \eps_y^{(2)}\right) .
    \end{align*}
    Let us define $a = \gamma^{(2)} - \gamma^{(1)}$, $b = \gamma^{(2)} + \gamma^{(1)}$, $\eps^{(i)} = -2\eps_y^{(i)}$. Then,
    \begin{align*}
        & X^{(1)}  = \left(\gamma^{(2)} - \gamma^{(1)}\right)\eps_u^{(1)} + \left(\gamma^{(1)}+\gamma^{(2)}\right)\eps_t^{(1)} - 2\eps_y^{(1)}
        = a\eps_u^{(1)} + b\eps_t^{(1)} + \eps^{(1)},\\
        & X^{(2)}  = \left(\gamma^{(1)} - \gamma^{(2)}\right)\eps_u^{(2)} + \left(\gamma^{(1)}+\gamma^{(2)}\right)\eps_t^{(2)} - 2\eps_y^{(2)} = -a\eps_u^{(2)} + b\eps_t^{(2)} + \eps^{(2)}.
    \end{align*}
    Let us consider the following expectations:
    \begin{align*}
        & \E\left[\left(T^{(1)}\right)^2\right] = \E\left[\left(T^{(2)}\right)^2\right] = \E\left[\eps_u^2\right] + \E\left[\eps_t^2\right], \\
        & \E\left[T^{(1)} X^{(1)}\right] = \left(\gamma^{(2)} - \gamma^{(1)}\right)\E\left[\eps_u^2\right] + \left(\gamma^{(1)} + \gamma^{(2)}\right) \E\left[\eps_t^2\right] = a\E\left[\eps_u^2\right] + b\E\left[\eps_t^2\right],\\
        & \E\left[T^{(2)} X^{(2)}\right] = \left(\gamma^{(1)} - \gamma^{(2)}\right)\E\left[\eps_u^2\right] + \left(\gamma^{(1)} + \gamma^{(2)}\right) \E\left[\eps_t^2\right] = -a\E\left[\eps_u^2\right] + b\E\left[\eps_t^2\right].
    \end{align*}
    The difference and sum of the expectations above give us:
    \begin{align}
        & \hat{a} := \frac{1}{2} \left(\E\left[T^{(1)}X^{(1)} - T^{(2)}X^{(2)}\right]\right) = \left(\gamma^{(2)} - \gamma^{(1)}\right) \E\left[\eps_u^2\right] = a \E\left[\eps_u^2\right], \label{eq: sign a}\\
        & \hat{b} := \frac{1}{2} \left(\E\left[T^{(1)}X^{(1)} - T^{(2)}X^{(2)}\right]\right) = \left(\gamma^{(1)}+\gamma^{(2)}\right) \E\left[\eps_t^2\right] = b \E\left[\eps_t^2\right]. \label{eq: sign b}
    \end{align}
    Note that from \eqref{eq: sign a}-\eqref{eq: sign b}, we can deduce the sign of $a$ and $b$.
    Now, we show how to obtain either $a$ or $b$. If we recover $b=\gamma^{(1)} + \gamma^{(2)}$, then from knowing $\beta + \gamma^{(1)} + \gamma^{(2)}$ we compute $\beta$. If we recover $a$, then from knowing $\beta + \gamma^{(1)} + \gamma^{(2)}$ we can compute $\beta + \gamma^{(i)}$. From this point, we can use the same method proposed in proof of theorem  \ref{th:case 2} to compute $\beta$.

    Consider,
    \begin{align}
        & \E\left[\left(X^{(1)}\right)^3\right] = a^3\E\left[\eps_u^3\right] + b^3\E\left[\eps_t^3\right] + \E\left[\eps^3\right], \\
        & \E\left[\left(X^{(2)}\right)^3\right] = -a^3\E\left[\eps_u^3\right] + b^3\E\left[\eps_t^3\right] + \E\left[\eps^3\right], \\
        & \implies\E\left[ \left(X^{(1)}\right)^3 - \left(X^{(2)}\right)^3\right] = a^3\E\left[\eps_u^3\right]. \label{eq: a^3 eps_u^3}
    \end{align}
    On the other hand,
    \begin{align}
        & \E\left[X^{(1)}\left(T^{(1)}\right)^2\right] = a\E\left[\eps_u^3\right] + b \E\left[\eps_t^3\right]\\
        & \E\left[X^{(2)} \left(T^{(2)}\right)^2\right] = -a\E\left[\eps_u^3\right] + b\E\left[\eps_t^3\right] \\
        & \E\left[ \left(X^{(1)}\right)\left(T^{(1)}\right)^2 - \left(X^{(2)}\right)\left(T^{(2)}\right)^2\right]= a\E\left[\eps_u^3\right]. \label{eq: a eps_u_3}
    \end{align}
    \textbf{Case 1:} $\E\left[\eps_u^3\right]\neq 0$. Then the ratio between \eqref{eq: a^3 eps_u^3} and \eqref{eq: a eps_u_3} gives $a^2$. We can identify the value of $a$ as we know the sign of $a$ from \eqref{eq: sign a}. 
    \\
    \textbf{Case 2:} $\E\left[\eps_u^3\right]=0$ and $\E\left[\eps_t^3\right]\neq0$. Then:
    \begin{align*}
        & \E\left[X^{(1)}\left(T^{(1)}\right)^2\right] = a\E\left[\eps_u^3\right] + b \E\left[\eps_t^3\right] = b\E\left[\eps_t^3\right], \\
        & \E\left[\left(X^{(1)}\right)^2T^{(1)}\right] = a^2\E\left[\eps_u^3\right] + b^2\E\left[\eps_t^3\right] = b^2\E\left[\eps_t^3\right].
    \end{align*}
    From the two above equations, we can compute the value of $b$.
    \\
    \textbf{Case 3:} $\E\left[\eps_u^3\right]=0$,  $\E\left[\epsilon_t^3\right]=0$ and $n\in \mathbb{N}$ - the smallest number such that one of the following equations hold:
    \begin{itemize}
        \item $\E\left[\eps_u^n\right]\neq(n-1)\E\left[\eps_u^{n-2}\right]\E\left[\eps_u^2\right]$.
        \item $\E\left[\eps_t^n\right]\neq(n-1)\E\left[\eps_t^{n-2}\right]\E\left[\eps_t^2\right]$.
    \end{itemize}
    Then
    \begin{align*}
        & \E\left[\left(T^{(1)}\right)^{n-1}X^{(1)}\right] = \E\left[\left(a\eps_u + b\eps_t + \eps\right)\left(\eps_u + \eps_t\right)^{n-1}\right] \\
        & \overset{*}{=} a\sum_{k=0}^{n-1}\binom{n-1}{k}\E\left[\eps_u^{k+1}\right]\E\left[\eps_t^{n-k-1}\right]  + b\sum_{k=0}^{n-1}\binom{n-1}{k}\E\left[\eps_{t}^{k+1}\right]\E\left[\eps_u^{n-k-1}\right]\\
        & \overset{**}{=} a\sum_{k=1}^{n-1}\binom{n-1}{k}\E\left[\eps_u^{k+1}\right]\E\left[\eps_t^{n-k-1}\right]  + b\sum_{k=1}^{n-1}\binom{n-1}{k}\E\left[\eps_{t}^{k+1}\right]\E\left[\eps_u^{n-k-1}\right],
    \end{align*}
    where $(*)$ and $(**)$ are based on the facts that exogenous noises are independent and have mean zero. 
    On the other hand,
    \begin{equation}
    \label{eq: (n-1)a eps_u^2 eps_u^(n-2)}
    \begin{split}
        & (n-1)a\E\left[\eps_u^2\right]\E\left[\left(T^{(1)}\right)^{n-2}\right] = (n-1)a\E\left[\eps_u^2\right]\E\left[\left(\eps_u + \eps_t\right)^{n-2}\right] \\
        & = (n-1)a\E\left[\eps_u^2\right]\sum_{k=0}^{n-2}\binom{n-2}{k}\E\left[\eps_u^k\right] \E\left[\eps_t^{n-k-2}\right] \\
        & = a\sum_{k=1}^{n-1} k \frac{(n-1)!}{k!(n-1-k)!}\E\left[\eps_u^2\right] \E\left[\eps_u^{k-1}\right]\E\left[\eps_t^{n-k-1}\right] \\
        & = a\sum_{k=1}^{n-1} k\binom{n-1}{k} \E\left[\eps_u^2\right] \E\left[\eps_u^{k-1}\right]\E\left[\eps_t^{n-k-1}\right].  
    \end{split}
    \end{equation}
    Note that $T^{(1)}$ is symmetric with respect to the exogenous noises $\eps_{u}^{(1)}$ and $\eps_{t}^{(1)}$. Therefore we can obtain similar equation to \eqref{eq: (n-1)a eps_u^2 eps_u^(n-2)}, where $\eps_{u}^{(1)}$ and $\eps_{t}^{(1)}$ are swapped. Hence, combining it with the knowledge that $\E\left[\eps_u^k\right]=(k-1)\E\left[\eps_u^{k-2}\right]\E\left[\eps_u^2\right]$ and $\E\left[\eps_t^k\right]=(k-1)\E\left[\eps_t^{k-2}\right]\E\left[\eps_t^2\right]$ for all $k<n$ we obtain
    \begin{equation}
    \label{eq: domain 1 main equation for a + b}
    \begin{split}
        & \E\left[\left(T^{(1)}\right)^{n-1}X^{(1)}\right] - (n-1)a\E\left[\eps_u^2\right]\E\left[\left(T^{(1)}\right)^{n-2}\right] - (n-1)b\E\left[\eps_t^2\right]\E\left[\left(T^{(1)}\right)^{n-2}\right]\\
        & = a\left(\E\left[\eps_u^{n}\right] - (n-1)\E\left[\eps_u^2\right] \E\left[\eps_u^{n-2}\right]\right) + b\left(\E\left[\eps_t^{n}\right] - (n-1)\E\left[\eps_t^2\right]\E\left[\eps_t^{n-2}\right]\right).
    \end{split}
    \end{equation}
    Similarly we obtain, 
    \begin{equation}
    \label{eq: domain 2 main equation for -a + b}
    \begin{split}
        & \E\left[X^{(2)}\left(T^{(2)}\right)^{n-1}\right] - (n-1)(-a)\E\left[\eps_u^2\right]\E\left[\left(T^{(2)}\right)^{n-2}\right] - (n-1)b\E\left[\eps_t^2\right]\E\left[\left(T^{(2)}\right)^{n-2}\right] \\
        & = -a\left(\E\left[\eps_u^{n}\right] - (n-1)\E\left[\eps_u^2\right]\E\left[\eps_u^{n-2}\right]\right) + b\left(\E\left[\eps_t^{n}\right] - (n-1)\E\left[\eps_t^2\right]\E\left[\eps_t^{n-2}\right]\right).
    \end{split}
    \end{equation}
    
    Now we will compute:
    \begin{align}
        \label{eq: first domain main case 3}
        & \E\left[\left(X^{(1)}\right)^{n-1}T^{(1)}\right] - (n-1)a\E\left[\eps_u^2\right]\E\left[\left(X^{(1)}\right)^{n-2}\right] - (n-1)b\E\left[\eps_t^2\right]\E\left[\left(X^{(1)}\right)^{n-2}\right] \\
        \label{eq: second domain main case 3}
        & \E\left[\left(X^{(2)}\right)^{n-1}T^{(2)}\right] - (n-1)(-a)\E\left[\eps_u^2\right]\E\left[\left(X^{(2)}\right)^{n-2}\right] - (n-1)b\E\left[\eps_t^2\right]\E\left[\left(X^{(2)}\right)^{n-2}\right]
    \end{align}
    
    For the $\E\left[\left(X^{(1)}\right)^{n-1}T^{(1)}\right]$, we have:
    \begin{align*}
        & \E\left[\left(X^{(1)}\right)^{n-1}T^{(1)}\right] = \E\left[\left(a\eps_u^{(1)} + b\eps_t^{(1)} + \eps^{(1)}\right)^{n-1}\left(\eps_u^{(1)}+\eps_t^{(1)}\right)\right] \\
        & \overset{*}{=} \E\left[\eps_u(a\eps_u + b\eps_t + \eps)^{n-1} + \eps_t(a\eps_u + b\eps_t + \eps)^{n-1}\right].
    \end{align*}
    Note that in $(*)$ we omit superscript $(1)$ since the moments of the exogenous noises are equal across domains. Then,
    \begin{align*}
        & \E\left[\eps_u(a\eps_u + b\eps_t + \eps)^{n-1}\right] = \sum_{m=0}^{n-1}\E\left[\eps^{m}\right] \binom{n-1}{m}\E\left[\eps_u(a\eps_u + b\eps_t)^{n-1-m}\right] \\
        & = \sum_{m=0}^{n-1} \E\left[\eps^{m}\right] \binom{n-1}{m} \sum_{k=1}^{n-m}a^{k-1}b^{n-m-k}\E\left[\eps_u^k\right]\E\left[\eps_t^{n-m-k}\right]\binom{n-1-m}{k-1}\\
        & \overset{*}{=} \sum_{m=0}^{n-2} \E\left[\eps^{m}\right] \binom{n-1}{m} \sum_{k=2}^{n-m}a^{k-1}b^{n-m-k}\E\left[\eps_u^k\right]\E\left[\eps_t^{n-m-k}\right]\binom{n-1-m}{k-1}.
    \end{align*}
    In the last equality $(*)$, the term in the second summation corresponding to $k=1$ is equal to zero due to the fact that exogenous noises have zero mean.
    
    For the $(n-1)a\E\left[\eps_u^2\right]\E\left[\right(X^{(1)}\left)^{n-2}\right]$ we have:
    \begin{align*}
        & (n-1)a\E\left[\eps_u^2\right]\E\left[\left(X^{(1)}\right)^{n-2}\right] = (n-1)a\E\left[\eps_u^2\right] \E\left[\left(a\eps_u + b\eps_t + \eps\right)^{n-2}\right]\\
        & = (n-1)\sum_{m=0}^{n-2}\E\left[\eps^{m}\right]\binom{n-2}{m}\sum_{k=0}^{n-m-2}a^{k+1}b^{n-m-k-2}\E\left[\eps_u^2\right]\E\left[\eps_u^{k}\right]\E\left[\eps_t^{n-m-k-2}\right]\binom{n-m-2}{k}\\
        & = (n-1)\sum_{m=0}^{n-2}\E\left[\eps^{m}\right]\binom{n-2}{m}\sum_{k=2}^{n-m}a^{k-1}b^{n-m-k}\E\left[\eps_u^2\right]\E\left[\eps_u^{k-2}\right]\E\left[\eps_t^{n-m-k}\right]\binom{n-m-2}{k-2}\\
        & = (n-1)\sum_{m=0}^{n-2}\E\left[\eps^{m}\right]\binom{n-1}{m}\frac{n-1-m}{n-1}\sum_{k=2}^{n-m}a^{k-1}b^{n-m-k}\E\left[\eps_u^2\right]\E\left[\eps_u^{k-2}\right]\E\left[\eps_t^{n-m-k}\right]\binom{n-m-1}{k-1}\frac{k-1}{n-1-m}\\
        & = \sum_{m=0}^{n-2}\E\left[\eps^{m}\right]\binom{n-1}{m}\sum_{k=2}^{n-m}a^{k-1}b^{n-m-k}(k-1)\E\left[\eps_u^2\right]\E\left[\eps_u^{k-2}\right]\E\left[\eps_t^{n-m-k}\right]\binom{n-m-1}{k-1}.
    \end{align*}
    Note that $\eps_t(a\eps_u+b\eps_t + \eps)^{n-1}$ can be obtained from $\eps_u(a\eps_u+b\eps_t + \eps)^{n-1}$ by substitutions $\eps_u \leftrightarrow \eps_t$ and $a \leftrightarrow b$. Hence we have:
    \begin{align*}
        & \E\left[\eps_t\left(a\eps_u + b\eps_t + \eps\right)^{n-1}\right]  = \sum_{m=0}^{n-2} \E\left[\eps^{m}\right] \binom{n-1}{m} \sum_{k=2}^{n-m}b^{k-1}a^{n-m-k}\E\left[\eps_t^k\right]\E\left[\eps_u^{n-m-k}\right]\binom{n-1-m}{k-1}.
    \end{align*}
    With similar logic,
    \begin{align*}
         (n-1)b\E\left[\eps_t^2\right]\E\Big[\big(X^{(1)}&\big)^{n-2}\Big] = \\&\sum_{m=0}^{n-2}\E\left[\eps^{m}\right]\binom{n-1}{m}\sum_{k=2}^{n-m}b^{k-1}a^{n-m-k}(k-1)\E\left[\eps_t^2\right]\E\left[\eps_t^{k-2}\right]\E\left[\eps_u^{n-m-k}\right]\binom{n-m-1}{k-1}.
    \end{align*}
    
    Note that $\E\left[\eps_{u}^{k} \right] = \left( k-1\right)\E\left[ \eps_u^{k-2}\right] \E \left[ \eps_u^2\right]$ and $\E\left[\eps_{t}^{k} \right] = \left( k-1\right)\E\left[ \eps_t^{k-2}\right] \E \left[ \eps_t^2\right]$ for all $k<n$. Therefore \eqref{eq: first domain main case 3} can be simplified as,
    \begin{equation}
    \label{eq: domain 1 main for a^(n-1) + b^(n-1)}
    \begin{split}
        & \E\left[\left(X^{(1)}\right)^{n-1}T^{(1)}\right] - (n-1)a\E\left[\eps_u^2\right]\E\left[\left(X^{(1)}\right)^{n-2}\right] - (n-1)b\E\left[\eps_t^2\right]\E\left[\left(X^{(1)}\right)^{n-2}\right] \\
        & = a^{n-1}\left(\E\left[\eps_u^{n}\right] - (n-1)\E\left[\eps_u^2\right]\E\left[\eps_u^{n-2}\right]\right) + b^{n-1}\left(\E\left[\eps_t^{n}\right] - (n-1)\E\left[\eps_t^2\right]\E\left[\eps_t^{n-2}\right]\right). 
    \end{split}
    \end{equation}
    Since the second domain can be obtained from the first by simple substitution of $a \leftrightarrow -a$, hence
    \begin{equation}
    \label{eq: domain 2 main for (-a)^(n-1) + b^(n-1)}
    \begin{split}
        & \E\left[\left(X^{(2)}\right)^{n-1}T^{(2)}\right] - (n-1)a\E\left[\eps_u^2\right]\E\left[\left(X^{(2)}\right)^{n-2}\right] - (n-1)b\E\left[\eps_t^2\right]\E\left[\left(X^{(2)}\right)^{n-2}\right] \\
        & = (-a)^{n-1}\left(\E\left[\eps_u^{n}\right] - (n-1)\E\left[\eps_u^2\right]\E\left[\eps_u^{n-2}\right]\right) + b^{n-1}\left(\E\left[\eps_t^{n}\right] - (n-1)\E\left[\eps_t^2\right]\E\left[\eps_t^{n-2}\right]\right). 
    \end{split}
    \end{equation}
    If $n$ is even then we can recover $a$ or $b$ from (\eqref{eq: domain 1 main for a^(n-1) + b^(n-1)} - \eqref{eq: domain 2 main for (-a)^(n-1) + b^(n-1)}) \textbackslash (\eqref{eq: domain 1 main equation for a + b} - \eqref{eq: domain 2 main equation for -a + b}) or (\eqref{eq: domain 1 main for a^(n-1) + b^(n-1)} + \eqref{eq: domain 2 main for (-a)^(n-1) + b^(n-1)}) \textbackslash (\eqref{eq: domain 1 main equation for a + b} + \eqref{eq: domain 2 main equation for -a + b}),respectively. 
    
    If $n$ is odd, then $\E[\eps_u^{k}] = 0$ and $\E[\eps_t^{k}] = 0$ for all $k$ natural odd numbers smaller than $n$. Hence,
    \begin{align*}
        & \E\left[\left(T^{(1)}\right)^{n-1}X^{(1)}\right] = a\E\left[\eps_u^n\right] + b\E\left[\eps_t^n\right],\\
        & \E\left[\left(T^{(2)}\right)^{n-1}X^{(2)}\right] = -a\E\left[\eps_u^n\right] + b\E\left[\eps_t^n\right],\\
        & \E\left[\left(T^{(1)}\right)^{n-3}\left(X^{(1)}\right)^3\right] - \E\left[\left(T^{(2)}\right)^{n-3}\left(X^{(2)}\right)^3\right] = 2a^3\E\left[\eps_u^n\right].\\
    \end{align*}
    The last equation is easy to verify, since all other terms of $\E\left[\left(T^{(1)}\right)^{n-3}\left(X^{(1)}\right)^3\right]$ except $a^3\E\left[\eps_u^n\right]$ are equal to zero or have identical one in $\E\left[\left(T^{(2)}\right)^{n-3}\left(X^{(2)}\right)^3\right]$. If $\E\left[\eps_u^n\right]\neq 0$ then we can compute $a$. In case when $\E\left[\eps_u^n\right]=0$ we additionally compute the following expressions,
    \begin{align*}
        & \E\left[\left(T^{(1)}\right)^{n-1}X^{(1)}\right] = a\E\left[\eps_u^n\right] + b\E\left[\eps_t^n\right],\\
        & \E\left[\left(T^{(1)}\right)^{n-2}\left(X^{(2)}\right)^2\right] = a^2\E\left[\eps_u^n\right] + b^2\E\left[\eps_t^n\right].
    \end{align*}
    Since $\E\left[\eps_u^n\right]=0$, then from the above equations we can recover $b$.
\end{proof}

\thmcasefour*
\begin{proof}
    \begin{equation*}
    \M^{(i)}
    \begin{cases}
            & U^{(i)} := \epsilon^{(i)}_u, \\
            & T^{(i)} := \alpha^{(i)} U^{(i)} +\epsilon^{(i)}_{t}, \\
            & Y^{(i)} := \beta T^{(i)} + \gamma U^{(i)} + \epsilon^{(i)}_y.
    \end{cases}  
    \end{equation*}

     Let us we consider the following quadratic equation with respect to parameter $\hat{\beta}$
    \begin{equation*}
        \E\left[ \left(Y^{(1)} - \hat{\beta} T^{(1)}\right)^2 \right] - \E\left[ \left(Y^{(2)} - \hat{\beta} T^{(2)}\right)^2 \right] = 0,
    \end{equation*}
    that simplifies as
    \begin{equation}
        \label{eq:case 4 quadratic eq}
        \E\left[\left(Y^{(1)}\right)^2 - \left(Y^{(2)}\right)^2 \right]\hat{\beta}^2 - 2\E\left[Y^{(1)}T^{(1)} - Y^{(2)}T^{(2)}\right]\hat{\beta} + \E\left[\left(T^{(1)}\right)^2 - \left(T^{(2)}\right)^2 \right]=0
    \end{equation}
    It easy to see that the following equations holds
    \begin{equation*}
        Y^{(1)} - \beta T^{(1)} = Y^{(2)} - \beta T^{(2)},
    \end{equation*}
    so $\beta$ will be one of the roots of the Eq. \eqref{eq:case 4 quadratic eq}. 
    
    Let us suppose $\beta^*$ is one of the roots of Eq. \eqref{eq:case 4 quadratic eq} and $X^{(1)}, X^{(2)}$ are defined as follows
    \begin{equation*}
        X^{(i)} = Y^{(i)} - \beta^* T^{(i)} = \left(\left(\beta - \beta^*\right)\alpha^{(i)} + \gamma\right)\eps_u^{(i)} + \left(\beta - \beta^*\right)\eps_t^{(i)} + \eps_y^{(i)}.
    \end{equation*}
    For simplicity of notation let us define $a^{(i)}:= \left(\beta - \beta^*\right)\alpha^{(i)} + \gamma$, $b:= \beta - \beta^*$, and so
    \begin{equation*}
        X^{(i)} = a^{(i)}\eps_u^{(i)} + b\eps_t^{(i)} + \eps_y^{(i)}.
    \end{equation*}
    Since $\E\left[\left(X^{(1)}\right)^2\right] = \E\left[\left(X^{(2)}\right)^2\right]$ it implies that $\left(a^{(1)}\right)^2 = \left(a^{(2)}\right)^2$. In case, when $\beta^* \neq \beta$ it only possible that \mbox{$a^{(1)}=-a^{(2)}$}, so 
    \begin{align}
        & \left(\beta - \beta^*\right)\alpha^{(1)} + \gamma = -\left( \left(\beta - \beta^*\right)\alpha^{(2)} + \gamma \right)\\
        & \Longrightarrow \alpha^{(1)} + \alpha^{(2)} = -2\frac{\gamma}{\beta - \beta^*} \neq 0
        \label{eq:case 4:a^(1) neq a^(2)}
    \end{align}

    Additionally, we have:
    \begin{equation}
    \label{eq:case4:X^{n-1}T}
    \begin{split}
        & \E\left[ \left(X^{(i)}\right)^{n-1} T^{(i)}\right] = \E \left[\left( \alpha^{(i)}\eps_u +  \eps_t\right)\left(a^{(i)}\eps_u + b\eps_t + \eps_y\right)^{n-1} \right]\\
        & =\alpha^{(i)}\E\left[\eps_u\left(a^{(i)}\eps_u + b\eps_t + \eps_y\right)^{n-1}\right] + \E\left[\eps_t\left(a^{(i)}\eps_u + b\eps_t + \eps_y\right)^{n-1}\right].
    \end{split}
    \end{equation}
    As it was done in the proof of theorem \ref{th:case 3} we can get 
    \begin{equation}
    \label{eq:case4:X^{n-1}T first}
    \begin{split}
        & \E\left[\eps_u(a^{(i)}\eps_u + b\eps_t + \eps_y)^{n-1}\right] = \sum_{m=0}^{n-1}\E\left[\eps_y^{m}\right] \binom{n-1}{m}\E\left[\eps_u(a^{(i)}\eps_u + b\eps_t)^{n-1-m}\right] \\
        & = \sum_{m=0}^{n-1} \E\left[\eps_y^{m}\right] \binom{n-1}{m} \sum_{k=1}^{n-m}(a^{(i)})^{k-1}b^{n-m-k}\E\left[\eps_u^k\right]\E\left[\eps_t^{n-m-k}\right]\binom{n-1-m}{k-1}\\
        & \overset{*}{=} \sum_{m=0}^{n-2} \E\left[\eps_y^{m}\right] \binom{n-1}{m} \sum_{k=2}^{n-m}(a^{(i)})^{k-1}b^{n-m-k}\E\left[\eps_u^k\right]\E\left[\eps_t^{n-m-k}\right]\binom{n-1-m}{k-1},
    \end{split}
    \end{equation}
    and 
    \begin{equation}
    \label{eq:case4:X^{n-2} first}
    \begin{split}
        & (n-1)a^{(i)}\E\left[\eps_u^2\right]\E\left[\left(X^{(i)}\right)^{n-2}\right] = (n-1)a^{(i)}\E\left[\eps_u^2\right] \E\left[\left(a^{(i)}\eps_u + b\eps_t + \eps_y\right)^{n-2}\right]\\
        & = (n-1)\sum_{m=0}^{n-2}\E\left[\eps_y^{m}\right]\binom{n-2}{m}\sum_{k=0}^{n-m-2}(a^{(i)})^{k+1}b^{n-m-k-2}\E\left[\eps_u^2\right]\E\left[\eps_u^{k}\right]\E\left[\eps_t^{n-m-k-2}\right]\binom{n-m-2}{k}\\
        & = (n-1)\sum_{m=0}^{n-2}\E\left[\eps_y^{m}\right]\binom{n-2}{m}\sum_{k=2}^{n-m}(a^{(i)})^{k-1}b^{n-m-k}\E\left[\eps_u^2\right]\E\left[\eps_u^{k-2}\right]\E\left[\eps_t^{n-m-k}\right]\binom{n-m-2}{k-2}\\
        & = (n-1)\sum_{m=0}^{n-2}\E\left[\eps_y^{m}\right]\binom{n-1}{m}\frac{n-1-m}{n-1}\sum_{k=2}^{n-m}(a^{(i)})^{k-1}b^{n-m-k}\E\left[\eps_u^2\right]\E\left[\eps_u^{k-2}\right]\E\left[\eps_t^{n-m-k}\right]\binom{n-m-1}{k-1}\frac{k-1}{n-1-m}\\
        & = \sum_{m=0}^{n-2}\E\left[\eps_y^{m}\right]\binom{n-1}{m}\sum_{k=2}^{n-m}(a^{(i)})^{k-1}b^{n-m-k}(k-1)\E\left[\eps_u^2\right]\E\left[\eps_u^{k-2}\right]\E\left[\eps_t^{n-m-k}\right]\binom{n-m-1}{k-1}.
    \end{split}
    \end{equation}
    Note that the similar equations can be obtained for $\E\left[\eps_t(a^{(i)}\eps_u + b\eps_t + \eps_y)^{n-1}\right]$ and $(n-1)b\E\left[\eps_t^2\right]\E\left[\left(X^{(i)}\right)^{n-2}\right]$ through the substitutions $\eps_u \leftrightarrow \eps_t$ and $a^{(i)} \leftrightarrow b$. Moreover,
    \begin{equation}
    \label{eq:case4:XT}
    \begin{split}
        \E\left[ X^{(i)} T\right] = \alpha^{(i)} a^{(i)} \E\left[\eps_u^2\right] + b\E\left[\eps_t^2\right]
    \end{split}
    \end{equation}
    
    Then combining the Equations \eqref{eq:case4:X^{n-1}T}-\eqref{eq:case4:XT} we obtain
    \begin{equation}
        \label{eq:case4:main property}
    \begin{split}
        & \Phi^{(i)}(\beta^{*}) := \E\left[\left(X^{(i)}\right)^{n-1} T^{(i)} \right] - (n-1)\E\left[ X^{(i)} T\right]\E\left[\left(X^{(i)}\right)^{n-2}\right] \\
        & = \alpha^{(i)}(a^{(i)})^{n-1}\left(\E\left[\eps_u^n\right] - (n-1)\E\left[\eps_u^{n-2}\right]\E\left[\eps_u^2\right] \right) + b^{n-1}\left(\E\left[\eps_t^n\right] - (n-1)\E\left[\eps_t^{n-2}\right]\E\left[\eps_t^2\right] \right)
    \end{split}
    \end{equation}
    Note that $b = 0$ for $\beta^* = \beta$. 
    
    Suppose that $n$ is the smallest natural number such that $\Phi^{(i)}(\beta^*)\neq 0$ for some $i$. This also implies that one of the following inequalities holds
    \begin{itemize}
        \item $\E\left[\eps_u^n\right]\neq(n-1)\E\left[\eps_u^{n-2}\right]\E\left[\eps_u^2\right]$,
        \item $\E\left[\eps_t^n\right]\neq(n-1)\E\left[\eps_t^{n-2}\right]\E\left[\eps_t^2\right]$.
    \end{itemize}
    Then there are possible the following cases.
    
    1. $\E\left[\eps_u^n\right] - (n-1)\E\left[\eps_u^{n-2}\right]\E\left[\eps_u^2\right] = 0$, then $\Phi^{(1)}(\beta^{*}) = \Phi^{(2)}(\beta^{*})\neq 0$. However the last equation for $\Phi^{(i)}$ can not happen if $\beta^* = \beta$. Indeed, if $\beta^* = \beta$ then
    \begin{equation*}
    \begin{split}
        & \Phi^{(i)}(\beta^{*}) = \alpha^{(i)}\gamma^{n-1}\left(\E\left[\eps_u^n\right] - (n-1)\E\left[\eps_u^{n-2}\right]\E\left[\eps_u^2\right] \right).
    \end{split}
    \end{equation*}
    Moreover,
    \begin{equation*}
        \Phi^{(1)}(\beta^{*}) -  \Phi^{(2)}(\beta^{*}) = 0 = \left(\alpha^{(1)} - \alpha^{2}\right)\gamma^{n-1}\left(\E\left[\eps_u^n\right] - (n-1)\E\left[\eps_u^{n-2}\right]\E\left[\eps_u^2\right] \right)
    \end{equation*}
    however, the right-hand side of the equation can not be zero. This follows from inequalities $\gamma \neq 0$, $\alpha^{(1)}\neq \alpha^{2}$ and $\Phi^{(2)}(\beta^{*})\neq 0$.
    Consequently, this means that if $\Phi^{(1)}(\beta^{*}) = \Phi^{(2)}(\beta^{*})\neq 0$ then we pick wrong $\beta*$ and we should pick another root of the quadratic equation as $\beta$.

    2. $\E\left[\eps_u^n\right] - (n-1)\E\left[\eps_u^{n-2}\right]\E\left[\eps_u^2\right] \neq 0$. Let us assume for a moment that $\beta^* = \beta$. Then, 
    \begin{align*}
        & X^{(i)} = \gamma \eps_u^{(i)} + \eps_y^{(i)} \Longrightarrow \E \left[ \left(X^{(i)}\right)^{n-1} T^{(i)} \right] \overset{(1)}{=} \E\left[ \alpha^{(i)}\eps_u\left(\gamma \eps_u + \eps_y \right)^{n-1}\right]\\
        & \Longrightarrow \frac{\E \left[ \left(X^{(1)}\right)^{n-1} T^{(1)} \right]}{\E \left[ \left(X^{(2)}\right)^{n-1} T^{(2)} \right]} = \frac{\alpha^{(1)}}{\alpha^{(2)}}
    \end{align*}
    and
    \begin{align*}
        & \Phi^{(i)}(\beta^{*}) = \alpha^{(i)}\gamma^{n-1}\left(\E\left[\eps_u^n\right] - (n-1)\E\left[\eps_u^{n-2}\right]\E\left[\eps_u^2\right] \right)\\
        & \Longrightarrow \frac{\Phi^{(1)}(\beta^{*})}{\Phi^{(2)}(\beta^{*})} = \frac{\alpha^{(1)}}{\alpha^{(2)}} = \frac{\E \left[ \left(X^{(1)}\right)^{n-1} T^{(1)} \right]}{\E \left[ \left(X^{(2)}\right)^{n-1} T^{(2)} \right]}.
    \end{align*}
    Moreover,
    \begin{align*}
        & \frac{\E \left[ X^{(1)} T^{(1)} \right]}{\E \left[ X^{(2)} T^{(2)} \right]} = \frac{\alpha^{(1)}}{\alpha^{(2)}}.
    \end{align*}

    Now let us consider the case when $\beta^* \neq \beta$. Then we have,
    \begin{gather*}
        \frac{\E \left[ X^{(1)} T^{(1)} \right]}{\E \left[ X^{(2)} T^{(2)} \right]} = \frac{a^{(1)}\alpha^{(1)}\eps_u^2 + b\eps_t^2}{a^{(2)}\alpha^{(2)}\eps_u^2 + b\eps_t^2}\\
        \frac{\Phi^{(1)}(\beta^{*})}{\Phi^{(2)}(\beta^{*})} = \frac{\alpha^{(1)}(a^{(1)})^{n-1}\left(\E\left[\eps_u^n\right] - (n-1)\E\left[\eps_u^{n-2}\right]\E\left[\eps_u^2\right] \right) + b^{n-1}\left(\E\left[\eps_t^n\right] - (n-1)\E\left[\eps_t^{n-2}\right]\E\left[\eps_t^2\right] \right)}{\alpha^{(2)}(a^{(2)})^{n-1}\left(\E\left[\eps_u^n\right] - (n-1)\E\left[\eps_u^{n-2}\right]\E\left[\eps_u^2\right] \right) + b^{n-1}\left(\E\left[\eps_t^n\right] - (n-1)\E\left[\eps_t^{n-2}\right]\E\left[\eps_t^2\right] \right)}.
    \end{gather*}
    Note that the equality
    \begin{equation}
        \label{eq:case 4: measure 0}
        \frac{\E \left[ X^{(1)} T^{(1)} \right]}{\E \left[ X^{(2)} T^{(2)} \right]} = \frac{\Phi^{(1)}(\beta^{*})}{\Phi^{(2)}(\beta^{*})}
    \end{equation}
    holds for the parameters $\alpha^{(1)}, \alpha^{(2)}, \gamma$ only for the set of Lebesgue measure zero. Indeed, an Eq \ref{eq:case 4: measure 0} is equivalent to
    \begin{equation*}
        \E \left[ X^{(1)} T^{(1)} \right]\Phi^{(2)}(\beta^{*}) - \E \left[ X^{(2)} T^{(2)} \right]\Phi^{(1)}(\beta^{*})=0
    \end{equation*}
    that can be considered as polynomial with respect to the parameter $a^{(1)}$. It is easy to see that the coefficient near the highest degree of $a^{(1)}$ is non-zero because $\E\left[\eps_u^n\right] - (n-1)\E\left[\eps_u^{n-2}\right]\E\left[\eps_u^2\right] \neq 0$, $a^{(1)}=-a^{(2)}$, $\alpha^{(1)}\neq \alpha^{(2)}$ and $\alpha^{(1)} + \alpha^{(2)} \neq 0$ (Eq. \eqref{eq:case 4:a^(1) neq a^(2)}).

    Consequently, by verifying whether the Eq. \eqref{eq:case 4: measure 0} holds we can conclude which one of the roots is the correct one.
\end{proof}

\thmunknownfactor*
\begin{proof}
    To prove this theorem we specify a step-by-step procedure that determines which of the parameters $\alpha, \gamma, \eps_u, \eps_t$ varies across domains $\M^{(1)}, \M^{(2)}$ under the assumption of infinite data.

    \textbf{Step 1.} First we show that, we can verify that $\F\big( \M^{(1)}, \M^{(2)}\big) = \{\gamma\}$. Indeed, in such a case, we can statistically test whether treatment $T$ and outcome $Y$ are different as distributions in these two distributions. Since we assume that $\eps_y^{(1)}, \eps_y^{(2)}$ are equal as distributions ($\eps_y^{(1)} \overset{d}{=} \eps_y^{(2)}$), then under Assumption \ref{ass:distribution via moments} the equalities $T^{(1)}\overset{d}{=} T^{(2)}$ and $Y^{(1)}\overset{d}{\not =} Y^{(2)}$ hold if and only if $\F\big( \M^{(1)}, \M^{(2)}\big) = \{\gamma\}$. In practice, to verify the equalities $T^{(1)}\overset{d}{=} T^{(2)}$ and $Y^{(1)}\overset{d}{\not =} Y^{(2)}$ we may use Kolmogorov-Smirnov test or any other statistical test appropriate for it.

    \textbf{Step 2.} Knowing that $\gamma^{(1)}=\gamma^{(2)}$ we introduce a test that determines whether $\F\big( \M^{(1)}, \M^{(2)}\big) = \{\alpha\}$. Let us consider the following quantities,
    \begin{align}
        & \E\left[ \left( T^{(i)} \right)^2 \right] = \E\left[\left(\alpha^{(i)}\right)^2\left(\eps^{(i)}_u\right)^2 + \left(\eps_t^{(i)}\right)^2 \right],\\
        & \E\left[ T^{(i)} Y^{(i)}\right] = \E\left[\left(\alpha^{(i)}\beta + \gamma \right)\left(\eps^{(i)}_u\right)^2 + \beta\left(\eps_t^{(i)}\right)^2\right],\\
        & \E\left[ \left( Y^{(i)} \right)^2 \right] = \E\left[\left(\alpha^{(i)}\beta + \gamma \right)^2\left(\eps^{(i)}_u\right)^2 + \beta^2\left(\eps_t^{(i)}\right)^2 + \eps_y^2\right].
    \end{align}
    If $\F\big( \M^{(1)}, \M^{(2)}\big) = \{\alpha\}$ then,
    \begin{align*}
        & \E\left[ \left( T^{(1)} \right)^2 - \left( T^{(2)} \right)^2\right] = \E \left[\left(\left(\alpha^{(1)}\right)^2 - \left(\alpha^{(1)}\right)^2\right)\eps_u^2\right],\\
        & \E\left[ T^{(1)} Y^{(1)} - T^{(2)} Y^{(2)}\right] = \E\left[\beta\left(\left(\alpha^{(1)}\right)^2 - \left(\alpha^{(2)}\right)^2 \right)\eps_u^2 + \gamma\left(\alpha^{(1)} - \alpha^{(2)} \right)\eps_u^2\right],\\
        & \E\left[ \left( Y^{(1)} \right)^2 - \left( Y^{(2)} \right)^2\right] = \E\left[\beta^2\left(\left(\alpha^{(1)}\right)^2 - \left(\alpha^{(2)}\right)^2 \right)\eps_u^2 + 2\beta\gamma\left(\alpha^{(1)} - \alpha^{(2)} \right)\eps_u^2\right].
    \end{align*}
    On the other hand, if $\F\big( \M^{(1)}, \M^{(2)}\big) = \{\eps_u\}$
    \begin{align*}
        & \E\left[ \left( T^{(1)} \right)^2 - \left( T^{(2)} \right)^2\right] = \E\left[\alpha\left( \left(\eps^{(1)}_u\right)^2 - \left(\eps^{(2)}_u\right)^2\right)\right],\\
        & \E\left[ T^{(1)} Y^{(1)} - T^{(2)} Y^{(2)}\right] = \E\left[\alpha\left(\alpha\beta + \gamma\right)\left( \left(\eps^{(1)}_u\right)^2 - \left(\eps^{(2)}_u\right)^2\right)\right],\\
        & \E\left[ \left( Y^{(1)} \right)^2 - \left( Y^{(2)} \right)^2\right] = \E\left[\left(\alpha\beta + \gamma\right)^2\left( \left(\eps^{(1)}_u\right)^2 - \left(\eps^{(2)}_u\right)^2\right)\right],
    \end{align*}
    and if $\F\big( \M^{(1)}, \M^{(2)}\big) = \{\eps_t\}$
    \begin{align*}
        & \E\left[ \left( T^{(1)} \right)^2 - \left( T^{(2)} \right)^2\right] = \E\left[ \left(\eps^{(1)}_t\right)^2 - \left(\eps^{(2)}_t\right)^2\right],\\
        & \E\left[ T^{(1)} Y^{(1)} - T^{(2)} Y^{(2)}\right] = \E\left[\beta\left( \left(\eps^{(1)}_t\right)^2 - \left(\eps^{(2)}_t\right)^2\right)\right],\\
        & \E\left[ \left( Y^{(1)} \right)^2 - \left( Y^{(2)} \right)^2\right] = \E\left[\beta^2\left( \left(\eps^{(1)}_t\right)^2 - \left(\eps^{(2)}_t\right)^2\right)\right].
    \end{align*}

    Note that for the $\F\big( \M^{(1)}, \M^{(2)}\big) = \{\alpha\}$ it is easy to see that at least one of the quantities $\E\left[ \left( T^{(1)} \right)^2 - \left( T^{(2)} \right)^2\right]$ or $\E\left[ T^{(1)} Y^{(1)} - T^{(2)} Y^{(2)}\right]$ is not equal to zero. Moreover,
    \begin{equation*}
        \frac{\E\left[ T^{(1)} Y^{(1)} - T^{(2)} Y^{(2)}\right]}{\E\left[ \left( T^{(1)} \right)^2 - \left( T^{(2)} \right)^2\right]} \neq \frac{\E\left[ \left( Y^{(1)} \right)^2 - \left( Y^{(2)} \right)^2\right]}{\E\left[ T^{(1)} Y^{(1)} - T^{(2)} Y^{(2)}\right]}.
    \end{equation*}
    However for the case $\F\big( \M^{(1)}, \M^{(2)}\big) = \{\eps_t\}$ or $\F\big( \M^{(1)}, \M^{(2)}\big) = \{\eps_u\}$ either the following equation holds
    \begin{equation*}
        \E\left[ \left( T^{(1)} \right)^2 - \left( T^{(2)} \right)^2\right] = \E\left[ T^{(1)} Y^{(1)} - T^{(2)} Y^{(2)}\right] = 0,
    \end{equation*}
    or 
    \begin{equation*}
        \frac{\E\left[ T^{(1)} Y^{(1)} - T^{(2)} Y^{(2)}\right]}{\E\left[ \left( T^{(1)} \right)^2 - \left( T^{(2)} \right)^2\right]} = \frac{\E\left[ \left( Y^{(1)} \right)^2 - \left( Y^{(2)} \right)^2\right]}{\E\left[ T^{(1)} Y^{(1)} - T^{(2)} Y^{(2)}\right]}.
    \end{equation*}
    Since both of these equations are impossible for the case $\F\big( \M^{(1)}, \M^{(2)}\big) = \{\alpha\}$, then we can use them for the verification procedure.

    Now, knowing that $\F\big( \M^{(1)}, \M^{(2)}\big) \in \{\eps_u, \eps_t\}$ we will show that it is impossible to identify $\beta$ uniquely. To prove it, it is enough to consider the similar construction of models $\M^{(i)}$ and $\hat{\M}^{(i)}$ presented in the proof of Theorem \ref{th:non-identifiable beta}. Indeed, since we do not know which parameter of the parameters $\eps_u$ or $\eps_t$ may vary across the environments, then both of the models are possible SCMs that concludes the proof.
\end{proof}

\begin{proposition}
    \label{prop:non-id eps_y}
    Suppose $\M^{(1)}, \M^{(2)}$ are linear SCMs compatible with the DAG of Figure \ref{fig: main graph}, such that $\F\big( \M^{(1)}, \M^{(2)}\big)=\{\eps_y\}$. Then treatment causal effect $\beta$ is not identifiable.
\end{proposition}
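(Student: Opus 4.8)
The plan is to reduce the two-environment problem to the single-environment non-identifiability result of \citet{salehkaleybar2020learning}, exploiting the fact that a change confined to $\eps_y$ carries no information about the treatment mechanism. Because $\eps_u$, $\eps_t$, $\alpha$, $\beta$, $\gamma$ are all invariant and only the distribution of $\eps_y$ differs, the pairs $(T^{(1)}, U^{(1)})$ and $(T^{(2)}, U^{(2)})$ share the same law, so the two environments constrain $\beta$ exactly as a single one would. I would therefore exhibit an alternative collection $\{\tilde\M^{(i)}\}_{i=1}^2$ that induces the same observational distribution in each environment, keeps $\F$ equal to $\{\tilde\eps_y\}$, yet has treatment effect $\tilde\beta \neq \beta$.

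For the construction I would reuse the reparametrization from the proof of Theorem \ref{th:non-identifiable beta}: set $\tilde\eps_u^{(i)} = \eps_t^{(i)}$, $\tilde\eps_t^{(i)} = \alpha\eps_u^{(i)}$, $\tilde\eps_y^{(i)} = \eps_y^{(i)}$, together with $\tilde\alpha = 1$, $\tilde\gamma = -\gamma/\alpha$, and $\tilde\beta = \beta + \gamma/\alpha$. Substituting these into the structural equations \eqref{eq:main_SEM} and regrouping, one checks exactly as in Theorem \ref{th:non-identifiable beta} that $\tilde T^{(i)} = T^{(i)}$ and $\tilde Y^{(i)} = Y^{(i)}$ for $i \in \{1,2\}$, so the two families entail identical observational laws environment by environment.

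The step that requires genuine attention, and which distinguishes this proposition from Theorem \ref{th:non-identifiable beta}, is verifying that the reparametrization preserves $|\F| = 1$ with the single varying factor still being the noise of $Y$. The key point is that $\eps_u$ and $\eps_t$ are invariant across the two environments by hypothesis, so their images satisfy $\tilde\eps_u^{(1)} \overset{d}{=} \tilde\eps_u^{(2)}$ and $\tilde\eps_t^{(1)} \overset{d}{=} \tilde\eps_t^{(2)}$; since $\tilde\alpha, \tilde\gamma, \tilde\beta$ are constants, the only factor that changes across $\tilde\M^{(1)}, \tilde\M^{(2)}$ is $\tilde\eps_y = \eps_y$, yielding $\F\big(\tilde\M^{(1)}, \tilde\M^{(2)}\big) = \{\tilde\eps_y\}$ as required.

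Finally, I would conclude by noting that $\tilde\beta = \beta + \gamma/\alpha$ differs from $\beta$ whenever the model is genuinely confounded, i.e. $\alpha \neq 0$ and $\gamma \neq 0$ (when $\gamma = 0$ or $\alpha = 0$ there is no confounding and $\beta$ is trivially recovered by regression, so these degenerate cases are not of interest). Hence two indistinguishable collections of SCMs with the prescribed $\F$ give different treatment effects, so $\beta$ is not identifiable. I expect the main obstacle to be purely bookkeeping: confirming that the noise relabeling leaves $\F$ a singleton containing only $\eps_y$, rather than inadvertently introducing variation in $\tilde\eps_u$ or $\tilde\eps_t$.
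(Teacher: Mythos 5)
Your proposal is correct and follows essentially the same route as the paper's proof: both use the reparametrization $\tilde\eps_u = \eps_t$, $\tilde\eps_t = \alpha\eps_u$, $\tilde\alpha = 1$, $\tilde\gamma = -\gamma/\alpha$, $\tilde\beta = \beta + \gamma/\alpha$ from the counterexample of \citet{salehkaleybar2020learning}, verify that the observational distributions match in each environment, and check that the relabeling keeps $\F$ equal to $\{\tilde\eps_y\}$. Your explicit remark that the degenerate cases $\alpha=0$ or $\gamma=0$ are excluded is a minor point the paper leaves implicit.
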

\begin{proof}
    To prove that $\beta$ is not identifiable, we will construct two new SCMs $\tilde{\M}^{(1)}$ and $\tilde{\M}^{(2)}$,
    \begin{equation}
    \tilde{\M}^{(i)} = 
    \begin{cases}
            & \tilde{U}^{(i)} := \tilde{\epsilon}_u, \\
            & \tilde{T}^{(i)} := \tilde{\alpha} \tilde{U}^{(i)} +\tilde{\epsilon}_{t}, \\
            & \tilde{Y}^{(i)} := \tilde{\beta} \tilde{T}^{(i)} + \tilde{\gamma} \tilde{U}^{(i)} + \tilde{\epsilon}^{(i)}_y.
    \end{cases}  
    \label{eq:non-id:new SCMs app}
    \end{equation}  
    such that $\F\big(\tilde{\M}^{(1)}, \tilde{\M}^{(2)}\big) = \{\tilde{\eps}_y\}$ and they induce the same observational distributions as $\M^{(1)}$ and $\M^{(2)}$, respectively, but the treatment effects are different, i.e $\beta \neq \tilde{\beta}$. 
    To do so, we again utilize the counter- example presented in \citep{salehkaleybar2020learning}. 

    \begin{align*}
        & \tilde{\eps}_u = \eps_t,\; \tilde{\eps}_t = \alpha \eps_u,\; \tilde{\eps}_y^{(i)} = \eps_y^{(i)}, \\
        & \tilde{\alpha} = 1,\; \tilde{\gamma} = -\frac{\gamma}{\alpha},\; \tilde{\beta} = \beta + \frac{\gamma}{\alpha}.
    \end{align*}
    Substituting these values into the set of equations $\ref{eq:non-id:new SCMs app}$, we obtain
    \begin{equation*}
    \tilde{\M}^{(i)} = 
    \begin{cases}
            &\hspace{-.8em}\tilde{U}^{(i)} = \eps_t, \\
            &\hspace{-.8em}\tilde{T}^{(i)} = \eps_t + \alpha \eps_u, \\
            &\hspace{-.8em}\tilde{Y}^{(i)} = (\beta + \frac{\gamma}{\alpha}) (\eps_t + \alpha \eps_u) + -\frac{\gamma}{\alpha} \eps_t + \epsilon^{(i)}_y,
    \end{cases}  
    \end{equation*}
    and after regrouping and simplifications, it is easy to verify that
    \begin{align*}
        & \tilde{T}^{(i)} = \alpha \eps_u + \eps_t = T^{(i)}, \\
        & \tilde{Y}^{(i)} = (\alpha\beta + \gamma) \eps_u + \beta \eps_t + \epsilon^{(i)}_y = Y^{(i)},
    \end{align*}
    and $\F\big(\tilde{\M}^{(1)}, \tilde{\M}^{(2)}\big) = \{\tilde{\eps}_y\}$. This concludes the proof. 
\end{proof}

\newpage
\section{Omitted Pseudo-code}
We present the pseudo-code pertaining to the estimation procedure of $\beta$ when $\gamma$ changes across domains, which was omitted from the main text due to space limitations.
\begin{algorithm}[ht]
    \caption{$\F\big( \M^{(1)}, \M^{(2)}\big)=\{\gamma\}$}
    \label{alg:case 3}\textbf{Input:} $\{T^{(i)}, Y^{(i)}\}$ and $\F\big( \M^{(1)}, \M^{(2)}\big) = \{\gamma\}$
    
    \begin{algorithmic}[1]
        \STATE $r\gets \dfrac{\E\left[\left(Y^{(2)}\right)^2 - \left(Y^{(1)}\right)^2\right]}{\E\left[Y^{(2)}T^{(2)} - Y^{(1)}T^{(1)}\right]}\quad\quad$ \COMMENT{$r=2\beta + \gamma^{(1)}+\gamma^{(2)}$}
        \STATE $X^{(i)}\gets rT^{(i)} - 2Y^{(i)}$
        \STATE $\tilde{a}\gets \frac{1}{2}\big(\E\big[T^{(1)}X^{(1)}\big]-\E\big[T^{(2)}X^{(2)}\big]\big)$, $\tilde{b}\gets \frac{1}{2}\big(\E\big[T^{(1)}X^{(1)}\big]+\E\big[T^{(2)}X^{(2)}\big]\big)$
        \STATE $n^*\gets 2$
        \WHILE{$\phi_n^{(1)}=0$ and $\phi_n^{(2)}=0$}
            \STATE $n^*\gets n^*+1$
        \ENDWHILE
        \IF{$\phi_n^{(1)}-\phi_n^{(2)}\neq0$}
            \IF{$n^*$ is odd}
                \STATE $j\gets 3$, $l\gets 2$
            \ELSE
                \STATE $j\gets 1$, $l\gets (n^*-1)$
            \ENDIF
            \STATE $a \gets sign(\tilde{b}) \left\vert\frac{\psi_{ j}^{(1)} - \psi_{j}^{(2)}}{\phi_{n^*}^{(1)} - \phi_{n^*}^{(2)}}\right\vert^{1/l}\quad\quad$ \COMMENT{$a=\gamma^{(2)}-\gamma^{(1)}$}
            \STATE $\tilde{r}\gets \frac{1}{2}(r-a)\quad\quad$ \COMMENT{$\tilde{r}=\beta+\gamma^{(1)}$} 
            \STATE $\beta \gets \tilde{r} - \textit{GetRatio}\big(\tilde{r}T^{(1)}-Y^{(1)}, T^{(1)}\big)$
        \ELSE
            \IF{$n^*$ is odd}
                \STATE $j\gets 2$, $l\gets 1$
            \ELSE
                \STATE $j\gets 1$, $l\gets (n^*-1)$
            \ENDIF
            \STATE $b \gets sign(\tilde{b}) \left\vert\frac{\psi_{ j}^{(1)} + \psi_{j}^{(2)}}{\phi_{n^*}^{(1)} + \phi_{n^*}^{(2)}}\right\vert^{1/l}\quad\quad$ \COMMENT{$b=\gamma^{(1)}+\gamma^{(2)}$}
            \STATE $\beta\gets \frac{1}{2}(r-b)$
        \ENDIF
        \RETURN{$\beta$}
    \end{algorithmic}
\end{algorithm}
\newpage
    

\section{Complementary Experimental Results}\label{apx:experiment}
\begin{figure*}[h]
    \centering
    \begin{subfigure}[b]{0.485\textwidth}
        \centering
        \includegraphics{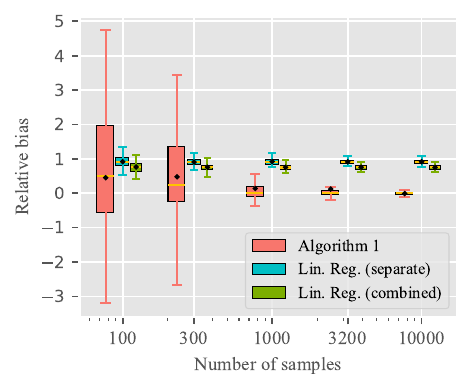}
        \caption{$\F\big( \M^{(1)}, \M^{(2)}\big) = \{\eps_t\}$}
        \label{fig:gamma_eps_t}
    \end{subfigure}\hfill
    \begin{subfigure}[b]{0.485\textwidth}
        \centering
        \includegraphics{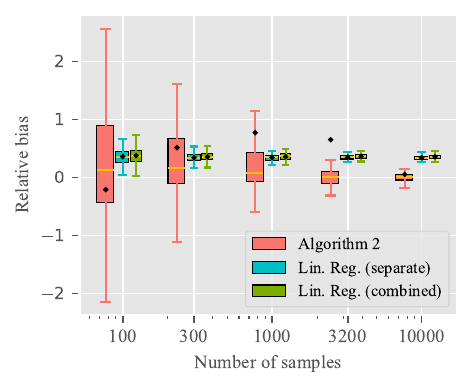}
        \caption{$\F\big( \M^{(1)}, \M^{(2)}\big) = \{\eps_u\}$}
        \label{fig:gamma_eps_u}
    \end{subfigure}
    \begin{subfigure}[b]{0.485\textwidth}
        \centering
        \includegraphics{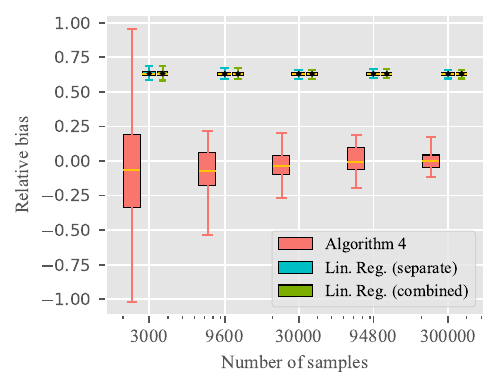}
        \caption{$\F\big( \M^{(1)}, \M^{(2)}\big) = \{\gamma\}$}
        \label{fig:gamma_gamma}
    \end{subfigure}\hfill
    \begin{subfigure}[b]{0.485\textwidth}
        \centering
        \includegraphics{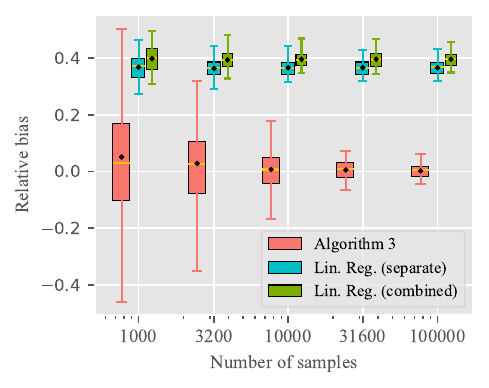}
        \caption{$\F\big( \M^{(1)}, \M^{(2)}\big) = \{\alpha\}$}
        \label{fig:gamma_alpha}
    \end{subfigure}
    \caption{Relative estimation bias given data from two domains, when only $\epsilon_t$ (\ref{fig:gamma_eps_t}), only $\epsilon_t$ (\ref{fig:gamma_eps_u}), only $\gamma$ (\ref{fig:gamma_gamma}), and only $\alpha$ (\ref{fig:gamma_alpha}) varies across domains.
    Noise variables are sampled from a Gamma distribution.}
\end{figure*}

\begin{figure*}[t]
    \centering
    \begin{subfigure}[b]{0.485\textwidth}
        \centering
        \includegraphics{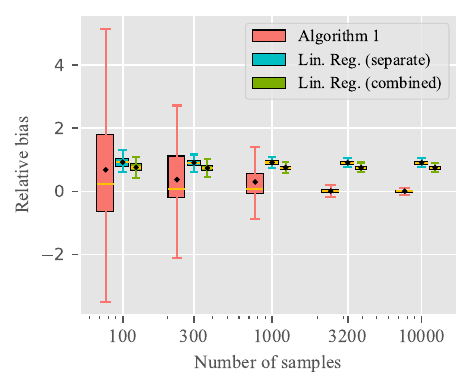}
        \caption{$\F\big( \M^{(1)}, \M^{(2)}\big) = \{\eps_t\}$}
        \label{fig:eps_t_gumbel}
    \end{subfigure}\hfill
    \begin{subfigure}[b]{0.485\textwidth}
        \centering
        \includegraphics{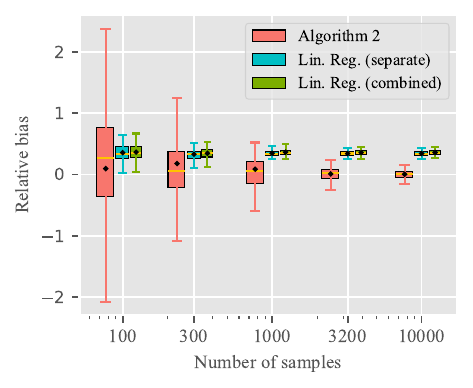}
        \caption{$\F\big( \M^{(1)}, \M^{(2)}\big) = \{\eps_u\}$}
        \label{fig:eps_u_gumbel}
    \end{subfigure}
    \begin{subfigure}[b]{0.485\textwidth}
        \centering
        \includegraphics{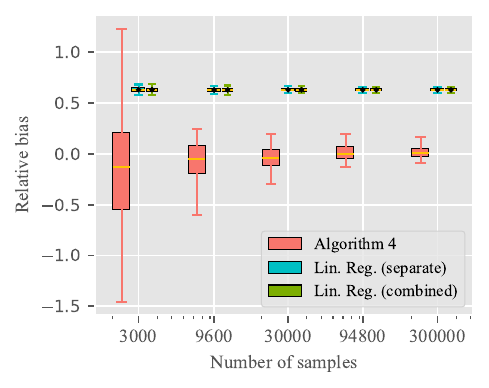}
        \caption{$\F\big( \M^{(1)}, \M^{(2)}\big) = \{\gamma\}$}
        \label{fig:gamma_gumbel}
    \end{subfigure}\hfill
    \begin{subfigure}[b]{0.485\textwidth}
        \centering
        \includegraphics{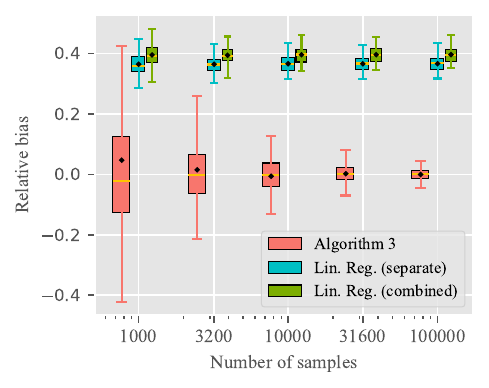}
        \caption{$\F\big( \M^{(1)}, \M^{(2)}\big) = \{\alpha\}$}
        \label{fig:alpha_gumbel}
    \end{subfigure}
    \caption{Relative estimation bias given data from two domains, when only $\epsilon_t$ (\ref{fig:eps_t_gumbel}), only $\epsilon_t$ (\ref{fig:eps_u_gumbel}), only $\gamma$ (\ref{fig:gamma_gumbel}), and only $\alpha$ (\ref{fig:alpha_gumbel}) varies across domains.
    Noise variables are sampled from a Gumbel distribution.}
\end{figure*}

\begin{figure*}[t]
    \centering
    \begin{subfigure}[b]{0.485\textwidth}
        \centering
        \includegraphics{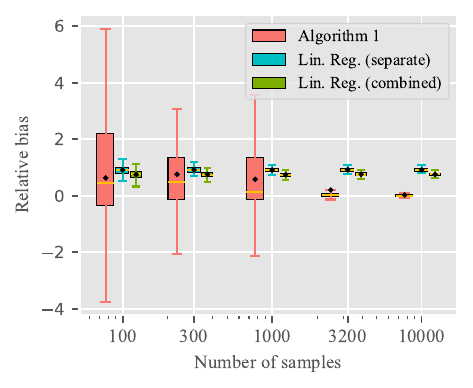}
        \caption{$\F\big( \M^{(1)}, \M^{(2)}\big) = \{\eps_t\}$}
        \label{fig:eps_t_logistic}
    \end{subfigure}\hfill
    \begin{subfigure}[b]{0.485\textwidth}
        \centering
        \includegraphics{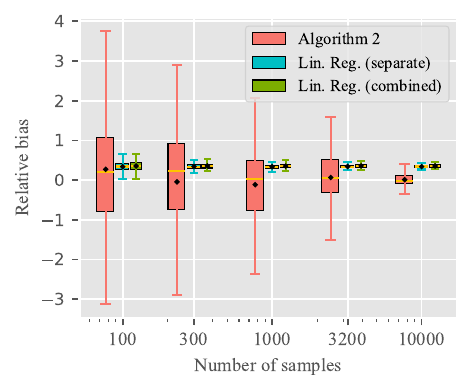}
        \caption{$\F\big( \M^{(1)}, \M^{(2)}\big) = \{\eps_u\}$}
        \label{fig:eps_u_logistic}
    \end{subfigure}
    \begin{subfigure}[b]{0.485\textwidth}
        \centering
        \includegraphics{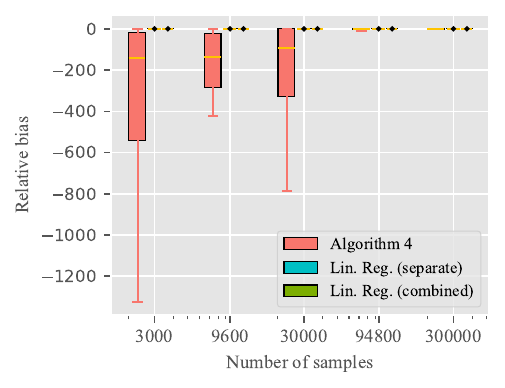}
        \caption{$\F\big( \M^{(1)}, \M^{(2)}\big) = \{\gamma\}$}
        \label{fig:gamma_logistic}
    \end{subfigure}\hfill
    \begin{subfigure}[b]{0.485\textwidth}
        \centering
        \includegraphics{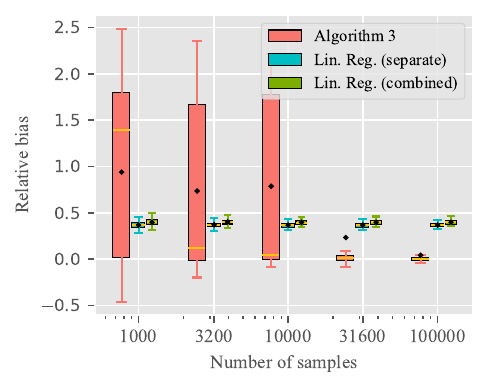}
        \caption{$\F\big( \M^{(1)}, \M^{(2)}\big) = \{\alpha\}$}
        \label{fig:alpha_logistic}
    \end{subfigure}
    \caption{Relative estimation bias given data from two domains, when only $\epsilon_t$ (\ref{fig:eps_t_logistic}), only $\epsilon_t$ (\ref{fig:eps_u_logistic}), only $\gamma$ (\ref{fig:gamma_logistic}), and only $\alpha$ (\ref{fig:alpha_logistic}) varies across domains.
    Noise variables are sampled from a Logistic distribution.}
\end{figure*}

\begin{figure*}[t]
    \centering
    \begin{subfigure}[b]{0.485\textwidth}
        \centering
        \includegraphics{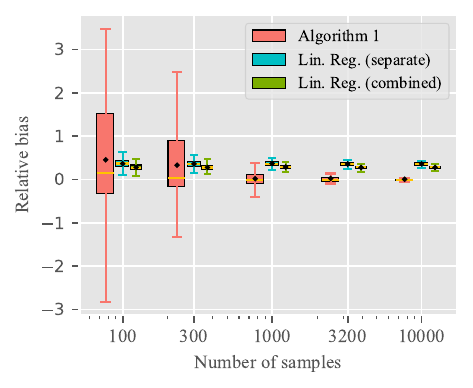}
        \caption{$\F\big( \M^{(1)}, \M^{(2)}\big) = \{\eps_t\}$}
        \label{fig:eps_t_exponential_logistic}
    \end{subfigure}\hfill
    \begin{subfigure}[b]{0.485\textwidth}
        \centering
        \includegraphics{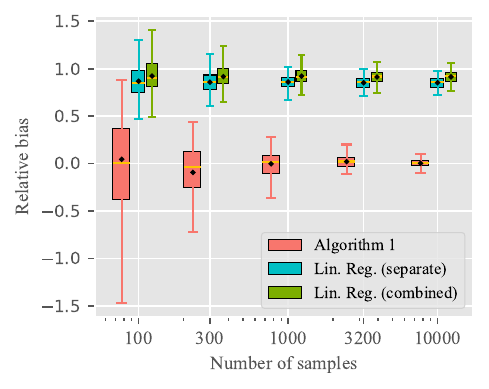}
        \caption{$\F\big( \M^{(1)}, \M^{(2)}\big) = \{\eps_u\}$}
        \label{fig:eps_u_exponential_logistic}
    \end{subfigure}
    \caption{Relative estimation bias given data from two domains, when only $\epsilon_t$ (\ref{fig:eps_t_exponential_logistic}), and only $\epsilon_t$ (\ref{fig:eps_u_exponential_logistic}) varies across domains.
    All noise variables are sampled from an exponential distribution, except the alternating noise variable in the second domain which is sampled from a logistic distribution.}
\end{figure*}

\begin{figure*}[t]
    \centering
    \begin{subfigure}[b]{0.485\textwidth}
        \centering
        \includegraphics{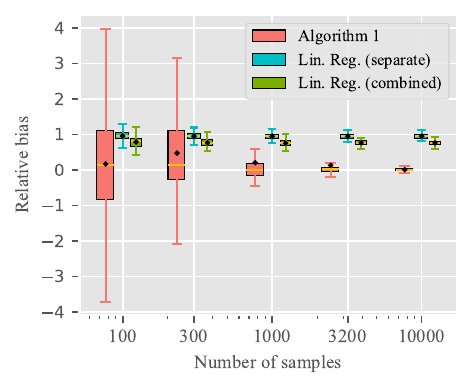}
        \caption{$\F\big( \M^{(1)}, \M^{(2)}\big) = \{\eps_t\}$}
        \label{fig:eps_t_gamma_uniform}
    \end{subfigure}\hfill
    \begin{subfigure}[b]{0.485\textwidth}
        \centering
        \includegraphics{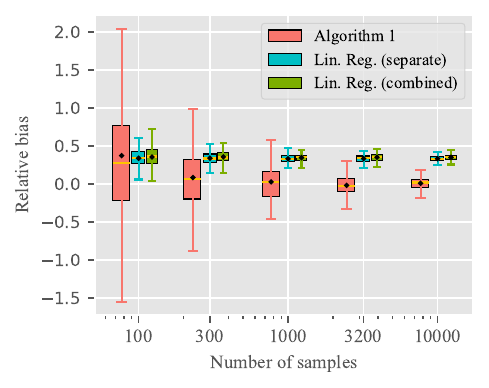}
        \caption{$\F\big( \M^{(1)}, \M^{(2)}\big) = \{\eps_u\}$}
        \label{fig:eps_u_gamma_uniform}
    \end{subfigure}
    \caption{Relative estimation bias given data from two domains, when only $\epsilon_t$ (\ref{fig:eps_t_gamma_uniform}), and only $\epsilon_t$ (\ref{fig:eps_u_gamma_uniform}) varies across domains.
    All noise variables are sampled from a Gamma distribution, except the alternating noise variable in the second domain which is sampled from a uniform distribution.}
\end{figure*}

\begin{figure*}[t]
    \centering
    \begin{subfigure}[b]{0.485\textwidth}
        \centering
        \includegraphics{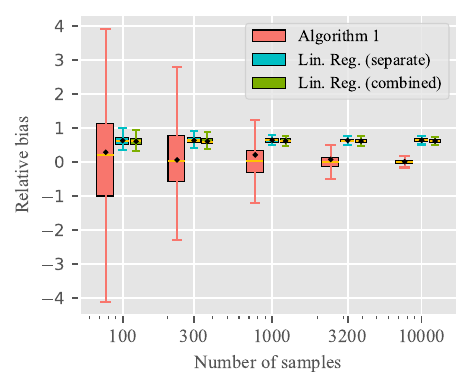}
        \caption{$\F\big( \M^{(1)}, \M^{(2)}\big) = \{\eps_t\}$}
        \label{fig:eps_t_gumbel_exponential}
    \end{subfigure}\hfill
    \begin{subfigure}[b]{0.485\textwidth}
        \centering
        \includegraphics{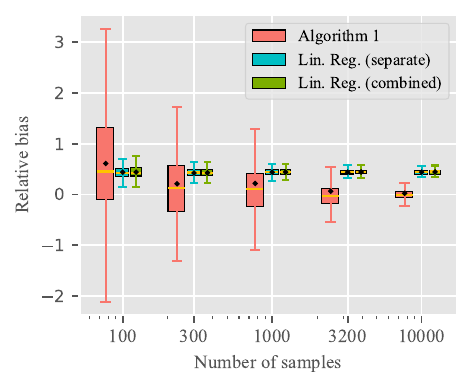}
        \caption{$\F\big( \M^{(1)}, \M^{(2)}\big) = \{\eps_u\}$}
        \label{fig:eps_u_gumbel_exponential}
    \end{subfigure}
    \caption{Relative estimation bias given data from two domains, when only $\epsilon_t$ (\ref{fig:eps_t_gumbel_exponential}), and only $\epsilon_t$ (\ref{fig:eps_u_gumbel_exponential}) varies across domains.
    All noise variables are sampled from a Gumbel distribution, except the alternating noise variable in the second domain which is sampled from an exponential distribution.}
\end{figure*}

\begin{figure*}[t]
    \centering
    \begin{subfigure}[b]{0.485\textwidth}
        \centering
        \includegraphics{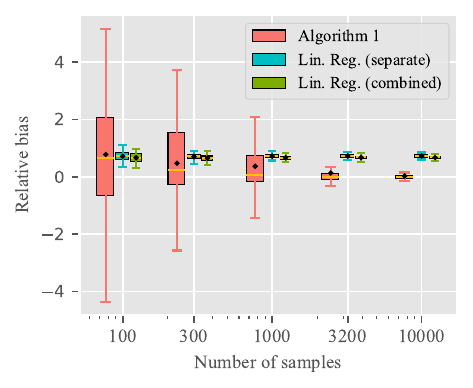}
        \caption{$\F\big( \M^{(1)}, \M^{(2)}\big) = \{\eps_t\}$}
        \label{fig:eps_t_logistic_gamma}
    \end{subfigure}\hfill
    \begin{subfigure}[b]{0.485\textwidth}
        \centering
        \includegraphics{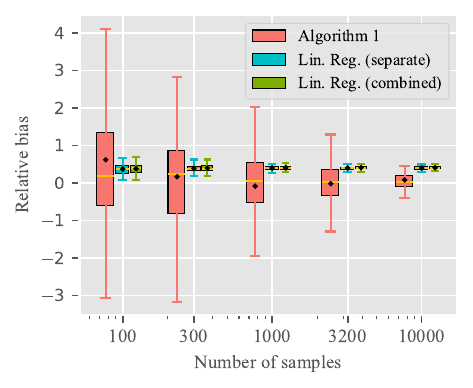}
        \caption{$\F\big( \M^{(1)}, \M^{(2)}\big) = \{\eps_u\}$}
        \label{fig:eps_u_logistic_gamma}
    \end{subfigure}
    \caption{Relative estimation bias given data from two domains, when only $\epsilon_t$ (\ref{fig:eps_t_logistic_gamma}), and only $\epsilon_t$ (\ref{fig:eps_u_logistic_gamma}) varies across domains.
    All noise variables are sampled from a logistic distribution, except the alternating noise variable in the second domain which is sampled from a Gamma distribution.}
\end{figure*}

\begin{figure*}
    \centering
    \begin{subfigure}[b]{0.485\textwidth}
        \centering
        \includegraphics{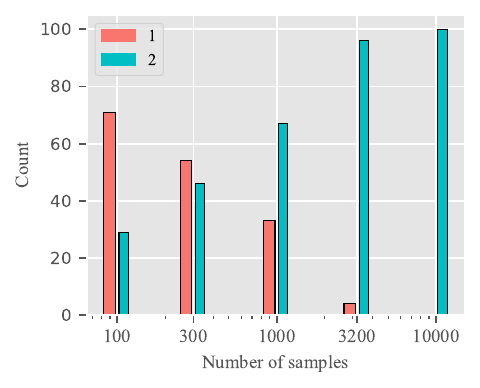}
        \caption{Exponential distribution}
    \end{subfigure}\hfill
    \begin{subfigure}[b]{0.485\textwidth}
        \centering
        \includegraphics{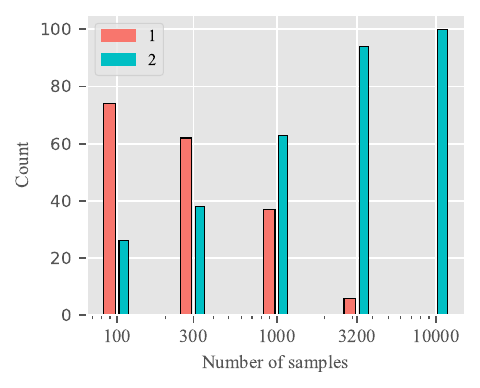}
        \caption{Gamma distribution}
    \end{subfigure}
    \begin{subfigure}[b]{0.485\textwidth}
        \centering
        \includegraphics{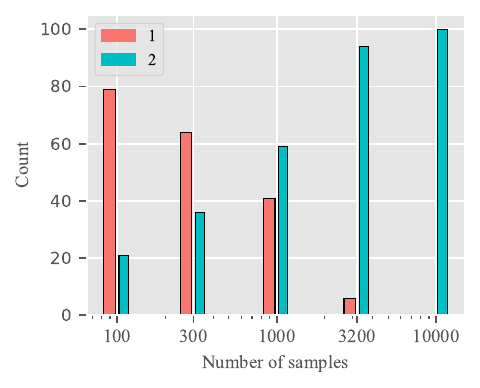}
        \caption{Gumbel distribution}
    \end{subfigure}\hfill
    \begin{subfigure}[b]{0.485\textwidth}
        \centering
        \includegraphics{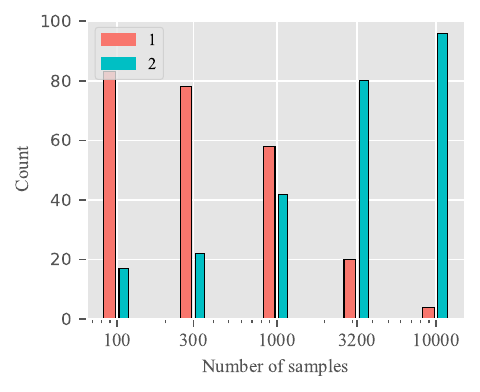}
        \caption{Logistic distribution}
    \end{subfigure}
    \caption{Histogram of $k$ in Algorithm 1.}
    \label{fig:hist_k_alg1}
\end{figure*}

\begin{figure*}
    \centering
    \begin{subfigure}[b]{0.485\textwidth}
        \centering
        \includegraphics{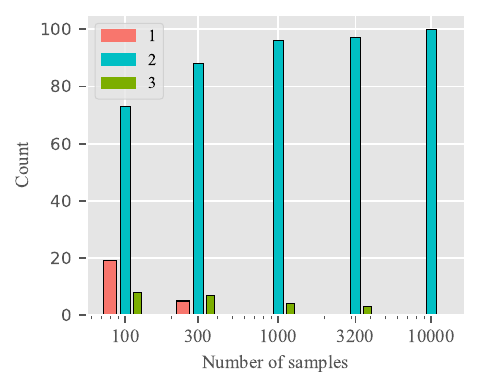}
        \caption{Exponential distribution}
    \end{subfigure}\hfill
    \begin{subfigure}[b]{0.485\textwidth}
        \centering
        \includegraphics{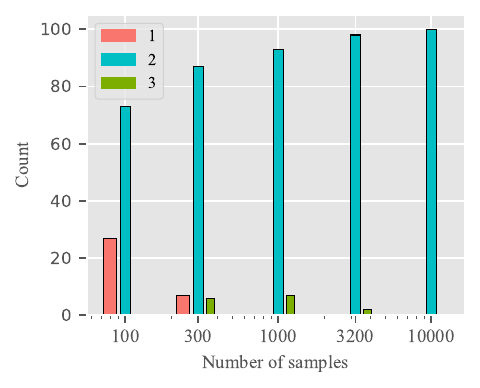}
        \caption{Gamma distribution}
    \end{subfigure}
    \begin{subfigure}[b]{0.485\textwidth}
        \centering
        \includegraphics{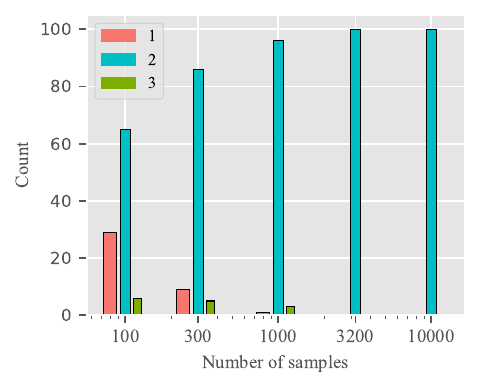}
        \caption{Gumbel distribution}
    \end{subfigure}\hfill
    \begin{subfigure}[b]{0.485\textwidth}
        \centering
        \includegraphics{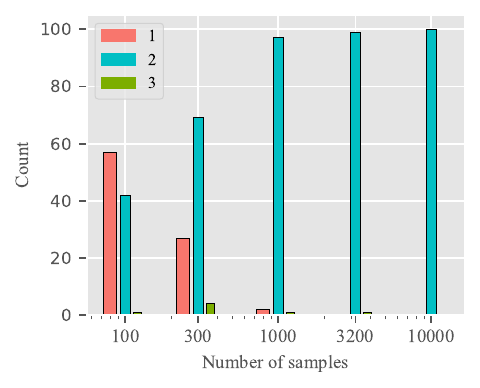}
        \caption{Logistic distribution}
    \end{subfigure}
    \caption{Histogram of $k$ in Algorithm 2.}
\end{figure*}

\begin{figure*}
    \centering
    \begin{subfigure}[b]{0.485\textwidth}
        \centering
        \includegraphics{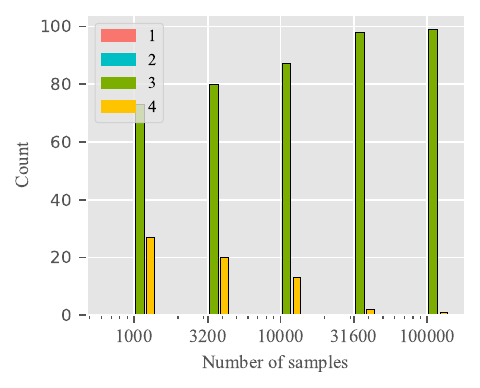}
        \caption{Exponential distribution}
    \end{subfigure}\hfill
    \begin{subfigure}[b]{0.485\textwidth}
        \centering
        \includegraphics{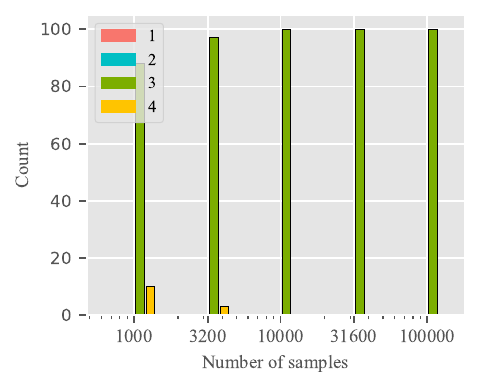}
        \caption{Gamma distribution}
    \end{subfigure}
    \begin{subfigure}[b]{0.485\textwidth}
        \centering
        \includegraphics{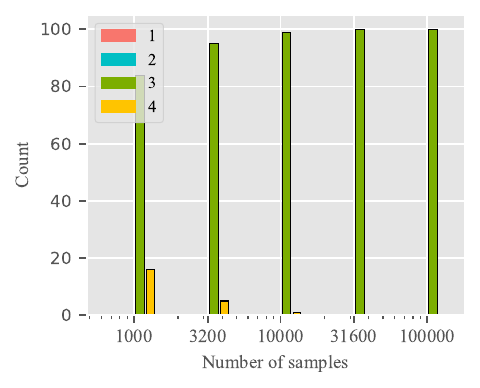}
        \caption{Gumbel distribution}
    \end{subfigure}\hfill
    \begin{subfigure}[b]{0.485\textwidth}
        \centering
        \includegraphics{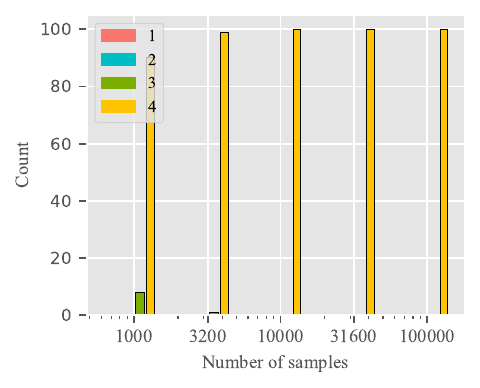}
        \caption{Logistic distribution}
    \end{subfigure}
    \caption{Histogram of $n_1$ in Algorithm 3.}
\end{figure*}

\begin{figure*}
    \centering
    \begin{subfigure}[b]{0.485\textwidth}
        \centering
        \includegraphics{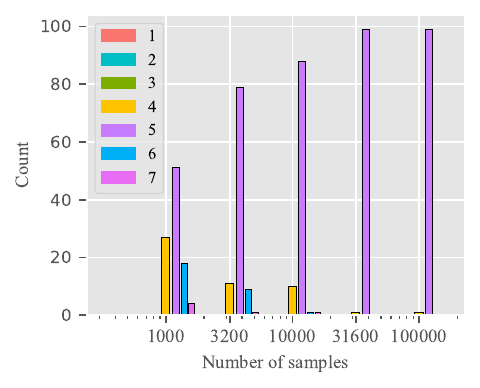}
        \caption{Exponential distribution}
    \end{subfigure}\hfill
    \begin{subfigure}[b]{0.485\textwidth}
        \centering
        \includegraphics{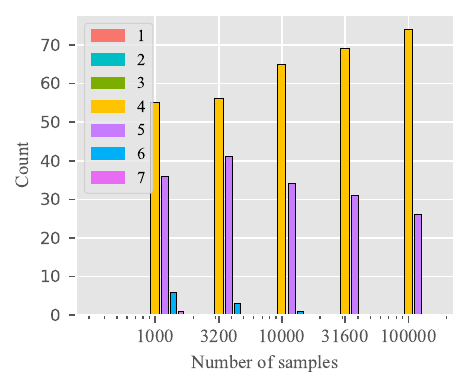}
        \caption{Gamma distribution}
    \end{subfigure}
    \begin{subfigure}[b]{0.485\textwidth}
        \centering
        \includegraphics{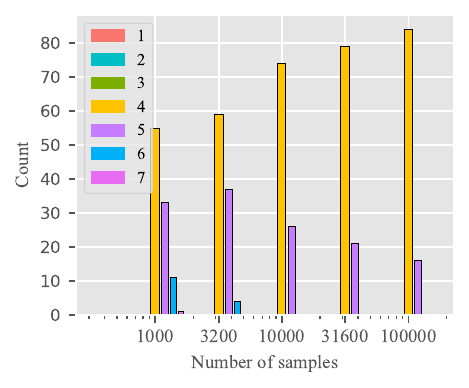}
        \caption{Gumbel distribution}
    \end{subfigure}\hfill
    \begin{subfigure}[b]{0.485\textwidth}
        \centering
        \includegraphics{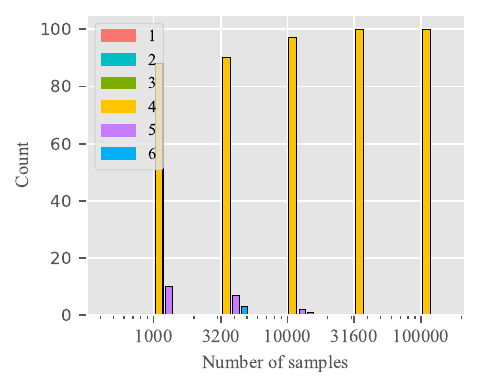}
        \caption{Logistic distribution}
    \end{subfigure}
    \caption{Histogram of $n_2$ in Algorithm 3.}
\end{figure*}

\begin{figure*}
    \centering
    \begin{subfigure}[b]{0.485\textwidth}
        \centering
        \includegraphics{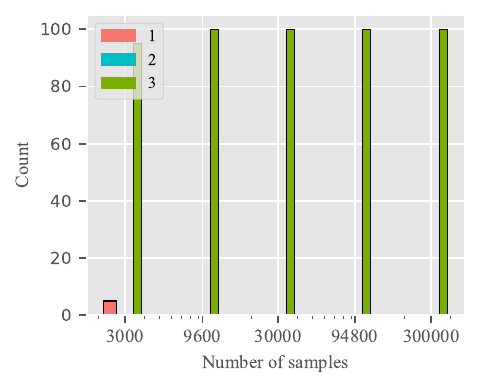}
        \caption{Exponential distribution}
    \end{subfigure}\hfill
    \begin{subfigure}[b]{0.485\textwidth}
        \centering
        \includegraphics{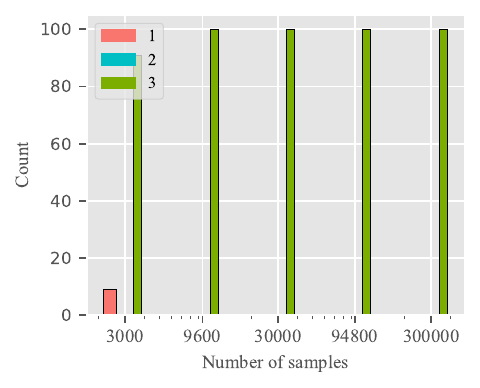}
        \caption{Gamma distribution}
    \end{subfigure}
    \begin{subfigure}[b]{0.485\textwidth}
        \centering
        \includegraphics{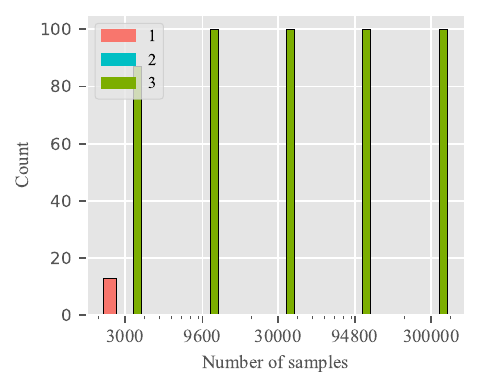}
        \caption{Gumbel distribution}
    \end{subfigure}\hfill
    \begin{subfigure}[b]{0.485\textwidth}
        \centering
        \includegraphics{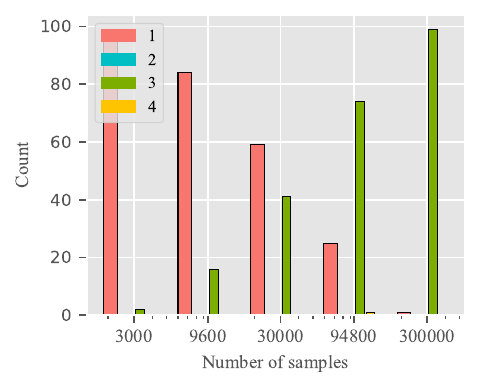}
        \caption{Logistic distribution}
    \end{subfigure}
    \caption{Histogram of $n$ in Algorithm 4.}
    \label{fig:hist_n_alg4}
\end{figure*}
\end{document}